\newcommand{\appropto}{\mathrel{\vcenter{
  \offinterlineskip\halign{\hfil$##$\cr
    \propto\cr\noalign{\kern2pt}\sim\cr\noalign{\kern-2pt}}}}}
\newtheorem{thm}{Theorem}
\theoremstyle{definition}
\newtheorem{defn}{Definition}
\theoremstyle{remark}
\journal{Information fusion}
\begin{document}

\def \btheta {\boldsymbol{\theta}}
\def \thetab {\boldsymbol{\theta}}
\def \data {\mathcal{D}}
\def \wt {\widetilde}
\def \bmu {\boldsymbol{\mu}}

\begin{frontmatter}

%% Title, authors and addresses

%% use the tnoteref command within \title for footnotes;
%% use the tnotetext command for theassociated footnote;
%% use the fnref command within \author or \address for footnotes;
%% use the fntext command for theassociated footnote;
%% use the corref command within \author for corresponding author footnotes;
%% use the cortext command for theassociated footnote;
%% use the ead command for the email address,
%% and the form \ead[url] for the home page:
%% \title{Title\tnoteref{label1}}
%% \tnotetext[label1]{}
%% \author{Name\corref{cor1}\fnref{label2}}
%% \ead{email address}
%% \ead[url]{home page}
%% \fntext[label2]{}
%% \cortext[cor1]{}
%% \affiliation{organization={},
%%             addressline={},
%%             city={},
%%             postcode={},
%%             state={},
%%             country={}}
%% \fntext[label3]{}

\title{Bayesian data fusion with shared priors}

%% use optional labels to link authors explicitly to addresses:
%% \author[label1,label2]{}
%% \affiliation[label1]{organization={},
%%             addressline={},
%%             city={},
%%             postcode={},
%%             state={},
%%             country={}}
%%
%% \affiliation[label2]{organization={},
%%             addressline={},
%%             city={},
%%             postcode={},
%%             state={},
%%             country={}}

\author[inst1]{Peng Wu}

\affiliation[inst1]{organization={Electrical and Computer Engineering Department},%Department and Organization
            addressline={Northeastern University}, 
            city={Boston},
            postcode={02115}, 
            state={MA},
            country={US}}

\author[inst1]{Tales~Imbiriba}
\author[inst2]{V\'ictor Elvira}
\author[inst1]{Pau~Closas}

\affiliation[inst2]{organization={School of Mathematics},%Department and Organization
            addressline={University of Edinburgh}, 
            % city={City Two},
            % postcode={22222}, 
            % state={State Two},
            country={UK}}

% \thanks{This work has been partially supported by the NSF under Award ECCS-1845833.}

\begin{abstract}
%% Text of abstract
The integration of data and knowledge from several sources is known as data fusion. When data is available in a distributed fashion or when different sensors are used to infer a quantity of interest, data fusion becomes essential. In Bayesian settings, a priori information of the unknown quantities is available and, possibly, shared among the distributed estimators. When the local estimates are fused, such prior might be overused unless it is accounted for. This paper explores the effects of shared priors in Bayesian data fusion contexts, providing fusion rules and analysis to understand the performance of such fusion as a function of the number of collaborative agents and the uncertainty of the priors. Analytical results  are corroborated through experiments in a variety of estimation and classification problems.
\end{abstract}

%%Graphical abstract
% \begin{graphicalabstract}
% % \includegraphics{grabs}
% \end{graphicalabstract}

%Research highlights
% \begin{highlights}
% \item Theoretical analysis of distributed Bayesian data fusion, where priors are shared.
% \item Analytical results are provided for systems whose posterior is Gaussian.
% \item Experimental validation done in several relevant inference problems and connections to federated learning.
% \end{highlights}

\begin{keyword}
%% keywords here, in the form: keyword \sep keyword
Data fusion \sep Bayesian inference \sep
%% PACS codes here, in the form: \PACS code \sep code
Distributed Machine Learning \sep Federated learning
%% MSC codes here, in the form: \MSC code \sep code
%% or \MSC[2008] code \sep code (2000 is the default)
\end{keyword}

\end{frontmatter}

%% \linenumbers

%% main text
\section{Introduction and related works}

% \IEEEPARstart{D}{istributed}
Distributed data fusion (DDF) is a process in which a group of agents senses their immediate environment, communicates with other agents, and aims at inferring knowledge about a specific process collectively.
Some applications include cooperative robots mapping a room \cite{dardari2015indoor}, multisensor data fusion in internet of things\cite{ding2019survey},  navigation systems \cite{dunik2020state}, medical diagnosis, or pattern recognition to name a few\cite{hall2004mathematical}. %\pau{do we have references for some/all of those applications?}

% Data Fusion intro:\cite{meng2020survey,hall1997introduction, 7515322,khaleghi2013multisensor, bakr2017distributed,ding2019survey}\pau{are the refs placed in the text? can this sentence be removed?}

There are several data fusion architectures which have been proposed. Arguably, the joint
directors of laboratories (JDL) Data Fusion Group is the most widely-used taxonomy method for data fusion-related functions \cite{meng2020survey, White1991DataFL}. It defines data fusion as a ``multilevel, multifaceted process handling the automatic detection, association, correlation, estimation, and combination of data and information from several sources.'' Two recent fusion frameworks were proposed in \cite{kokar2004formalizing,castanedo2013review}, both based on category theory and are claimed to be sufficiently general to capture all kinds of fusion techniques \cite{hall1997introduction}, including raw data fusion, feature fusion, and decision fusion.%\pau{maybe a good place to state where our contribution classifies? is it `raw'??}
%
% In the case of raw data fusion level, agent data is directly combined in situations where the local sensor data are commensurate. Alternatively, feature-level fusion involves the extraction of representative features from local sensor data, which requires a reduced information exchanged as compared to raw data fusion schemes. This is quite popular in pattern recognition, an example of feature extraction is the use of characteristics of human's face images \cite{pong2014multi}. In decision level fusion \cite{hall1997introduction}, the information is fused after each agent makes a preliminary local determination. This level of fusion methods encompasses classical inference, weighted decision methods (voting techniques), Bayesian inference and other inference methodologies. \pau{seems you are describing the fusion option from JDL, but nothing is said about the fourth one: 'fusion of relational information'} 
% In this paper, we also focus on the decision level fusion. 

Distributed data fusion research can be categorized into Bayesian or consensus-based approaches \cite{7515322}. Bayesian methods focus on preserving the full distribution of the unknowns given the data, called posterior, over the estimated process at each agent, so that sensor data can be easily and recursively merged with prior knowledge and does not need to be stored. On the other hand, consensus algorithms are designed in such a way that agents can continue to exchange information until they agree on certain parameters or quantities of interest. 
In this context, many data fusion methods focus on Bayesian methodologies, such as naive Bayesian fusion, federated Kalman filtering \cite{hashemipour1988decentralized,mahmoud2013distributed}, and other different fusion methods that may leverage data-driven models  \cite{tresp2000bayesian,bailey2012conservative, deisenroth2015distributed}. Recently, there has been increasing interest in the consensus area, particularly in the machine learning community, where a plethora of distributed learning or federated learning methods have been proposed \cite{mcmahan2021advances,Ji2021EmergingTI,wu2021personalized}.
In this paper, we focus on a Bayesian perspective to the data fusion problem. 

% Robust fusion:\cite{li2019second,bailey2012conservative,chang2010analytical}
% \pau{comment out?}

There are many data fusion challenges \cite{meng2020survey,khaleghi2013multisensor}, being the data correlation problem one of the most prominent cases. In general, the performance of DDF solutions cannot outperform a centralized scheme, which is typically considered as a benchmark. Although on the other side, DDF is intrinsically more adaptive and resilient to failures. When dealing with distributed agents, some of the challenges involve both dealing with observations impacted by the same process noise \cite{bar1981track} and also  non-independence of local estimates due to double counting of data \cite{bakr2017distributed,chang2010analytical,bailey2012conservative,7515322}, which essentially means that in a distributed architecture local agent estimates may be correlated. To maintain optimality and consistency, a distributed fusion method should account for such cross-correlation issues. Double counting occurs when data is utilized numerous times without the user's knowledge. This might be due to recirculation of data through cyclic channels or the same data traveling through several paths from another agent to the fusion node. 
To avoid data double counting, two popular solutions are typically adopted \cite{li2019second}: arithmetic average (AA) and geometric average (GA). These two methods can be employed by most fusion methods that \textit{average} the parameters of interest across many agents. Sometimes referred to as Chernoff fusion \cite{chang2010analytical}, also known as covariance intersection \cite{julier2017general,bakr2017distributed} under Gaussian assumptions. Related to the latter are the works on \textit{mixture of experts} \cite{nguyen2016universal}, robust \textit{product of experts} \cite{deisenroth2015distributed}, and the weighted fusion of Kalman filters \cite{xing2016multisensor,alimadadi2020object}, as well as the linear fusion of partial estimators \cite{victor201803}.

% Kalman fusion: \cite{hashemipour1988decentralized,mahmoud2013distributed,xing2016multisensor,julier2017general}

% Distributed estimators(ML,FL,SL): \cite{nguyen2016universal,lalitha2019peer,liu2021bayesian,Ji2021EmergingTI,niknam2020federated,chen2020fedbe,wu2021personalized,pham2021fusion}
% \pau{comment out?}

DDF is a field that connects to other disciplines, for that reason related works can be found also under the umbrella of model fusion, estimator fusion, distributed estimation, and distributed learning. %They mostly focus on the difference of model using and the model parameters fusion. 
Particularly, in the area of machine learning, the interest of distributed learning is growing rapidly under the so-called federated learning (FL) paradigm. Most of the research in FL focuses on frequentist approaches to inference, where data-driven neural network (NN) parameters are aggregated \cite{mcmahan2017communication,wu2021personalized,pham2021fusion} based on the AA method. Besides the fusion methodology, privacy and communication efficiency are also a key part of FL research \cite{mcmahan2021advances}. 
Recently, some works explored Bayesian FL schemes \cite{achituve2021personalized, lalitha2019peer,chen2020fedbe, liu2021bayesian,Junha2022} where the model fusion problem is addressed from a Bayesian perspective, an approach that is thoroughly reviewed in \cite{Koliander22}. Also within the Bayesian approach, Bayesian committee machine (BCM) \cite{tresp2000bayesian,deisenroth2015distributed,liu2018generalized,liu2020gaussian} is another framework that relates to DDF, in this case focused on Gaussian process models.

% BCM: \cite{tresp2000bayesian,deisenroth2015distributed,liu2018generalized,liu2020gaussian} \pau{comment out?}

In this paper, we investigate the use of \textit{a priori} data in distributed fusion problems. Particularly, we are interested in schemes where a parameter $\boldsymbol{\theta}$ is inferred locally by $M$ agents in a Bayesian setting. In this scheme, $\btheta$ takes values in an arbitrary $d$-dimensional space $\Theta$. The quantity of interest is therefore treated as a random variable for which an a priori distribution is available, $p(\boldsymbol{\theta})$. This prior is shared by all participating agents and updated using local data $\mathcal{D}_m$ to produce the a posteriori distribution of the unknown given the local information, $p(\boldsymbol{\theta} | \mathcal{D}_m)$. In this context, this work analyzes various Bayesian fusion methodologies that employ the $M$ local posteriors and, in particular, connects those to the optimal fusion rule whereby the full posterior would be computed at a central node collecting all data, $p(\boldsymbol{\theta} | \mathcal{D}_1,\mathcal{D}_2,\dots,\mathcal{D}_M)$. One of the main challenges of Bayesian fusion rules is the multiple use of prior data, which is generally not properly dealt with. We discuss corrections to avoid systematic bias, as well as provide a discussion on how the number of clients affects the performance of the fusion rule when $p(\boldsymbol{\theta})$ is used in excess without corrections. This work paves the way for further methodological advances in distributed Bayesian fusion, equipped with a solid theoretical justification.

The remainder of the paper is organized as follows. Section \ref{sec:fusion} introduces the optimal (global) Bayesian solution as well as different fusion rules using local updates. Section \ref{sec:GaussianFusion} provides insights for the important case where inferring $\boldsymbol{\theta}$ can be seen as a regression problem where the relevant distributions are approximated by a Gaussian distributions. Experimental results are provided and discussed in Section \ref{sec:experiments}, validating the theoretical results on earlier sections. The paper is concluded by Section \ref{sec:conclusions} with final remarks.

% In this paper, we try to present a simple and unified framework for Bayesian data fusion scheme which can apply to any related application. From simple model fusion to Kalman fusion\cite{hashemipour1988decentralized}, to which called Bayesian committee machine\cite{tresp2000bayesian,deisenroth2015distributed,liu2018generalized,liu2020gaussian}, which focus on the Gaussian process model, to Bayesian Federated learning\cite{chen2020fedbe,liu2021bayesian} and so on. 

% We also will study the properties of different fusion methods with respect to prior information and the number of the agent.  

\section{Bayesian Data Fusion schemes}\label{sec:fusion}

\begin{figure}
\centerline{\includegraphics[width=0.5\textwidth]{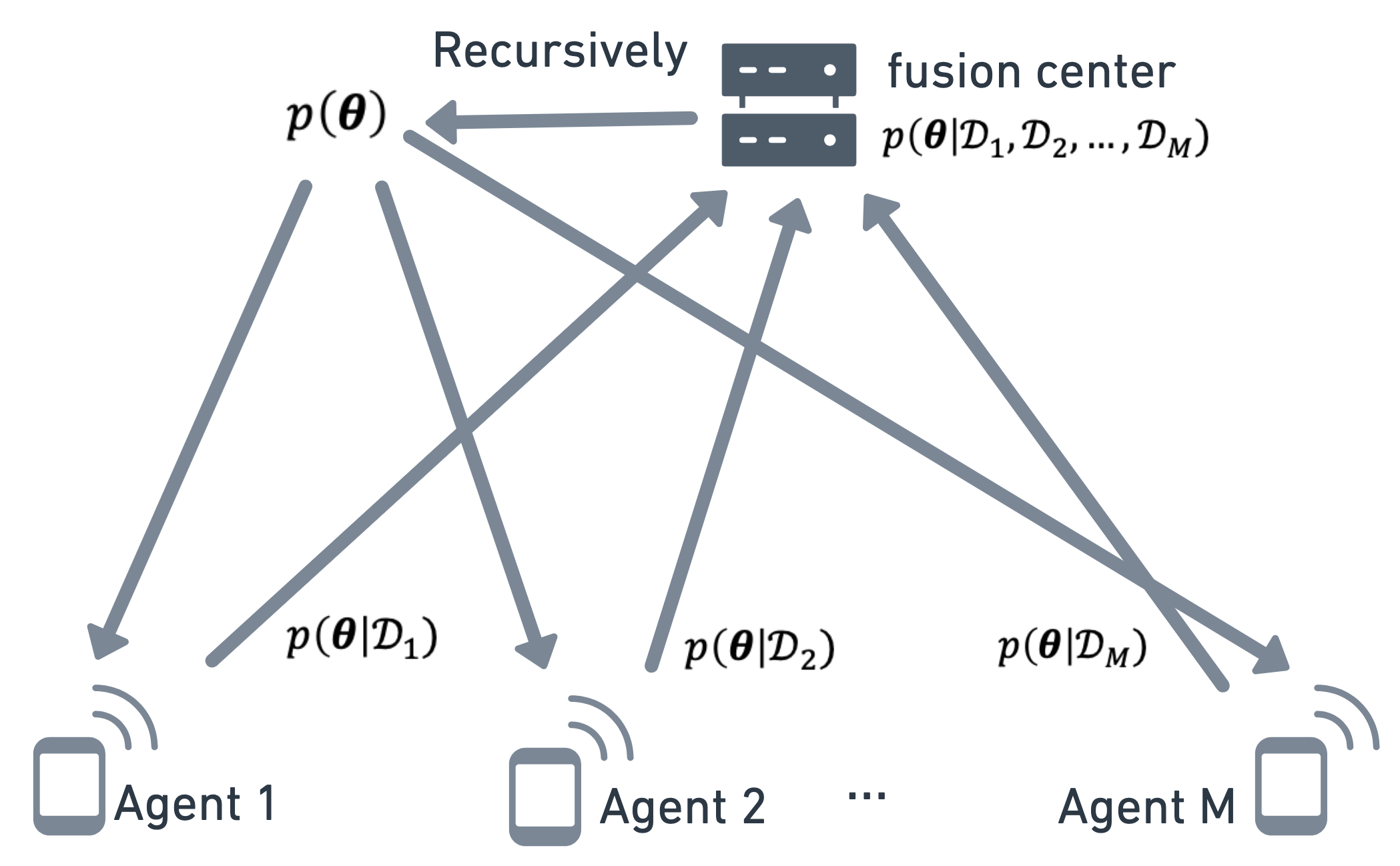}
}
\caption{Distributed framework of $M$ agents sharing a priori information on a quantity, $p(\btheta)$, and returning local posterior updates, $p(\boldsymbol{\theta}|\mathcal{D}_m)$.}
\label{fig:distributed}
\end{figure}

This paper considers a framework (see Fig. \ref{fig:distributed}) where observations related to an unknown quantity of interest $\boldsymbol{\theta}\in\Theta^{d}$ are obtained by $M$ distributed agents, such that the $m$-th agent has only access to its own partial data $\mathcal{D}_m$. The data is assumed to be independent and identically distributed (\textit{i.i.d.}) across agents. In addition, a Bayesian approach to the inference of $\boldsymbol{\theta}$ is considered, therefore a prior distribution is assumed $p(\boldsymbol{\theta})$ before data is observed. The main goal of Bayesian data fusion is to evaluate the posterior of the quantity given all the data $p(\boldsymbol{\theta}|\mathcal{D})$, however this might be unpractical due to the need to transmit local data to a central node in charge of processing it. Alternatively, distributed Bayesian data fusion provides a framework to combine the local inference results in a manner that approximates the optimal Bayesian solution.

This section formulates the optimal Bayesian fusion strategy, as it would be implemented at a central node using all available data $\mathcal{D} = \{\mathcal{D}_1,\mathcal{D}_2,\dots,\mathcal{D}_M \}$ to compute $p(\boldsymbol{\theta}|\mathcal{D})$. Then, we establish connections to different distributed Bayesian data fusion strategies where we observe that, unless properly accounted for, the a priori information may be used multiple times. This situation can potentially cause inconsistent approximations of the optimal Bayesian solution. We propose to quantify this deviation with the Kullback-Leibler (KL) divergence between the optimal posterior and the resulting approximation. %, a choice that will result convenient later to provide analytical insights into this effect. 

% \subsection{Bayesian data fusion schemes}

\noindent\textbf{Optimal Bayesian data fusion.}
Under the assumption that datasets $\mathcal{D} = \{\mathcal{D}_1,\mathcal{D}_2,\dots,\mathcal{D}_M \}$ are \textit{i.i.d.} given $\boldsymbol{\theta}$, it is possible to express the optimal Bayesian fusion rule in terms of the local likelihoods and the globally used prior distribution. We refer to this solution as the optimal Bayesian fusion rule under \emph{conditionally independent likelihoods} (CIL). The posterior $p(\boldsymbol{\theta}|\mathcal{D})$ is then
\begin{align}\label{eq:CILa}
p(\boldsymbol{\theta}|\mathcal{D}) 
= & \frac{p(\mathcal{D}_1,\mathcal{D}_2,\dots,\mathcal{D}_M|\boldsymbol{\theta})p(\boldsymbol{\theta})}{p(\mathcal{D}_1,\mathcal{D}_2,\dots,\mathcal{D}_M)} 
% \nonumber\\
=  \frac{p(\boldsymbol{\theta}) \prod_{m=1}^{M}p(\mathcal{D}_m|\boldsymbol{\theta})} {p(\mathcal{D}_1,\mathcal{D}_2,\dots,\mathcal{D}_M)}\;,
\end{align}
\noindent which can be further manipulated using Bayes' rule as
\begin{align}\label{eq:CIL}
p(\boldsymbol{\theta}|\mathcal{D}) 
% = &  \frac{p(\boldsymbol{\theta}) \prod_{m=1}^{M} \frac{p(\boldsymbol{\theta}|\mathcal{D}_m)p(\mathcal{D}_m)}{p(\boldsymbol{\theta})}}{p(\mathcal{D}_1,\mathcal{D}_2,\dots,\mathcal{D}_M)} \nonumber\\
% = & \frac{\frac{1}{p(\boldsymbol{\theta})^{M-1}}  \prod_{m=1}^{M} p(\boldsymbol{\theta}|\mathcal{D}_m)}{\int \frac{1}{p(\boldsymbol{\theta})^{M-1}}  \prod_{m=1}^{M} p(\boldsymbol{\theta}|\mathcal{D}_m) d\boldsymbol{\theta}} \nonumber \\
=  Z \frac{\prod_{m=1}^{M} p(\boldsymbol{\theta}|\mathcal{D}_m)}{p^{M-1}(\boldsymbol{\theta})} \;,
\end{align}
where 
\begin{equation}\label{eq:normalization_Z}
    Z = \frac{\prod_{m=1}^{M} p(\mathcal{D}_m)}{p(\mathcal{D}_1,\mathcal{D}_2,\dots,\mathcal{D}_M)}    \;,
\end{equation}
is the normalization constant and $p(\boldsymbol{\theta}|\mathcal{D}_m)$ represents the local posterior for the $m$-th agent.

%\pau{maybe it is worth adding a discussion on the pros/cnos of this approach: can it be implemented distributedly? what information is needed? is that $Z$ important?}

%\peng{reply:}
This approach is optimal under the \textit{i.i.d.} data assumption, which enables the likelihood factorization in \eqref{eq:CIL}. It presents a Bayesian approach for data fusion when the same prior is used by the $M$ users. This solution enables for a distributed implementation, where prior is sent to each agent who returns their local posteriors in exchange.
It is easy to see that the optimal Bayesian fusion rule for the case where each agent uses different local prior information $p_m(\boldsymbol{\theta})$ is
\begin{equation}\label{eq:CIL_diffprior}
p(\boldsymbol{\theta}|\mathcal{D}) 
=  Z p(\boldsymbol{\theta}) \prod_{m=1}^{M} \frac{ p(\boldsymbol{\theta}|\mathcal{D}_m)}{p_m(\boldsymbol{\theta})} \;, 
\end{equation}
\noindent where the global prior $p(\boldsymbol{\theta})$ might differ from the local prior distributions.
%\noindent where $p_M(\data) = \int \prod_{m=1}^M p(\data_m|\btheta) p_m(\btheta) d\btheta$.

% The benefits of it firstly is that it shares the prior information among the distributed system which is easy to manipulate and control, and also it's optimal unlike the CIP as following discussion.
%
%The disadvantage of this method is that we may be given a bad results if provide bad prior information especially in some complex nonlinear model. Plus, if the dataset among the distributed system don't have much similarity , using the same prior may not help.

%\peng{change the title to "approximate Bayesian data fusion"?}\pau{ok}
\noindent\textbf{Approximate Bayesian data fusion.}
There are situations where distributing the priors across the collaborative agents is not possible and, instead, the data fusion node has access to the local posteriors only. These agents might have used the same priors for the quantity or different across agents. A popular fusion rule is the so-called product of experts (PoE) by which the local posteriors are multiplied together in order to produce a global posterior estimate, which is necessarily an approximation of the optimal solution in \eqref{eq:CIL}. 
%If we don't consider about the full posterior, it can be approximated as equation 2, which is called Product of Expert usually, but here, we still use the same prior for all clients \peng{where the PoE don't have this limitation}. 
Following the nomenclature used in this paper, we refer to this approach as the approximate Bayesian fusion rule under \emph{conditionally independent posterior} (CIP), where the main assumption is that global posterior can be factorized into a product of the local posteriors. The resulting posterior $\wt p(\btheta|\data)$ is an approximation of the \textit{true} posterior $p(\btheta|\data)$ which is accurate when, indeed, the assumption holds. More precisely,
%
% \begin{align}\label{eq:CIP}
% p(\boldsymbol{\theta}|\mathcal{D}_1,\mathcal{D}_2,\dots,\mathcal{D}_M) \appropto \prod_{m=1}^{M} p(\boldsymbol{\theta}|\mathcal{D}_m)
% % =p(\theta_{ind}) \cred{=\widehat p(\boldsymbol{\theta}|\mathcal{D}_1,\mathcal{D}_2,\dots,\mathcal{D}_M)} 
% \end{align}
%
\begin{align}\label{eq:CIP}
% p(\boldsymbol{\theta}|\mathcal{D}_1,\mathcal{D}_2,\dots,\mathcal{D}_M) 
\wt p(\btheta|\data) = \wt Z \prod_{m=1}^{M} p(\boldsymbol{\theta}|\mathcal{D}_m)\;,
\end{align}
% \noinhere 
where $\wt Z$ is a normalization term and $p(\boldsymbol{\theta}|\mathcal{D}_m)$ is $m$-th local posterior as computed by each of the $M$ agents. %For the detail formula, see appendix. 

% We will evaluate and compare the CIL with CIP in mainly two aspects, one is use the risk or the accuracy as the metric to compare these two methods, another is to use the KL divergence to measure their difference. 

% \pau{add here also the KL divergence definition in Section 2.B}

% \subsection{The KL divergence}
 
The results in Section \ref{sec:experiments} will provide a comparison between CIP and CIL approaches. In addition, we are interested in developing analytical tools to gain understanding about their differences. 
Mainly, CIP in \eqref{eq:CIP} differs from CIL in \eqref{eq:CIL} in their underlying assumptions on the independence of either the local posteriors or the data, respectively. %While in Section \ref{sec:experiments} we compare their inference performances, we are also interested in evaluating how different those solutions are in general settings. To achieve the latter, we investigate the KL divergence between both optimal and approximate posterior distributions. 
We will use the KL divergence between the optimal Bayesian fusion rule (referred to as CIL) and the approximate rule (or CIP), such that we can understand theoretically when it is worth accounting for the double counting of the prior. The KL divergence, for a general model where $M$ agents are fused, results in 
%
%To measure the difference the CIP and CIL, we use the KL divergence $KL(p(\theta_{CIP})|p(\theta_{CIL}))$, see how the prior information and the number of client can impact the results.
%
%If we take \ref{eq:CIL} and \ref{eq:CIP} into the KL divergence equation as:
%
% \begin{align}
% \label{eq:KL}
% &\mathca{D}_\mathrm{KL}(p(\boldsymbol{\theta}_{CIP})||p(\boldsymbol{\theta}_{CIL})) \\ \nonumber
% &= \int \prod_{m=1}^{M} p(\boldsymbol{\theta}|\mathcal{D}_m) \log\frac{\prod_{m=1}^{M} p(\boldsymbol{\theta}|\mathcal{D}_m)}{p(\boldsymbol{\theta}_{CIL})}d\boldsymbol{\theta} \\ \nonumber
% &=  \int \prod_{m=1}^{M} p(\boldsymbol{\theta}|\mathcal{D}_m) \log p(\boldsymbol{\theta})^{M-1}d\boldsymbol{\theta} \\ \nonumber
% &+ \log \left(\int \frac{\prod_{m=1}^{M}p(\boldsymbol{\theta}|\mathcal{D}_m)}{p(\boldsymbol{\theta})^{M-1}}d\boldsymbol{\theta}\right)
% \end{align}
% {\small 
\begin{align} \label{eq:KL}
&\mathrm{KL}_M\left(p(\btheta|\data)||\wt p(\btheta|\data)\right) 
% \\ \nonumber
 = \int p(\btheta|\data) \log\left(\frac{p(\btheta|\data)}{\wt p(\btheta|\data)}\right)d\btheta\\ \nonumber
  &= \int  \frac{\prod_{m=1}^M p(\data_m|\btheta) p(\btheta)}{p(\data)}  \log\left(\frac{ \frac{\prod_{m=1}^M p(\data_m|\btheta) p(\btheta)}{p(\data)}}{ \frac{\prod_{m=1}^M p(\data_m|\btheta) p^M(\btheta)}{p_M(\data)} }\right)d\btheta\\ \nonumber
  &= \int  \frac{\prod_{m=1}^M p(\data_m|\btheta) p(\btheta)}{p(\data)}  \log\left(\frac{p_M(\data)}{p(\data)}\frac{   1 }{   p^{M-1}(\btheta) }\right)d\btheta \;,\\ \nonumber
%   &= \int  \frac{\prod_{m=1}^M p(\data_m|\btheta) p(\btheta)}{p(\data)}  \left( \log\left(\frac{p_M(\data)}{p(\data)}\right)  - (M-1)\log(p(\btheta))  \right)   d\btheta\\ \nonumber
%   &=\log\left(\frac{p_M(\data)}{p(\data)}\right)  - (M-1) \int  \frac{p(\data|\btheta) p(\btheta)}{p(\data)}    \log(p(\btheta))    d\btheta
\end{align}
% }
\noindent where we explicitly note the number of agents as a subindex in the KL for clarity. The expression can be further manipulated as
%{\small 
\begin{align} \label{eq:KL}
&\mathrm{KL}_M(p(\btheta|\data)||\wt p(\btheta|\data)) 
% \\ \nonumber
% & = \int  \frac{\prod_{m=1}^M p(\data_m|\btheta) p(\btheta)}{p(\data)}  \left( \log\left(\frac{p_M(\data)}{p(\data)}\right)  - (M-1)\log(p(\btheta))  \right)   d\btheta\\ \nonumber
  =\log\left(\frac{p_M(\data)}{p(\data)}\right)  - (M-1) \int  \frac{p(\data|\btheta) p(\btheta)}{p(\data)}    \log(p(\btheta))    d\btheta,
\end{align}
%}
with the definitions 
\begin{eqnarray}
p(\data) &=& \int \prod_{m=1}^M p(\data_m|\btheta) p(\btheta)d\btheta, \\
p_M(\data) &=& \int \prod_{m=1}^M p(\data_m|\btheta) p^M(\btheta) d\btheta\;.
\end{eqnarray}

\section{Fusion of Gaussian estimators}\label{sec:GaussianFusion}

Gaussian assumptions are often considered in order to allow for inferential tractability \cite{Kim08}. Furthermore, it is also common to approximate the posterior by a Gaussian distribution, for instance in the context  the so-called Gaussian filters \cite{Ito00} or Laplace approximations (see for instance INLA \cite{lindgren2015bayesian}). 

In this section, we analyze the relevant case where $\Theta = \mathbb{R}$, such that inferring $\btheta\in\mathbb{R}^{d}$ can be interpreted as a regression problem or a classification task when Laplace approximation is considered \cite{bishop2006pattern}.
In this context, we assume that the prior on the parameter is normally distributed, $\btheta\sim \mathcal{N}(\mathbf{\btheta}_0,\mathbf{C}_0)$, and that the distribution of the local likelihood for the $m$-th agent is 
\begin{align}\label{eq:regression_example}
%   \mathbf{y}_m = f(\btheta, \mathbf{x}_m) + \mathbf{r},
   \mathbf{y}_{n,m} | \btheta \sim \mathcal{N}(\mathbf{f}(\btheta, \mathbf{x}_{n,m}),\mathbf{R}_m)
\end{align}
% \pau{$\btheta$ should be $\btheta$}
where $\mathbf{x}_{n,m}\in\mathbb{R}^{d_x}$ is the $n$-th feature input vector, $\mathbf{f}(\btheta, \mathbf{x}_{n,m})$ is a mapping function from those inputs to observed data $\mathbf{y}_{n,m}\in\mathbb{R}^{d_y}$. $n=1,\dots , N_m$ denotes the sample index for the $m$-th agent. The local dataset, as described in Section \ref{sec:fusion}, is then composed of the $N_m$ pairs $\mathcal{D}_m = \{\mathbf{y}_{n,m} , \mathbf{x}_{n,m}\}_{n=1}^{N_m}$, and $\mathbf{R}_m$ denotes the observation covariance matrix, which potentially can be different across agents.
% the noise distributed as the Gaussian distribution $\mathcal{N}(\mathbf{0},\mathbf{R})$.
Note that when the mapping is a linear function on $\btheta$, then the posterior is also normally distributed and its parameters can be optimally computed \cite{bishop2006pattern}. In general problems, if the function $\mathbf{f}(\cdot,
\cdot)$ is non-linear, one can still make an approximation that the posterior would be approximately Gaussian distributed and compute its mean/covariance using for instance a Laplace approximation. %can use several methods to approximate as Gaussian, this case especially using Laplace approximation in classification problem. 

Under the Gaussian approximation, the resulting local posterior $p(\boldsymbol{\theta}|\mathcal{D}_m) \approx \mathcal{N}(\btheta_{m},\mathbf{C}_m)$ with %can then be obtained, we can derive by the Bayesian theory and get like this following equation:
%
% \begin{align}\label{eq:local_posterior}
% &\btheta_{m}=\mathbf{C}_m\left(\mathbf{C}_{0}^{-1} \btheta_{0}+ \boldsymbol{F}^{\top}\mathbf{R}^{-1} \mathbf{y}\right) \;,\\
% &\mathbf{C}_m^{-1}=\mathbf{C}_{0}^{-1}+ \boldsymbol{F}^{\top} \mathbf{R}^{-1} \boldsymbol{F}
% \end{align}
% \noindent where $\boldsymbol{F}$ is the linearization of the model.
%
\begin{align}\label{eq:local_posterior}
&\btheta_{m}=\mathbf{C}_m\left(\mathbf{C}_{0}^{-1} \btheta_{0} + \sum_{n=1}^{N_m} \boldsymbol{F}_{n,m}^{\top}\mathbf{R}_m^{-1} \mathbf{y}_{n,m}\right) \;,\\
&\mathbf{C}_m^{-1}=\mathbf{C}_{0}^{-1} + \sum_{n=1}^{N_m} \boldsymbol{F}_{n,m}^{\top} \mathbf{R}^{-1} \boldsymbol{F}_{n,m},
\end{align}
\noindent where $\boldsymbol{F}=\partial \mathbf{f}/\partial\btheta$ is the Jacobian of the model, resulting from its linearization (or the coefficients of the model in case it is already linear).

\noindent\textbf{Optimal Bayesian data fusion.}
The optimal Bayesian fusion rule, or CIL for short under (approximately) local Gaussian posteriors leads to a Gaussian posterior given by %under the assumption that the local posteriors are Gaussian distributed results in another Gaussian distribution for the fused posterior, given by  
\begin{align}\label{eq:gassuian_Correlated fusion}
p(\btheta|\mathcal{D}_1,\dots, \mathcal{D}_M) 
= & \frac{\frac{1}{p^{M-1}(\btheta)}  \prod_{m=1}^{M} p(\btheta|\mathcal{D}_m)}{\int \frac{1}{p^{M-1}(\btheta)}  \prod_{m=1}^{M} p(\btheta|\mathcal{D}_m) d\btheta} 
% \nonumber \\
%= & \frac{1}{(2 \pi)^{\frac{k}{2}} \operatorname{\det}(\mathbf{\Lambda}^{-1})^{\frac{1}{2}}}\exp \{ 
%    -\frac{(\btheta-\boldsymbol{\mu})^{\top}\mathbf{\Lambda}(\btheta-\mathbf{\boldsymbol{\mu}})}{2}\} \nonumber \\
=  \mathcal{N}(\mathbf{\boldsymbol{\mu}}, \mathbf{\Lambda}^{-1})\;,
\end{align}
where (see \ref{ap:GaussianCIL} for details) the posterior precision matrix can be obtained as 
\begin{align}\label{eq:variance_close}
\mathbf{\Lambda} = \sum_{m=1}^{M}\mathbf{C}_m^{-1}-\mathbf{C}_{0}^{-1}(M-1)\;,
\end{align}
\noindent which depends on the local and a priori covariance matrices. The posterior mean is then
%
% it's robust version
% \begin{align}\label{eq:variance_close}
% \mathbf{\Lambda} = \sum_{m=1}^{M}\beta_{m}\mathbf{C}_m^{-1}-\mathbf{C}_{0}^{-1}(\sum_{m=1}^{M} \beta_{m}-1)
% \end{align}
%
%where $\mathbf{C_m}$ is the variance of the weights while $\mathbf{C}_{0}$ is the prior variance. $\mathbf{\Lambda}$ is precision matrix. 
%
%and $\boldsymbol{\mu}$ as: 
\begin{equation}\label{eq:mean_CIL}
    \boldsymbol{\mu} = \mathbf{\Lambda}^{-1}\left(\sum_{m=1}^{M}\mathbf{C}_m^{-1} \btheta_m -(M-1) \mathbf{C}_{0}^{-1} \btheta_{0}\right)
    % {\sum_{m=1}^{M}\mathbf{C}_m^{-1}-\mathbf{C}_{0}^{-1}(M-1)} 
    \;,
\end{equation}
which can be rearranged as
\begin{equation}\label{eq:mean_CIL2}
    \boldsymbol{\mu} = \sum_{m=1}^{M}\mathbf{\Xi}_m \btheta_m + \mathbf{\Xi}_0 \btheta_0 = \sum_{m=0}^{M}\mathbf{\Xi}_m \btheta_m\;,
\end{equation}
\noindent with the weight matrices as %$\mathbf{\xi}_m=\frac{\mathbf{C}_m^{-1}}{\sum_{m=1}^{M}\mathbf{C}_m^{-1}-\mathbf{C}_{0}^{-1}(M-1)}$, $m\neq 0$, and $\xi_0 =  \frac{-(M-1) \mathbf{C}_{0}^{-1}}{\sum_{m=1}^{M}\mathbf{C}_m^{-1}-\mathbf{C}_{0}^{-1}(M-1)}$
% \begin{equation}
%     \mathbf{\xi}_m = \left\{ 
%     \begin{array}{cc}
%          \mathbf{C}_m^{-1}\sum_{m=1}^{M}\mathbf{C}_m^{-1}-\mathbf{C}_{0}^{-1}(M-1) & m\neq 0 \\
%          \frac{-(M-1) \mathbf{C}_{0}^{-1}}{\sum_{m=1}^{M}\mathbf{C}_m^{-1}-\mathbf{C}_{0}^{-1}(M-1)} & m=0
%     \end{array}
%     \right.
% \end{equation}
% \pau{I think the correct expression for the weights is this:}
\begin{equation}
    \mathbf{\Xi}_m = \left\{ 
    \begin{array}{cc}
         \mathbf{\Lambda}^{-1} \mathbf{C}_m^{-1}, & m\neq 0, \\
         (1-M) \mathbf{\Lambda}^{-1}  \mathbf{C}_{0}^{-1}, & m=0.
    \end{array}
    \right.
\end{equation}
% \pau{also, I noticed that these are not actually weights, but matrices...}

% robust version
%  $\mathbf{\xi}_m=\frac{\beta_{m}\mathbf{C}_m^{-1}}{\sum_{m=1}^{M}\beta_{m}\mathbf{C}_m^{-1}-\mathbf{C}_{0}^{-1}(\sum_{m=1}^{M} \beta_{m}-1)}$, $m\neq 0$, and $\xi_0 =  \frac{-(\sum\beta_{m}-1) \mathbf{C}_{0}^{-1}}{\sum_{m=1}^{M}\beta_{m}\mathbf{C}_m^{-1}-\mathbf{C}_{0}^{-1}(\sum_{m=1}^{M} \beta_{m}-1)}$

\noindent\textbf{Approximate Bayesian data fusion.}
Under the Gaussian assumption, the approximate fusion rule in \eqref{eq:CIP}, or CIP in short, results in 
\begin{equation}\label{eq:gassuian_Correlated fusion_approx}
\wt p(\btheta|\mathcal{D}_1, \mathcal{D}_2,\dots,\mathcal{D}_M) 
=  \mathcal{N}(\widetilde{\mathbf{\boldsymbol{\mu}}}, \widetilde{\mathbf{\Lambda}}^{-1})\;,
\end{equation}
\noindent where 
%With CIP, which prior part will be eliminated, the $\mu$ and $\Sigma$ as:
\begin{align}
    \widetilde{\boldsymbol{\mu}} &= \widetilde{\mathbf{\Lambda}}^{-1} \sum_{m=1}^{M}\mathbf{C}_m^{-1} \btheta_m, \label{eq:mean_CIP}\\
    \widetilde{\mathbf{\Lambda}} &= \sum_{m=1}^{M}\mathbf{C}_m^{-1}\;, \label{eq:cov_CIP}
\end{align}
\noindent as detailed in \cite{victor201803} and which can also be interpreted as a particular case of the Gaussian CIL rule \eqref{eq:gassuian_Correlated fusion} when no a priori information is considered in the fusion stage. That is, intuitively, that $\mathbf{C}_0$ takes very large values such that ${\boldsymbol{\mu}} \rightarrow \widetilde{\boldsymbol{\mu}}$ and ${\mathbf{\Lambda}} \rightarrow \widetilde{\mathbf{\Lambda}}$.

\noindent\textbf{Convergence results for CIL and CIP fusion approaches.}
We are interested in understanding when the two fusion rules are equivalent. That is, when accounting for the multiple use of the a priori information makes a difference and when, on the contrary, can be neglected thus simplifying the calculus. To that aim, we use the KL divergence to quantify the similarities of $p(\btheta|\mathcal{D}) = \mathcal{N}(\mathbf{\boldsymbol{\mu}}, \mathbf{\Lambda}^{-1})$
and
$\wt p(\btheta|\mathcal{D}) 
=  \mathcal{N}(\widetilde{\mathbf{\boldsymbol{\mu}}}, \widetilde{\mathbf{\Lambda}}^{-1})$.
The KL divergence is known to be 
% As the derivation in equation \ref{eq:KL}, if these two distribution are Gaussian,  $p(\boldsymbol{\theta}_{CIP})=\mathcal{N}(\bm\mu_p,\bm\Sigma_p)$, $p(\boldsymbol{\theta}_{CIL})=\mathcal{N}(\bm\mu_q,\bm\Sigma_q)$ ,then after derivation, we get:
\begin{multline}\label{eq:kl_gaussian}
\mathrm{KL}_M(p(\btheta|\data)||\wt p(\btheta|\data)) = \\ 
= \frac{1}{2}\left[\log\frac{|\widetilde{\mathbf{\Lambda}}^{-1}|}{|{\mathbf{\Lambda}}^{-1}|} - d + ({\mathbf{\boldsymbol{\mu}}}-\widetilde{\mathbf{\boldsymbol{\mu}}})^T \widetilde{\mathbf{\Lambda}}({\mathbf{\boldsymbol{\mu}}} 
-\widetilde{\mathbf{\boldsymbol{\mu}}}) \right. 
\left. + \textrm{Trace}\left\{\widetilde{\mathbf{\Lambda}}\mathbf{\Lambda}^{-1}\right\}
\right]\;,
\end{multline}
\noindent where we make explicit its dependence on the number of agents $M$ for the sake of clarity in the upcoming results.
The divergence tends to zero as the two distributions become more similar. In that sense, we provide two results showing convergence of CIP to CIL, which are stated as the following two theorems.

\begin{defn}\label{defn_thm}
Let $\mathcal{D}$ be a global dataset of \textit{i.i.d.} observations, which are related to a parameter of interest $\btheta\in\mathbb{R}^{n_\theta}$. Let $\mathcal{D}$ be processed by $M$ local agents, each observing mutually exclusive sets $\mathcal{D} = \{\mathcal{D}_1,\mathcal{D}_2,\dots,\mathcal{D}_M \}$ and sharing the same a priori information from the quantity of interest $p(\btheta)$. The $m$-th agent updates the prior to compute its local posterior, $p(\btheta|\data_m)$, which are fused at a central node either using the optimal Bayesian fusion rule $p(\btheta|\data)$ or the approximate Bayesian fusion rule $\wt p(\btheta|\data)$.
\end{defn}

\begin{thm}\label{theorem_M}
Given Definition \ref{defn_thm}, the divergence between $p(\btheta|\data)$ and $\wt p(\btheta|\data)$ increases with $M$, that is,
$$\mathrm{KL}_{M+1} > \mathrm{KL}_M$$
when both local posteriors and a priori are normally distributed. 
\end{thm}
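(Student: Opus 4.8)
I would work from the general KL identity \eqref{eq:KL} rather than its Gaussian specialization \eqref{eq:kl_gaussian}, since the former already exposes the mechanism and the latter is a corollary. Write $q(\btheta):=p(\btheta|\data)$ for the optimal (CIL) posterior and $\pi(\btheta):=p(\btheta)$ for the shared prior. Comparing \eqref{eq:CIP} with \eqref{eq:CIL} shows that the CIP posterior is exactly the prior-overcounted tilt $\wt p(\btheta|\data)=q(\btheta)\,\pi(\btheta)^{M-1}/Z_{M-1}$, with $Z_t:=\int q(\btheta)\,\pi(\btheta)^{t}\,d\btheta$; moreover $Z_{M-1}=p_M(\data)/p(\data)$ in the notation of \eqref{eq:KL}, so that identity becomes $\mathrm{KL}_M=\log Z_{M-1}-(M-1)\int q\log\pi\,d\btheta$. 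The first thing I would pin down is the structural fact behind Definition~\ref{defn_thm}: $\data$ is one \emph{fixed} dataset, merely re-partitioned among more or fewer agents, so $q=p(\btheta|\data)$ --- the posterior given \emph{all} the data --- is the same object whether there are $M$ or $M+1$ agents; only the count $M-1$ of spurious prior copies changes. Consequently $\phi(t):=\log Z_t$ is a single fixed function with $\phi(0)=0$ and $\phi'(0)=\int q\log\pi$, and $\mathrm{KL}_M=\phi(M-1)-(M-1)\,\phi'(0)=:\psi(M-1)$.

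\textbf{Main step.} The plan is then to notice that $\phi(t)=\log \mathbb{E}_{q}[\pi(\btheta)^{t}]$ is the cumulant generating function of $\log\pi(\btheta)$ under $q$ (equivalently the log-partition of the tilted family $p_t\propto q\,\pi^{t}$), hence convex, with $\phi''(t)=\mathrm{Var}_{p_t}(\log\pi)\ge 0$; and for a non-degenerate Gaussian prior $\log\pi$ is a non-constant quadratic, so the variance is strictly positive and $\phi$ is strictly convex. Therefore $\psi(t)=\phi(t)-t\,\phi'(0)$ is strictly convex with $\psi(0)=\psi'(0)=0$, which forces $\psi$ to be strictly increasing on $(0,\infty)$; evaluating at $t=M-1$ gives $\mathrm{KL}_{M+1}=\psi(M)>\psi(M-1)=\mathrm{KL}_M$ for every $M\ge1$, and recovers $\mathrm{KL}_1=0$ as a sanity check. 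As an alternative matching the Gaussian algebra of this section, I would substitute \eqref{eq:variance_close}, \eqref{eq:cov_CIP}, \eqref{eq:mean_CIL2} and \eqref{eq:mean_CIP} into \eqref{eq:kl_gaussian}, using $\wt{\mathbf{\Lambda}}=\mathbf{\Lambda}+(M-1)\mathbf{C}_0^{-1}$ and the identity $\boldsymbol{\mu}-\wt{\boldsymbol{\mu}}=(M-1)\wt{\mathbf{\Lambda}}^{-1}\mathbf{C}_0^{-1}(\boldsymbol{\mu}-\btheta_0)$ (both with $\mathbf{\Lambda},\boldsymbol{\mu}$ independent of $M$, again by the re-partition argument). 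This splits $\mathrm{KL}_M$ into a trace/log-determinant term $\tfrac12\sum_i(\lambda_i-\log\lambda_i-1)$ with $\lambda_i=1+(M-1)\nu_i$ and $\nu_i>0$ the eigenvalues of $\mathbf{C}_0^{-1}\mathbf{\Lambda}^{-1}$, strictly increasing in $M$ since $x\mapsto x-\log x-1$ is increasing on $[1,\infty)$, plus a quadratic term $\tfrac12(M-1)^2(\boldsymbol{\mu}-\btheta_0)^{\!\top}\mathbf{C}_0^{-1}\wt{\mathbf{\Lambda}}^{-1}\mathbf{C}_0^{-1}(\boldsymbol{\mu}-\btheta_0)$ that is non-decreasing because $\wt{\mathbf{\Lambda}}_{M+1}\preceq\tfrac{M+1}{M}\wt{\mathbf{\Lambda}}_M$ gives $M^2\wt{\mathbf{\Lambda}}_{M+1}^{-1}\succeq\tfrac{M^3}{M+1}\wt{\mathbf{\Lambda}}_M^{-1}\succeq(M-1)^2\wt{\mathbf{\Lambda}}_M^{-1}$; adding two monotone pieces finishes it.

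\textbf{Main obstacle.} The real work is the first, nearly bookkeeping, step: establishing that $p(\btheta|\data)$ --- hence $\phi$, or the pair $(\mathbf{\Lambda},\boldsymbol{\mu})$ --- does not change when the data is re-split, so that the \emph{only} $M$-dependence of $\mathrm{KL}_M$ is the explicit $(M-1)$-fold reuse of the prior; once that is isolated, convexity (or the two elementary monotonicities) does the rest. This hypothesis is essential: if $M$ grew by acquiring genuinely new data, $q$ would move and the monotonicity need not hold, so I would state it carefully. On the Gaussian-algebra route the one non-routine estimate is the monotonicity of the mean term, where $\mathbf{C}_0^{-1}$ and the aggregate data-information matrix $\sum_m\mathbf{C}_m^{-1}-M\mathbf{C}_0^{-1}$ do not commute and cannot be simultaneously diagonalized across different $M$; the operator bound $\wt{\mathbf{\Lambda}}_{M+1}\preceq\tfrac{M+1}{M}\wt{\mathbf{\Lambda}}_M$ is what makes that term tractable. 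A minor loose end is that partitioning a finite $\data$ into $M+1$ non-empty blocks requires $|\data|$ large enough; permitting an agent with no data (which just returns the prior) removes the restriction and keeps the claim valid for all $M\ge1$.
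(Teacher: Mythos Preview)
Your main argument is correct and shares the same skeleton as the paper's: both reduce $\mathrm{KL}_{M+1}-\mathrm{KL}_M$ to the difference $\phi(M)-\phi(M-1)-\phi'(0)$ with $\phi(t)=\log\int p(\btheta|\data)\,p(\btheta)^t\,d\btheta$ (the paper writes this integral as $S_M$), and both establish positivity via convexity of $\phi$. The difference is in how convexity is obtained and used. The paper computes $S_M$ in closed form from the Gaussian product formula, differentiates twice in $M$ explicitly, checks the sign, and then combines the resulting bound $\phi(M)-\phi(M-1)\ge\phi(1)-\phi(0)$ with Jensen's inequality $\log E_q[\pi]\ge E_q[\log\pi]$. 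Your route is shorter and more general: recognising $\phi$ as the cumulant generating function of $\log\pi(\btheta)$ under $q$ gives $\phi''(t)=\mathrm{Var}_{p_t}(\log\pi)>0$ immediately, and the tangent-line formulation $\psi(t)=\phi(t)-t\phi'(0)$ dispatches both the convexity step and the Jensen step at once. Gaussianity is then only needed to certify strict convexity (non-constant $\log\pi$) and finiteness of $Z_t$, whereas the paper leans on it throughout.

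Your alternative route via the closed-form Gaussian KL \eqref{eq:kl_gaussian} is genuinely different from anything in the paper's proof and is also correct; the operator bound $\wt{\mathbf{\Lambda}}_{M+1}\preceq\frac{M+1}{M}\wt{\mathbf{\Lambda}}_M$ (equivalently $\mathbf{\Lambda}\succeq\mathbf{C}_0^{-1}$, which holds since the data can only add precision) is the right device for the mean term when $\mathbf{C}_0^{-1}$ and the data information do not commute. You are also explicit about the structural point the paper uses only implicitly: $p(\btheta|\data)$, hence $(\boldsymbol{\mu},\mathbf{\Lambda})$, is fixed under re-partitioning, so the only $M$-dependence is the count of redundant priors.
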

\begin{proof}
Details in \ref{ap:thm_M}.
\end{proof}

\begin{thm}\label{theorem_q}
Given Definition \ref{defn_thm}, $\wt p(\btheta|\data)$ asymptotically tends to $p(\btheta|\data)$ as the shared prior becomes non-informative, that is,
$$\lim_{q_0\rightarrow \infty} \mathrm{KL}_M (p(\btheta|\data)||\wt p(\btheta|\data)) = 0 \;, \quad \forall M\geq 1$$
under the assumption that both local posteriors and a priori are normally distributed. %Gaussian assumption on the likelihood $p(\data_m|\btheta)=\mathcal{N}(\bmu(\btheta),\mathbf{R})$ and prior $p(\btheta)=\mathcal{N}(\btheta_0,q_0\mathbf{I})$ distributions.
\end{thm}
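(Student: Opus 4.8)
The plan is to show that, as the shared prior flattens, the four Gaussian parameters governing the CIL posterior $p(\btheta|\data)=\mathcal{N}(\bmu,\mathbf{\Lambda}^{-1})$ and the CIP posterior $\wt p(\btheta|\data)=\mathcal{N}(\wt\bmu,\wt{\mathbf{\Lambda}}^{-1})$ collapse onto a common limit, and then to pass to that limit term by term inside the closed form \eqref{eq:kl_gaussian}. Since the prior enters only through $\mathbf{C}_0$, I would first fix a positive definite $\bar{\mathbf{C}}_0$ and write $\mathbf{C}_0=q_0\bar{\mathbf{C}}_0$, so that the non-informative regime $q_0\to\infty$ is exactly $\mathbf{C}_0^{-1}\to\mathbf{0}$; from here on $\mathbf{C}_0^{-1}$ is the vanishing quantity. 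It is convenient to abbreviate the per-agent data information and data score as $\mathbf{S}_m=\sum_{n=1}^{N_m}\boldsymbol{F}_{n,m}^{\top}\mathbf{R}_m^{-1}\boldsymbol{F}_{n,m}$ and $\mathbf{b}_m=\sum_{n=1}^{N_m}\boldsymbol{F}_{n,m}^{\top}\mathbf{R}_m^{-1}\mathbf{y}_{n,m}$, and to set $\mathbf{\Lambda}_\infty=\sum_{m=1}^{M}\mathbf{S}_m$, which I take to be nonsingular (the pooled observations identify $\btheta$) --- precisely the condition that already makes \eqref{eq:variance_close} and \eqref{eq:kl_gaussian} well posed.

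For the precisions I would use the key identity $\mathbf{C}_m^{-1}=\mathbf{C}_0^{-1}+\mathbf{S}_m$ from \eqref{eq:local_posterior}, which makes explicit that the local posteriors themselves move with $\mathbf{C}_0$, so one cannot naively freeze $\mathbf{C}_m$ while sending $\mathbf{C}_0^{-1}\to\mathbf{0}$. With it, $\wt{\mathbf{\Lambda}}=\sum_m\mathbf{C}_m^{-1}=M\mathbf{C}_0^{-1}+\mathbf{\Lambda}_\infty$ and, by \eqref{eq:variance_close}, $\mathbf{\Lambda}=\wt{\mathbf{\Lambda}}-(M-1)\mathbf{C}_0^{-1}=\mathbf{C}_0^{-1}+\mathbf{\Lambda}_\infty$. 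Both precisions thus converge to $\mathbf{\Lambda}_\infty$ as $\mathbf{C}_0^{-1}\to\mathbf{0}$, they differ by $-(M-1)\mathbf{C}_0^{-1}\to\mathbf{0}$, and each is positive definite once $q_0$ is large enough. Since $\det(\cdot)$, matrix inversion and $\mathrm{Trace}(\cdot)$ are continuous at the nonsingular matrix $\mathbf{\Lambda}_\infty$, the first and last bracketed terms of \eqref{eq:kl_gaussian} satisfy $\log(|\mathbf{\Lambda}|/|\wt{\mathbf{\Lambda}}|)\to 0$ and $\mathrm{Trace}\{\wt{\mathbf{\Lambda}}\mathbf{\Lambda}^{-1}\}\to\mathrm{Trace}\{\mathbf{I}_d\}=d$, which together cancel the $-d$ constant.

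For the means I would substitute $\mathbf{C}_m^{-1}\btheta_m=\mathbf{C}_0^{-1}\btheta_0+\mathbf{b}_m$ (again from \eqref{eq:local_posterior}) into \eqref{eq:mean_CIL} and \eqref{eq:mean_CIP}. In the CIL mean the prior-score contributions telescope, $\sum_{m=1}^{M}\mathbf{C}_m^{-1}\btheta_m-(M-1)\mathbf{C}_0^{-1}\btheta_0=\mathbf{C}_0^{-1}\btheta_0+\sum_m\mathbf{b}_m$, so $\bmu=\mathbf{\Lambda}^{-1}\big(\mathbf{C}_0^{-1}\btheta_0+\sum_m\mathbf{b}_m\big)$, whereas $\wt\bmu=\wt{\mathbf{\Lambda}}^{-1}\big(M\mathbf{C}_0^{-1}\btheta_0+\sum_m\mathbf{b}_m\big)$. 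As $\mathbf{C}_0^{-1}\to\mathbf{0}$ both expressions tend to the same limit $\mathbf{\Lambda}_\infty^{-1}\sum_m\mathbf{b}_m$, hence $\bmu-\wt\bmu\to\mathbf{0}$ and the remaining bracketed term obeys $(\bmu-\wt\bmu)^{\top}\wt{\mathbf{\Lambda}}(\bmu-\wt\bmu)\to 0$ because $\wt{\mathbf{\Lambda}}$ stays bounded. Feeding the three limits into \eqref{eq:kl_gaussian} gives $\mathrm{KL}_M\to\frac{1}{2}(0-d+0+d)=0$ for every fixed $M\ge 1$, which is the claim.

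The hard part here is not any calculation but identifying the right standing hypothesis: the nonsingularity of $\mathbf{\Lambda}_\infty=\sum_m\mathbf{S}_m$ cannot be dropped, since if the pooled Jacobians fail to span $\mathbb{R}^{d}$ the flat-prior limit of the posterior is improper and the determinant and inverse terms in \eqref{eq:kl_gaussian} diverge rather than cancel. Granted that hypothesis, every quantity appearing above is a rational, hence continuous, function of the entries of $\mathbf{C}_0^{-1}$ on a neighbourhood of the origin, so the term-by-term passage to the limit needs no uniformity estimates and the argument closes; the remaining bookkeeping is routine and would be deferred to an appendix.
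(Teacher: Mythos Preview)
Your argument is correct and follows the same strategy as the paper's Appendix~\ref{ap:thm_q}: show that the CIL parameters $(\bmu,\mathbf{\Lambda})$ and the CIP parameters $(\wt\bmu,\wt{\mathbf{\Lambda}})$ coalesce as the prior precision vanishes, and conclude that the Gaussian KL in \eqref{eq:kl_gaussian} tends to zero. The paper carries this out under the specialization $\mathbf{C}_0=q_0\mathbf{I}$ and simply observes that the $(M-1)/q_0$ terms in \eqref{eq:mean_CIL} and \eqref{eq:variance_close} disappear, whence $\bmu\to\wt\bmu$ and $\mathbf{\Lambda}\to\wt{\mathbf{\Lambda}}$ with $\mathbf{C}_m$ treated as fixed.

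Your version is a mild refinement rather than a different route: you allow $\mathbf{C}_0=q_0\bar{\mathbf{C}}_0$ for an arbitrary positive definite $\bar{\mathbf{C}}_0$, and you explicitly track that the local posteriors $\mathbf{C}_m^{-1}=\mathbf{C}_0^{-1}+\mathbf{S}_m$ and $\mathbf{C}_m^{-1}\btheta_m=\mathbf{C}_0^{-1}\btheta_0+\mathbf{b}_m$ themselves move with $q_0$, a dependence the paper's limit tacitly freezes. This buys you a cleaner identification of the common limit $\mathbf{\Lambda}_\infty=\sum_m\mathbf{S}_m$ and makes the standing nonsingularity hypothesis explicit, but the underlying mechanism is identical.
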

\begin{proof}
Details in \ref{ap:thm_q}.
\end{proof}

Intuitively, the result in Theorem \ref{theorem_M} shows that for a fixed budget of samples $N$, increasing the number of agents $M$ increases the separation between the two global posterior solutions. That is, as more agents participate less data per agent is available, in which case the a priori distribution on $\btheta$ becomes more relevant and the issue of reusing the prior emerges. 
Secondly, the main result in Theorem \ref{theorem_q} states that CIL converges to CIP as the a priori distribution on $\btheta$ becomes less informative. Notice that the result assumes that $\mathbf{C}_0 = q_0\mathbf{I}$ for the sake of simplicity.
Note that our analysis is based on a Gaussian assumption, which is widely used in the context of Bayesian inference since, under mild conditions, the posterior converges in distribution to a normal  according to the Bernstein–von Mises theorem
 \cite{kleijn2012bernstein}. 

\section{Experiments}\label{sec:experiments}

We validate the obtained results on various relevant inference problems. Namely, $(1)$ an estimation problem, where a set of $M$ linear regression models are employed to update the prior with local data, which are subsequently fused to produce global estimates;  $(2)$ a binary classification problem in which $M$ models are used locally to produce classification results that are then fused to produce a global result; $(3)$
another general classification problem, where $M$ neural networks are locally used to produce classification solutions that are then aggregated into a global Bayesian classifier; and $(4)$ an application in the context of federated learning, which in this case updates the prior recursively instead of just once, unlike the previous experiment. %e time fusion as above.

\subsection{Distributed estimation of linear models}\label{sec:exa}

% \subsection{Toy example of regression}
For the experiments presented in this subsection we generated a synthetic data set following a linear model, with respect to the parameters $\btheta$, embedded in noise:
% consider  synthetic dataset construceted using a linear model embeded in Gaussian noise with $\mathcal{N}(\mathbf{0},\mathbf{R})$, set $\mathbf{R} = r_m \mathbf{I}$. The equation show below:

\begin{align}\label{eq:regression_example}
\mathbf{y}_m=\btheta^\top \bm{\phi}(\mathbf{x}_m) + \mathbf{r}_m ,
\end{align}

where $\mathbf{r}_m \sim \mathcal{N}(\mathbf{0},\mathbf{R})$, with $\mathbf{R} = r \mathbf{I}$, and $\bm{\phi}:\mathbb{R}^{d_x}\to\mathbb{R}^d,\mathbf{x}\mapsto \bm{\phi}(\mathbf{x})$ is an arbitrary nonlinear function.
In our experiment, we set $r=4$, the observation dimension is $1$, and $\bm{\phi}$ is identity matrix. The $\btheta$ is a real random integer vector generated uniformly between $-10$ and $20$ with $d_x=6$ dimensions. The total number of points generated is around $700$ for training and around $300$ for testing. Then, we split the dataset into several parts, each corresponding to one of the local clients' training data.
% \cblue{Describe what the experiment is...}
We estimated local parameters following equation \eqref{eq:local_posterior} and fused using the approaches as discussed in Section~\ref{sec:GaussianFusion}. Furthermore, we computed the evolution of the KL divergence, and test MSE with both prior variance $q_0\in\mathbb{R}_+$, and total number of clients $M$.  The results are depicted in Figures~\ref{Fig:reg_kl}-\ref{Fig:reg_client_test}. All results are obtained by average of multiple
independent Monte Carlo simulations.

% \begin{figure}
% \centerline{\includegraphics[width=0.45\textwidth]{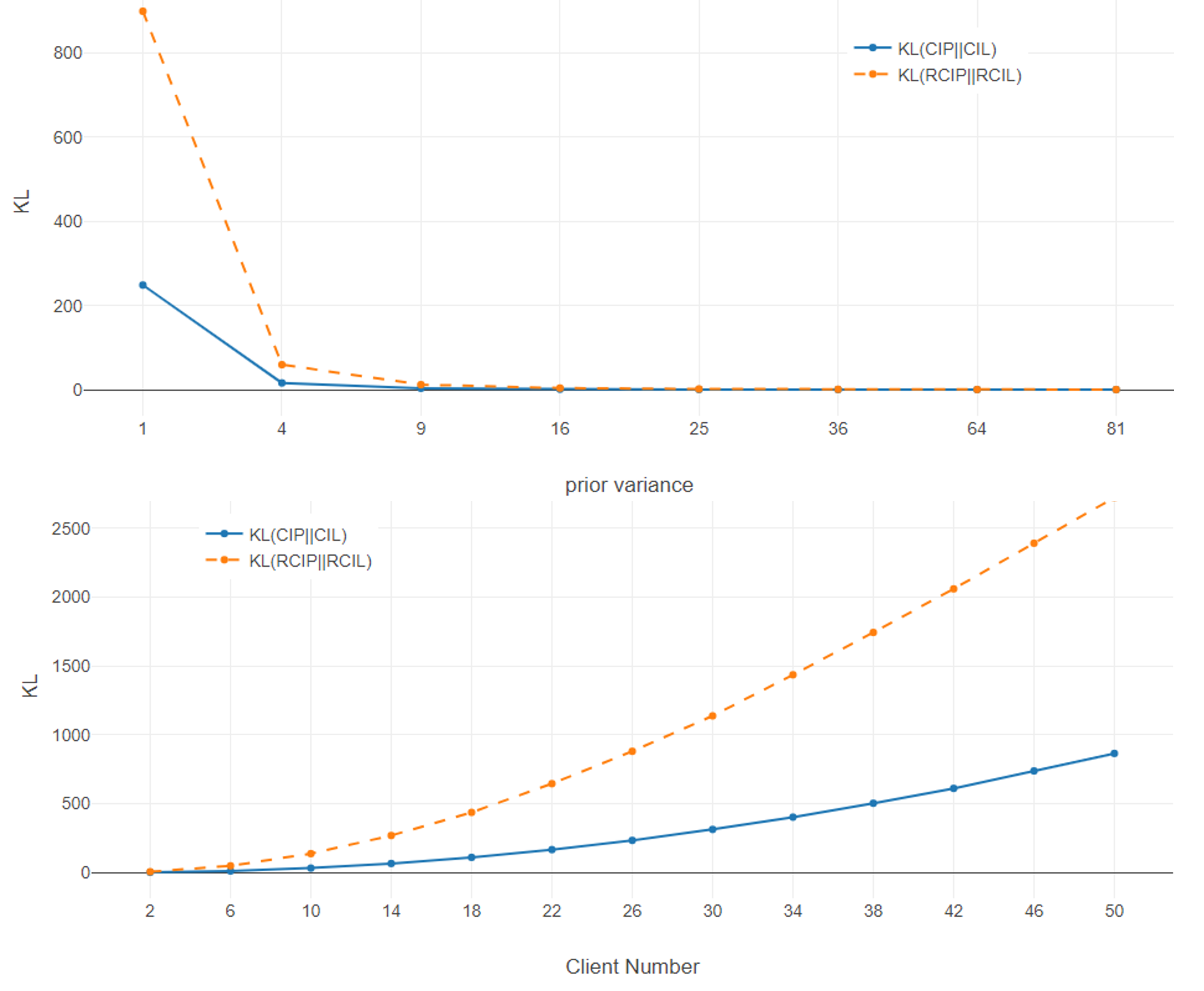}
% }
% \caption{Above is the KL wrt the prior variance $c_0$ when the prior mean is fixed, number of clients fixed to 6; 
% below is the KL wrt the number of client, with prior mean $w_0$ when the prior variance is fixed to 9}
% \label{fig:KL_reg}
% \end{figure}

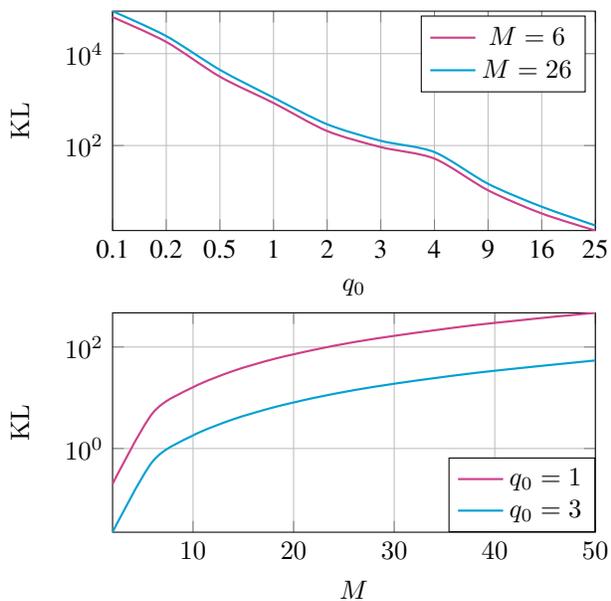
\begin{figure}[t]
    \centering
    \begin{subfigure}[t]{0.8\textwidth}
        \centering
        % \documentclass[tikz]{standalone}

% \usepackage{pgfplots,siunitx,mathtools}

% \pgfplotsset{compat=newest}

% \begin{document}

\begin{tikzpicture}
  \begin{axis}[
      ymode=log,
      domain=0:81,
      ymax=130,
      enlargelimits=false,
      ylabel=$\mathrm{KL}$,
    %   xlabel= Prior Variance $q_0$,
    xlabel= $q_0$,
      xtick={1,2,3,4,5,6,7,8,9,10},
      xticklabels={1,2,3,4,9,16,25,36,64,81},
      grid=both,
      width=8cm,
      height=4.5cm,
      decoration={name=none},
    ]
    \addplot [ thick, smooth, magenta!85!black] coordinates {
    (1.0, 16.1180477142334)
(2.0, 4.082729339599609)
(3.0, 1.81044340133667)
(4.0, 1.0258508920669556)
(5.0, 0.20264576375484467)
(6.0, 0.06420722603797913)
(7.0, 0.0261271670460701)
(8.0, 0.012668265961110592)
(9.0, 0.004008315969258547)
(10.0, 0.002501240698620677)
    }node[pos=0.95, anchor=east] {};
    \addplot [ thick, smooth, cyan!85!black] coordinates {
    (1.0, 124.98237609863281)
(2.0, 31.578432083129883)
(3.0, 14.115721702575684)
(4.0, 7.895726680755615)
(5.0, 1.5653138160705566)
(6.0, 0.4970061182975769)
(7.0, 0.20205634832382202)
(8.0, 0.09775172919034958)
(9.0, 0.031034959480166435)
(10.0, 0.01925462856888771)
    }node[pos=0.95, anchor=east] {} ;
    \legend{$M=6$,$M=26$}
   \end{axis}
\end{tikzpicture}

% \end{document}
        % \caption{Subcaption of the left diagram.}
        \label{Fig:kl}
    \end{subfigure}%
    \vskip 0.1cm
    \begin{subfigure}[t]{0.8\textwidth}
        \centering
        % \documentclass[tikz]{standalone}

% \usepackage{pgfplots,siunitx,mathtools}

% \pgfplotsset{compat=newest}

% \begin{document}

\begin{tikzpicture}
  \begin{axis}[
      ymode=log,
    %   domain=1:180,
    %   ymax=400,
      enlargelimits=false,
      ylabel=$\mathrm{KL}$,
      xlabel= $M$,
      grid=both,
      width=8cm,
      height=4.5cm,
      decoration={name=none},
      legend style={at={(1,0)},anchor=south east}
    ]
    \addplot [thick, smooth, magenta!85!black] coordinates {
    (2.0, 0.20044004917144775)
(6.0, 5.026119709014893)
(10.0, 16.1180477142334)
(14.0, 33.770240783691406)
(18.0, 57.843528747558594)
(22.0, 87.82818603515625)
(26.0, 124.98237609863281)
(30.0, 165.97596740722656)
(34.0, 213.87767028808594)
(38.0, 270.43927001953125)
(42.0, 328.4809875488281)
(46.0, 397.11773681640625)
(50.0, 470.9361267089844)
    }node[pos=0.95, anchor=east] {};
    \addplot [thick, smooth, cyan!85!black] coordinates {
    (2.0, 0.02240588329732418)
(6.0, 0.5611779093742371)
(10.0, 1.81044340133667)
(14.0, 3.765305519104004)
(18.0, 6.459330081939697)
(22.0, 9.917047500610352)
(26.0, 14.115721702575684)
(30.0, 18.897422790527344)
(34.0, 24.380002975463867)
(38.0, 30.77215003967285)
(42.0, 37.530616760253906)
(46.0, 45.26709747314453)
(50.0, 54.31648635864258)
    }node[pos=0.95, anchor=east] {};
    \legend{$q_0=1$,$q_0=3$}
   \end{axis}
\end{tikzpicture} 

% \end{document}
        % \caption{Subcaption of the right diagram.}
        \label{Fig:kl_client}
    \end{subfigure}
    % \captionsetup{justification=centering,margin=2cm}
    \caption{Evolution of the KL divergence in \eqref{eq:kl_gaussian} wrt $q_0$ (top) and number of client $M$ (bottom).}
    \label{Fig:reg_kl}
\end{figure}

Figure \ref{Fig:reg_kl} shows the evolution of the KL divergence in \eqref{eq:kl_gaussian} with respect to the prior variance $q_0$ (top panel for $M=6$ and $M=26$) and the number of clients $M$ (bottom panel for $q_0=1$, $q_0=3$).
In this figure, the KL decreases wrt prior variance $q_0$. Which can also be shown in the bottom panel for $q_0=1$, $q_0=3$, where the KL increase wrt the number of clients. 
%
%
% The bottom of Figure \ref{Fig:reg_kl} shows the KL divergence of weight distribution is increasing wrt M increasing.
%
These results are consistent with Theorem \ref{theorem_M} and \ref{theorem_q}, which was already predicting such behavior. 

% Figure \ref{fig:KL_sigma_weight8} show the KL divergence of independent fusion and correlated fusion when the prior weight mean is a little far away from the real weight. This figure show the KL increasing first and decrease rapidly. This is because we are very certain about the prior information, it's dominant even in local models.Therefore, the correlated fusion result is similar to independent fusion at the very low prior variance value. As the prior variance increase, the two distribution we become more different. However, as prior variance is less uncertain, the likelihood information will dominant, that's why correlated fusion and independent fusion are almost the same as a very big prior variance. 

% \begin{figure}
%     \centerline{\includegraphics[width=0.5\textwidth]{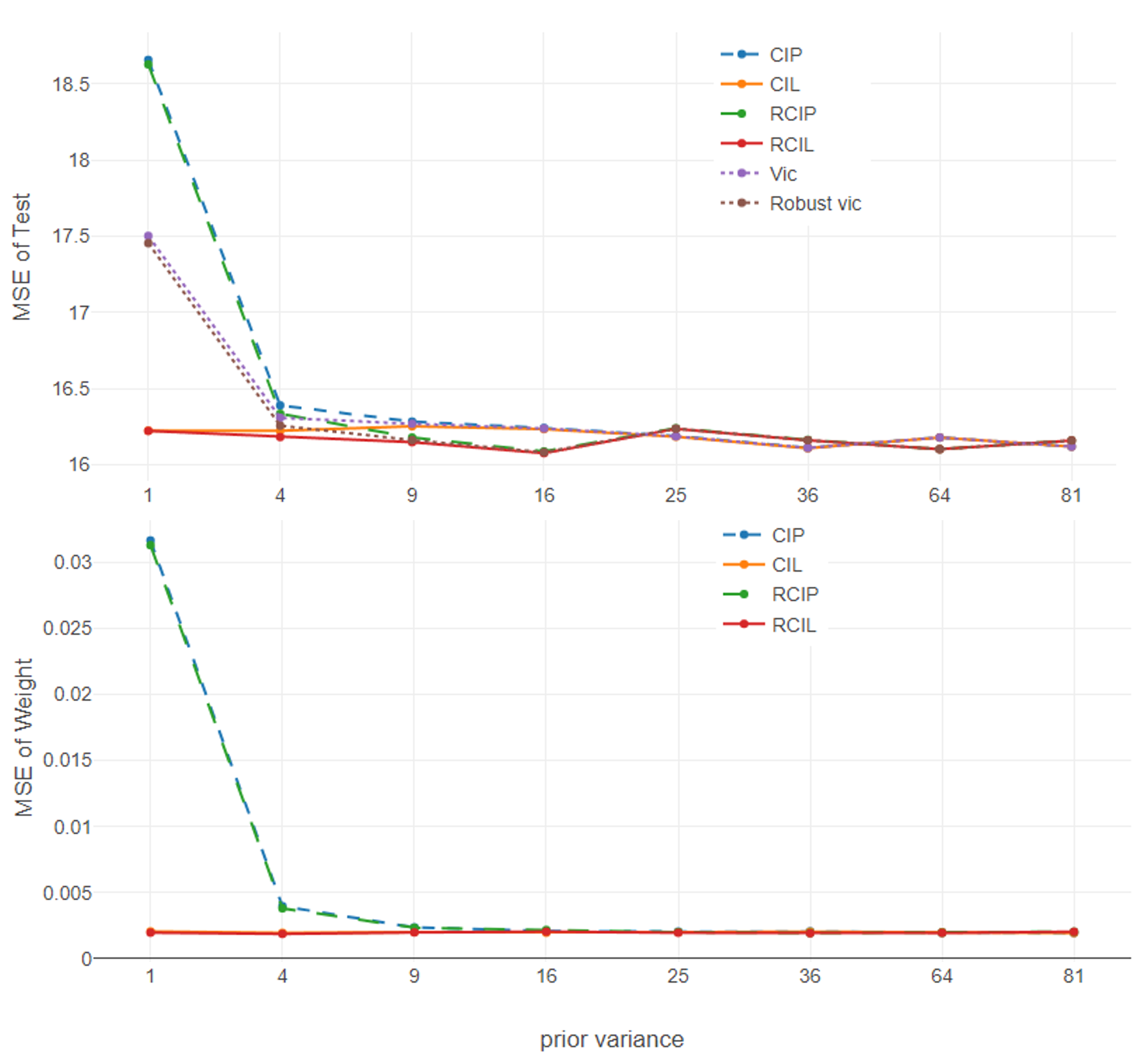}
%     }
%     \caption{The MSE of test and weight wrt the prior variance $c_0$ when the prior mean is fixed, number of clients is 6}
%     \label{fig:reg_MSE_test_q}
% \end{figure}

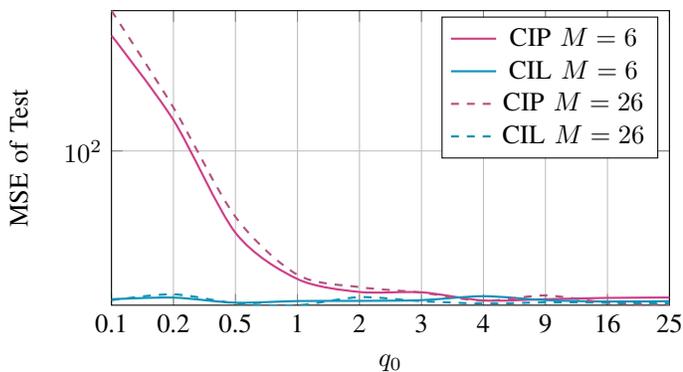
\begin{figure}[t]
    \centering
    \begin{subfigure}[t]{0.8\textwidth}
        \centering
        % \documentclass[tikz]{standalone}

% \usepackage{pgfplots,siunitx,mathtools}

% \pgfplotsset{compat=newest}

% \begin{document}

\begin{tikzpicture}
  \begin{axis}[
    %   ymode=log,
    %   domain=0:180,
    %   ymin=0,
    %   ymax=0.04,
      enlargelimits=false,
      ylabel=MSE of Test,
      xlabel= $q_0$,
      xtick={1,2,3,4,5,6,7,8,9,10},
      xticklabels={1,2,3,4,9,16,25,36,64,81},
      grid=both,
      width=9cm,
      height=5.5cm,
      decoration={name=none},
    ]
    \addplot [thick, smooth, magenta!85!black] coordinates {
    (1.0, 16.418201446533203)
(2.0, 16.15617561340332)
(3.0, 16.076948165893555)
(4.0, 16.20810317993164)
(5.0, 16.18543815612793)
(6.0, 16.10118293762207)
(7.0, 16.14032745361328)
(8.0, 16.2049617767334)
(9.0, 16.309301376342773)
(10.0, 16.1431884765625)
    }node[pos=0.95, anchor=east] {};
    \addplot [thick, smooth, cyan!70!black] coordinates {
        (1.0, 16.360355377197266)
(2.0, 16.14885711669922)
(3.0, 16.06818389892578)
(4.0, 16.205551147460938)
(5.0, 16.186166763305664)
(6.0, 16.10103988647461)
(7.0, 16.139892578125)
(8.0, 16.20522117614746)
(9.0, 16.309295654296875)
(10.0, 16.143360137939453)
      } node[pos=0.95, anchor=east] {};
%       \addplot [thick,mark = *, smooth, green!70!black] coordinates {
%         (1.0, 16.386266708374023)
% (2.0, 16.15638542175293)
% (3.0, 16.071332931518555)
% (4.0, 16.20641326904297)
% (5.0, 16.185745239257812)
% (6.0, 16.10102081298828)
% (7.0, 16.1400089263916)
% (8.0, 16.205007553100586)
% (9.0, 16.30921173095703)
% (10.0, 16.14332389831543)
%       } node[pos=0.95, anchor=east];
     \addplot [thick, smooth,dashed, magenta!70!black] coordinates {
        (1.0, 17.344886779785156)
(2.0, 16.588777542114258)
(3.0, 16.132247924804688)
(4.0, 16.28288459777832)
(5.0, 16.24576187133789)
(6.0, 16.0821475982666)
(7.0, 16.36749839782715)
(8.0, 16.288419723510742)
(9.0, 16.000526428222656)
(10.0, 16.207372665405273)
      } node[pos=0.95, anchor=east] {};
      \addplot [thick, smooth,dashed, cyan!70!black] coordinates {
        (1.0, 16.280431747436523)
(2.0, 16.340871810913086)
(3.0, 16.028976440429688)
(4.0, 16.224021911621094)
(5.0, 16.233869552612305)
(6.0, 16.079071044921875)
(7.0, 16.3658504486084)
(8.0, 16.28788185119629)
(9.0, 16.000158309936523)
(10.0, 16.207265853881836)
      } node[pos=0.95, anchor=east] {};
%       \addplot [thick, smooth,dashed, green!70!black] coordinates {
%         (1.0, 16.887346267700195)
% (2.0, 16.492740631103516)
% (3.0, 16.083637237548828)
% (4.0, 16.255054473876953)
% (5.0, 16.23600959777832)
% (6.0, 16.080820083618164)
% (7.0, 16.366357803344727)
% (8.0, 16.287105560302734)
% (9.0, 16.000730514526367)
% (10.0, 16.20700454711914)
%       } node[pos=0.95, anchor=east];
\legend{CIP $M=6$, CIL $M=6$,CIP $M=26$, CIL $M=26$}; %, VIC $M=26$
   \end{axis}
\end{tikzpicture} 

% \end{document}
        % \caption{Subcaption of the left diagram.}
        % \label{Fig:reg_q_test}
    \end{subfigure}
\caption{Test MSE with respect to $q_0$ for the distributed linear estimation in Section \ref{sec:exa}.}
    \label{Fig:reg_q_test}
\end{figure}
 
% \begin{figure}[h]
% \begin{subfigure}{\textwidth}
% \includegraphics[width=0.45\linewidth, height=6cm]{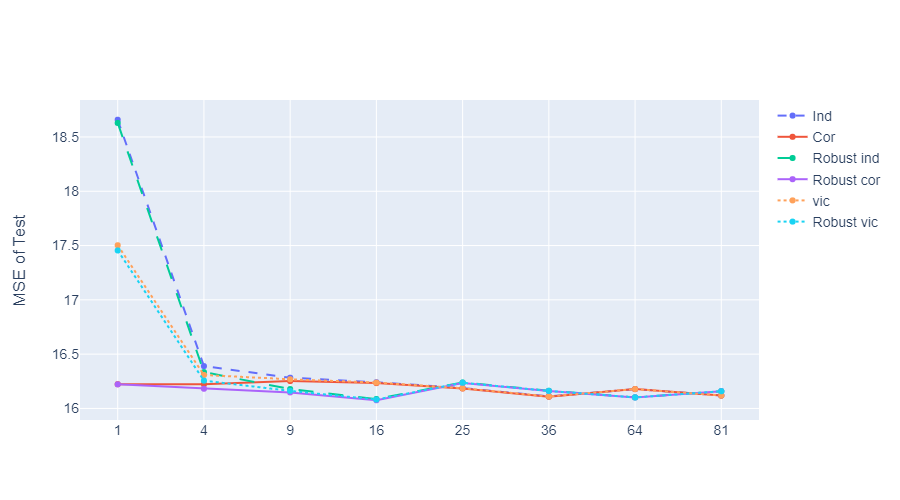}
% \caption{Caption1}
% \label{fig:reg_MSE_test_q}
% \end{subfigure}
% \begin{subfigure}{\textwidth}
% \includegraphics[width=0.9\linewidth, height=6cm]{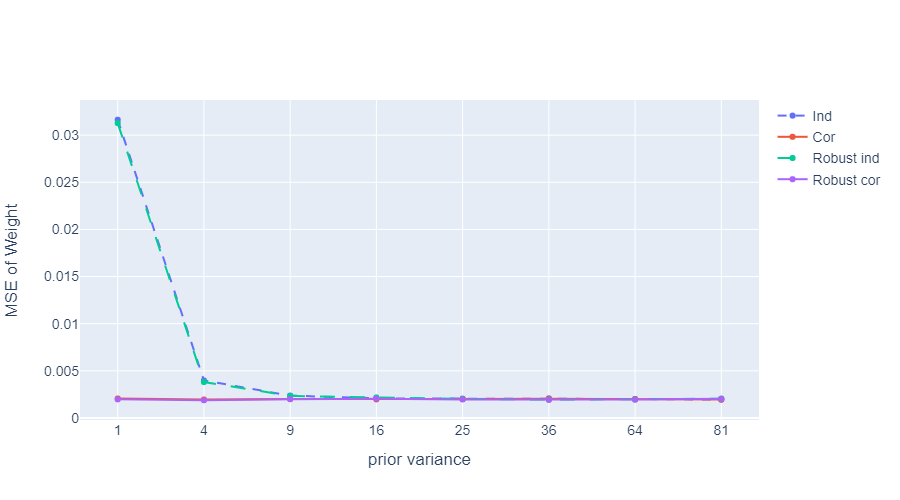}
% \caption{Caption 2}
% \label{fig:reg_MSE_weight_q}
% \end{subfigure}

% \caption{Caption for this figure with two images}
% \label{fig:image2}
% \end{figure}

% \begin{figure}[ht]
%   \centering
%   \begin{subfigure}{\linewidth}
%     \centering
%     \includegraphics[width=.95\linewidth]{figs/fig_jan/reg_MSE_test_q.png}
%     % \caption{Caption for image 1}
%   \end{subfigure}

%   \begin{subfigure}{\linewidth}
%     \centering
%     \includegraphics[width=.95\linewidth]{figs/fig_jan/reg_MSE_weight_q.png}
%     % \caption{Caption for image 2}
%   \end{subfigure}  
%   \caption{A caption for both images}  
% \end{figure}  

Figure \ref{Fig:reg_q_test} shows the test MSE wrt the prior variance $q_0$ for a fixed number of collaborating clients of 6 and 26.
This figure shows that all tested configurations converge to similar results when the prior variance is large enough, that is when the a priori information on $\btheta$ is non-informative. However, when given $p(\btheta)$ is informative, represented by smaller variance values, the impact of properly accounting for it would be more apparent. Independently of the number of users, CIL seems to exhibit stable results, while CIP has a larger impact on the result as the prior becomes narrower. 
% While BPoEs shown a relatively good results between CIP and CIL. 
% Because the BPoEs don't share the same prior, thus the prior information will not have that effect as prior in CIP.

% \begin{figure}
% \centerline{\includegraphics[width=0.45\textwidth]{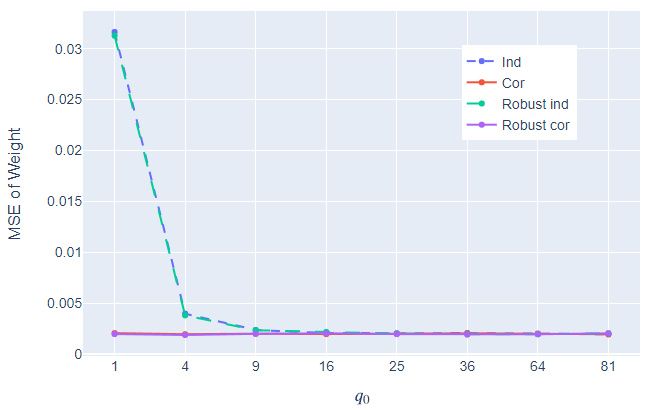}
% }
% \caption{the MSE of weight wrt the prior variance $c_0$ when the prior mean is fixed, number of clients is 6.}
% \label{fig:reg_MSE_weight_q0}
% \end{figure}

\begin{figure}[t]
    \centering
    \begin{subfigure}[t]{0.8\textwidth}
        \centering
        % \documentclass[tikz]{standalone}

% \usepackage{pgfplots,siunitx,mathtools}

% \pgfplotsset{compat=newest}

% \begin{document}

\begin{tikzpicture}
  \begin{axis}[
    %   ymode=log,
      domain=1:51,
      ymax=20,
      enlargelimits=false,
      ylabel=MSE of Test,
      xlabel= $M$,
      grid=both,
      width=9cm,
      height=5.5cm,
      decoration={name=none},
      legend style={at={(0,1)},anchor=north west}
    ]
    \addplot [thick, smooth, magenta!85!black] coordinates {
    (2.0, 16.23455047607422)
(6.0, 16.418201446533203)
(10.0, 16.299339294433594)
(14.0, 16.477399826049805)
(18.0, 16.78753662109375)
(22.0, 16.8923397064209)
(26.0, 17.344886779785156)
(30.0, 17.50790786743164)
(34.0, 17.924619674682617)
(38.0, 18.58188819885254)
(42.0, 18.907922744750977)
(46.0, 19.261962890625)
(50.0, 19.879724502563477)
    }node[pos=0.95, anchor=east] {};
    \addplot [thick, smooth, cyan!70!black] coordinates {
        (2.0, 16.22841453552246)
(6.0, 16.360355377197266)
(10.0, 16.138107299804688)
(14.0, 16.206796646118164)
(18.0, 16.272436141967773)
(22.0, 16.16705894470215)
(26.0, 16.280431747436523)
(30.0, 16.090641021728516)
(34.0, 16.183456420898438)
(38.0, 16.249225616455078)
(42.0, 16.28784942626953)
(46.0, 16.046762466430664)
(50.0, 16.11885643005371)
      } node[pos=0.95, anchor=east] {};
%       \addplot [thick, smooth, green!70!black] coordinates {
%         (2.0, 16.231836318969727)
% (6.0, 16.386266708374023)
% (10.0, 16.231914520263672)
% (14.0, 16.3582706451416)
% (18.0, 16.568313598632812)
% (22.0, 16.573501586914062)
% (26.0, 16.887346267700195)
% (30.0, 16.917524337768555)
% (34.0, 17.16686248779297)
% (38.0, 17.558359146118164)
% (42.0, 17.724746704101562)
% (46.0, 17.917560577392578)
% (50.0, 18.213857650756836)
%       } node[pos=0.95, anchor=east];
    \addplot [thick, smooth,dashed, magenta!70!black] coordinates {
        (2.0, 16.22533416748047)
(6.0, 16.076948165893555)
(10.0, 16.063566207885742)
(14.0, 16.197534561157227)
(18.0, 16.38532257080078)
(22.0, 16.289003372192383)
(26.0, 16.132247924804688)
(30.0, 16.213294982910156)
(34.0, 16.337968826293945)
(38.0, 16.48887825012207)
(42.0, 16.52815055847168)
(46.0, 16.62458610534668)
(50.0, 16.529708862304688)
      } node[pos=0.95, anchor=east] {};
      \addplot [thick, smooth,dashed, cyan!70!black] coordinates {
        (2.0, 16.22494888305664)
(6.0, 16.06818389892578)
(10.0, 16.04627799987793)
(14.0, 16.164636611938477)
(18.0, 16.326311111450195)
(22.0, 16.20682716369629)
(26.0, 16.028976440429688)
(30.0, 16.053985595703125)
(34.0, 16.116470336914062)
(38.0, 16.258472442626953)
(42.0, 16.206928253173828)
(46.0, 16.25568389892578)
(50.0, 16.130817413330078)
      } node[pos=0.95, anchor=east] {};
%       \addplot [thick, smooth,dashed, green!70!black] coordinates {
%         (2.0, 16.225576400756836)
% (6.0, 16.071332931518555)
% (10.0, 16.057506561279297)
% (14.0, 16.1795711517334)
% (18.0, 16.36699867248535)
% (22.0, 16.259323120117188)
% (26.0, 16.083637237548828)
% (30.0, 16.144960403442383)
% (34.0, 16.23536491394043)
% (38.0, 16.37611961364746)
% (42.0, 16.399229049682617)
% (46.0, 16.464841842651367)
% (50.0, 16.387828826904297)
%       } node[pos=0.95, anchor=east];
    \legend{CIP $q_0=1$,CIL $q_0=1$,CIP $q_0=3$,CIL $q_0=3$} %,VIC $q_0=1$,VIC $q_0=3$
   \end{axis}
\end{tikzpicture} 

% \end{document}
        % \caption{Subcaption of the left diagram.}
        \label{Fig:reg_client_test}
    \end{subfigure}
    \caption{The test MSE wrt $M$ under different prior information when prior variance at $1$ and $3$.}
    \label{Fig:reg_client_test}
\end{figure}
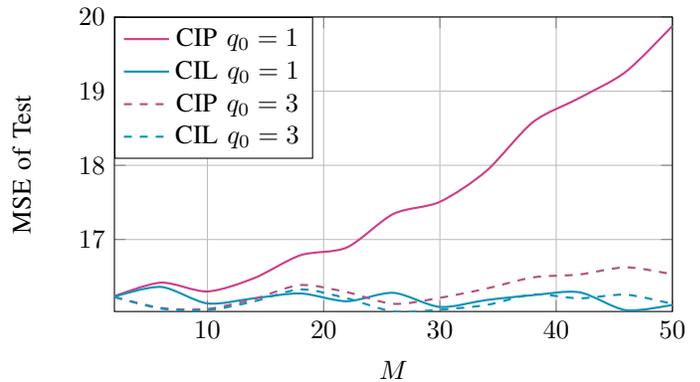

Figure \ref{Fig:reg_client_test} shows the MSE of test wrt the number of clients $M$, as well as for the two representative values prior variance $q_0=1$ and $q_0=3$. 
The results suggest that CIL can outperform CIP as we increasing the number of clients. In that situation, for growing $M$, CIP uses the prior information repeatedly, causing the results to be biased unless this fact is taken into consideration as in the CIL appoach. This effect is even more notorious when the a priori information is not accurate (e.g., the mean value is far from the true value of $\btheta$). %which make sense the results will be biased, especially the prior information is not accurate.

\subsection{Fusion of local class posterior probabilities}\label{sec:exb}

We are now interested in demonstrating the proposed Bayesian data fusion methodology when priors are shared across agents in a classification problem.
Similarly to BCM \cite{tresp2000bayesian}, we consider the fusion of classification results as the fusion of probability functions. Analogous to \eqref{eq:CIL}, the CIL fusion,  given input $\mathbf{x}$, results in a discrete posterior probability distribution:
\begin{align}\label{eq:clf_probability}
P(C=c|\mathcal{D}_1,\dots,\mathcal{D}_M, \mathbf{x}) \propto \frac{\Pi_{m=1}^M P(C=c|\mathcal{D}_m,\mathbf{x})}{P^{M-1}(C=c)},
\end{align}
where $c$ is the classification label with $c\in\{1,\dots,L\}$. The a priori probability of the classes, shared among agents, is denoted by $P(C=c)$. 
To obtain the local class posterior probabilities $P(C|\mathcal{D}_m,\mathbf{x})$, we consider an linear discriminant analysis (LDA) model, which optimally utilizes the shared prior, then fuse the local posteriors to obtain the final result in \eqref{eq:clf_probability}. %The following results show the difference between the CIP and CIL under this scenario.

In this experiment, for the sake of clarity, we are considering an $L=2$ classes classification problem. The objective is thus to estimate their class posterior probabilities $P(C=i|\mathcal{D}_1,\dots,\mathcal{D}_M, \mathbf{x})$ with $i\in\{1,2\}$. 
% We generated a synthetic dataset by setting the two class distribution as $[0.6,0.4]$,
% % distribution of the two classes as $[0.6, 0.4]$. 
% and by sampling from two normal distributions. The total number of data points is $1000$, which were randomly split into multiple clients using a Dirichlet distribution. 
%
We generated a synthetic dataset, consisting of $1000$ data points, by sampling from two normal distributions (for class $i$, the conditional likelihood of the generated data is $p(\mathcal{D}_m|\mathbf{x},C=i) = \mathcal{N}(\mathbf{1}_i,\mathbf{I})$, where $\mathbf{1}_i$ is an all-zeroes vector except for a $1$ at the $i$-th element. The dimension of the data $\mathbf{y}$ is $10$. The prior class probabilities were $0.6$ and $0.4$ for classes $1$ and $2$, respectively. The resulting dataset was divided into $M$ subsets of the same dimension $1000/M$, such that these are \textit{i.i.d.} distributed.

\begin{figure}[t]
    \centering
    \begin{subfigure}[t]{0.8\textwidth}
        \centering
        % \documentclass[tikz]{standalone}

% \usepackage{pgfplots,siunitx,mathtools}

% \pgfplotsset{compat=newest}

% \begin{document}

\begin{tikzpicture}
  \begin{axis}[
      ymode=log,
    %   domain=1:33,
    %   ymax=0.5,
      enlargelimits=false,
      ylabel=$\mathrm{KL}$,
      xlabel= $P_1$,
      grid=both,
      width=8cm,
      height=4.5cm,
      decoration={name=none},
      legend style={at={(0.99,0.09)},anchor=south east}
    ]
    \addplot [ thick, smooth, magenta!85!black] coordinates {
        (0.024390243902439025, 1.769158843058432)
(0.04878048780487805, 1.2084791060688598)
(0.07317073170731707, 0.8806949751355362)
(0.0975609756097561, 0.635036224349562)
(0.12195121951219512, 0.469916182040783)
(0.14634146341463414, 0.33753023361794243)
(0.17073170731707318, 0.23553538551517203)
(0.1951219512195122, 0.16252141063146378)
(0.21951219512195122, 0.11865075925612699)
(0.24390243902439024, 0.08517499798731626)
(0.2682926829268293, 0.062429175311709946)
(0.2926829268292683, 0.047028477754790264)
(0.3170731707317073, 0.03552725714890411)
(0.34146341463414637, 0.026204465720612455)
(0.36585365853658536, 0.018778033935006665)
(0.3902439024390244, 0.012671259293574882)
(0.4146341463414634, 0.007678989112564052)
(0.43902439024390244, 0.00389747307372915)
(0.4634146341463415, 0.0013861240669430864)
(0.4878048780487805, 0.00015183799211106004)
(0.5121951219512195, 0.00015082450312451444)
(0.5365853658536586, 0.0013730116958708292)
(0.5609756097560976, 0.003915979209475125)
(0.5853658536585366, 0.008032885330258586)
(0.6097560975609756, 0.014324566536573456)
(0.6341463414634146, 0.02346394699136167)
(0.6585365853658537, 0.03631564218489927)
(0.6829268292682927, 0.052283653965151444)
(0.7073170731707317, 0.07773256790677911)
(0.7317073170731707, 0.10866995060757714)
(0.7560975609756098, 0.14813535599748856)
(0.7804878048780488, 0.19492388884337647)
(0.8048780487804879, 0.2438668965964245)
(0.8292682926829268, 0.2921059237005871)
(0.8536585365853658, 0.33832349454913113)
(0.8780487804878049, 0.3770373212993625)
(0.9024390243902439, 0.4266824771216467)
(0.926829268292683, 0.565599559925802)
(0.9512195121951219, 0.9149154879244662)
(0.975609756097561, 1.6081048635941841)
    }node[pos=0.95, anchor=east] {};
    \addplot [ thick, smooth, cyan!85!black] coordinates {
       (0.024390243902439025, 3.8997533157149586)
(0.04878048780487805, 2.638913676591632)
(0.07317073170731707, 1.784949358574845)
(0.0975609756097561, 1.277253050573509)
(0.12195121951219512, 0.8886515861164938)
(0.14634146341463414, 0.5808159953443414)
(0.17073170731707318, 0.40249796958971396)
(0.1951219512195122, 0.28560533100335717)
(0.21951219512195122, 0.21252177538721045)
(0.24390243902439024, 0.17322145029771147)
(0.2682926829268293, 0.13200799819039175)
(0.2926829268292683, 0.10930768756441292)
(0.3170731707317073, 0.08208027788216105)
(0.34146341463414637, 0.059974361103670805)
(0.36585365853658536, 0.04448823886045813)
(0.3902439024390244, 0.02909636282770245)
(0.4146341463414634, 0.017708416762254646)
(0.43902439024390244, 0.008781882049390472)
(0.4634146341463415, 0.0031090067052287063)
(0.4878048780487805, 0.00030509135436759534)
(0.5121951219512195, 0.0003062035728529544)
(0.5365853658536586, 0.0026740773844858567)
(0.5609756097560976, 0.00763125194384854)
(0.5853658536585366, 0.017567878369775504)
(0.6097560975609756, 0.031352426395695604)
(0.6341463414634146, 0.05483349488828601)
(0.6585365853658537, 0.08304321348955866)
(0.6829268292682927, 0.1167978604176099)
(0.7073170731707317, 0.16604012244788036)
(0.7317073170731707, 0.24823022842127287)
(0.7560975609756098, 0.3374850796825608)
(0.7804878048780488, 0.44079603554099234)
(0.8048780487804879, 0.5690621153470392)
(0.8292682926829268, 0.6790026194500913)
(0.8536585365853658, 0.8072127082186339)
(0.8780487804878049, 0.9069226334320659)
(0.9024390243902439, 0.9777413163804219)
(0.926829268292683, 1.1643082958311737)
(0.9512195121951219, 1.9072343441617838)
(0.975609756097561, 3.245596329382494)
    }node[pos=0.95, anchor=east] {};
    \legend{$M=6$,$M=12$}
   \end{axis}
\end{tikzpicture}

% \end{document}
        % \caption{Subcaption of the left diagram.}
        % \label{Fig:clf_pro_kl_q}
    \end{subfigure}%
    \vskip 0.1cm
    % \vspace{1mm}
    \begin{subfigure}[t]{0.8\textwidth}
        \centering
        % \documentclass[tikz]{standalone}

% \usepackage{pgfplots,siunitx,mathtools}

% \pgfplotsset{compat=newest}

% \begin{document}

\begin{tikzpicture}
  \begin{axis}[
      ymode=log,
    %   domain=1:33,
    %   ymax=0.5,
      enlargelimits=false,
      ylabel=$\mathrm{KL}$,
      xlabel= $M$,
      xtick={1,2,3,4,5,6,7,8},
      xticklabels={3, 6, 9, 12, 15, 18, 21, 24},
      grid=both,
      width=8cm,
      height=4.5cm,
      decoration={name=none},
    ]
    \addplot [ thick, smooth, magenta!85!black] coordinates {
        (1, 0.21066455740666498)
(2, 0.6117081218781895)
(3, 0.9557904491160777)
(4, 1.2266962864917903)
(5, 1.3673500658084823)
(6, 1.437918841605807)
(7, 1.4369407357529034)
(8, 1.4071897551860575)
    }node[pos=0.95, anchor=east] {};
    \addplot [ thick, smooth, cyan!85!black] coordinates {
       (1, 0.0038183199130256583)
(2, 0.010434046262900162)
(3, 0.017850087820631175)
(4, 0.024838308606068643)
(5, 0.030572194221411563)
(6, 0.035621465079733225)
(7, 0.0382580053619478)
(8, 0.04198566237981956)
    }node[pos=0.95, anchor=east] {};
    \legend{$P_1=0.1$,$P_1=0.4$}
   \end{axis}
\end{tikzpicture}

% \end{document}
        % \caption{Subcaption of the right diagram.}
        % \label{Fig:clf_pro_kl}
    \end{subfigure}
    \caption{KL divergence between CIL and CIP class posteriors as a function of (top) the class $1$ prior probability $P_1$ and (bottom) number of clients $M$. Distributed classification problem from Section \ref{sec:exb}.}
    \label{Fig:clf_pro_kl}
\end{figure}
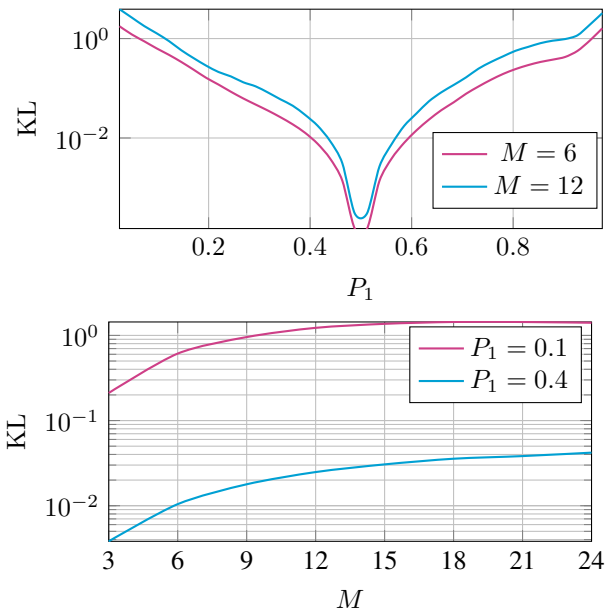

Figure \ref{Fig:clf_pro_kl} shows the evolution of the KL divergence of class posterior distribution of CIP and CIL with respect to (top panel) the first class prior probability $P_1\triangleq P(C=1)$ for $M=6$ and $M=12$ clients; and (bottom panel) the number of clients $M$ for $P_1=0.1$ and $P_1=0.4$.
In the top panel, the figure shows that the KL divergence is minimized when the class prior probabilities are $P_1=P_2=1/2$, such that non-informative priors were employed. % The KL divergence also decreases when the class prior probability is approaching the non-informative prior probability as $[0.5,0.5]$. 
Additionally, the KL divergence for $M=12$ clients is larger than the one for $M=6$ clients, which is also shown in the bottom panel where the KL divergence increases with $M$.

Figure \ref{Fig:clf_pro_q_acc} shows the accuracy performance of posterior class distribution of both CIP and CIL with respect to $P_1$.
Similarly to the KL analysis in Figure \ref{Fig:clf_pro_kl}, the accuracy of both approaches are similar for non-informative class priors $P_1=P_2=1/2$. %We can tell from figure \ref{Fig:clf_pro_q_acc}, their accuracy of CIP and CIL is the almost the same when prior probability $P_1$ is close to 0.5, as the conclusion in the above figure KL divergence analysis. 
On the other hand, when the prior probability is far from the actual class prior probability, CIP's accuracy is degraded since that mismatched prior is overused, whereas for CIL the performance does not drop dramatically. When the prior probability is close to real class distribution (that is, $P_1=0.6$ and $P_2=0.4$), both of these two methods achieve relatively decent results.
%
%
% The CIL give the most stable result even the prior distribution for class is not accurate. However, after the prior approaching the real prior ($P_1=0.6$), the Global and CIP can achieve the good performance. The CIL is trying to cancel out the prior information for double counting, while the CIP reuse the prior information many times, which make sense when prior information is good enough, the CIP will also get a good performance. 
%
% Figure \ref{Fig:clf_pro_kl} below show the KL divergence of CIL and CIP wrt the number of Clients. As the number increasing, they are less similar to each other. 
%
%
Figure \ref{Fig:clf_pro_client_acc} shows the classification accuracy, this time in terms of the number of clients $M$ and two values for $P_1$.
The results show that CIL outperforms CIP, similarly to Figure \ref{Fig:clf_pro_q_acc}. %In this figure, the CIL still can achieve better and more stable results than CIP. Even though we don't saw the trend of result as in regression that performance is decreasing wrt M, which can be account to the model complexity. As LDA is a linear classifier, increasing the number of clients, mean we are using more linear classifiers together decide the boundary of the class, which make the whole model more complex and accurate. 

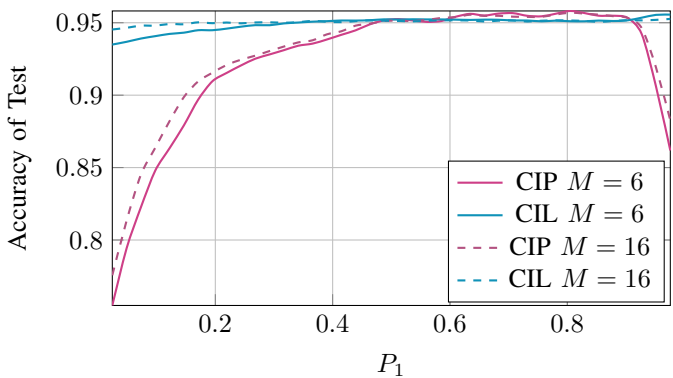
\begin{figure}[t]
    \centering
    \begin{subfigure}[t]{0.8\textwidth}
        \centering
        % \documentclass[tikz]{standalone}

% \usepackage{pgfplots,siunitx,mathtools}

% \pgfplotsset{compat=newest}

% \begin{document}

\begin{tikzpicture}
  \begin{axis}[
    %   ymode=log,
    %   domain=1:33,
    %   ymax=0.9,
      enlargelimits=false,
      ylabel= Accuracy of Test,
      xlabel= $P_1$,
      grid=both,
      width=9cm,
      height=5.5cm,
      decoration={name=none},
      legend style={at={(0.99,0.01)},anchor=south east}
    ]
%     \addplot [thick, smooth, green!70!black] coordinates {
%         (0.024390243902439025, 0.7450000000000001)
% (0.04878048780487805, 0.7900000000000003)
% (0.07317073170731707, 0.7999999999999998)
% (0.0975609756097561, 0.8349999999999997)
% (0.12195121951219512, 0.8550000000000004)
% (0.14634146341463414, 0.8649999999999999)
% (0.17073170731707318, 0.875)
% (0.1951219512195122, 0.905)
% (0.21951219512195122, 0.9099999999999998)
% (0.24390243902439024, 0.9200000000000002)
% (0.2682926829268293, 0.9200000000000002)
% (0.2926829268292683, 0.925)
% (0.3170731707317073, 0.9300000000000002)
% (0.34146341463414637, 0.9300000000000002)
% (0.36585365853658536, 0.9350000000000004)
% (0.3902439024390244, 0.9350000000000004)
% (0.4146341463414634, 0.9399999999999996)
% (0.43902439024390244, 0.9399999999999996)
% (0.4634146341463415, 0.95)
% (0.4878048780487805, 0.95)
% (0.5121951219512195, 0.9549999999999998)
% (0.5365853658536586, 0.9549999999999998)
% (0.5609756097560976, 0.95)
% (0.5853658536585366, 0.95)
% (0.6097560975609756, 0.95)
% (0.6341463414634146, 0.96)
% (0.6585365853658537, 0.9549999999999998)
% (0.6829268292682927, 0.9549999999999998)
% (0.7073170731707317, 0.96)
% (0.7317073170731707, 0.9549999999999998)
% (0.7560975609756098, 0.95)
% (0.7804878048780488, 0.96)
% (0.8048780487804879, 0.96)
% (0.8292682926829268, 0.9549999999999998)
% (0.8536585365853658, 0.9549999999999998)
% (0.8780487804878049, 0.9549999999999998)
% (0.9024390243902439, 0.95)
% (0.926829268292683, 0.925)
% (0.9512195121951219, 0.9000000000000005)
% (0.975609756097561, 0.8499999999999995)
%       }  node[pos=0.95, anchor=east] {};
    \addplot [thick, smooth, magenta!85!black] coordinates {
    (0.024390243902439025, 0.7547999999999999)
(0.04878048780487805, 0.7943000000000001)
(0.07317073170731707, 0.8226499999999998)
(0.0975609756097561, 0.8478999999999999)
(0.12195121951219512, 0.86335)
(0.14634146341463414, 0.8791000000000001)
(0.17073170731707318, 0.8968999999999998)
(0.1951219512195122, 0.9096500000000001)
(0.21951219512195122, 0.9154000000000001)
(0.24390243902439024, 0.9205500000000001)
(0.2682926829268293, 0.9252000000000001)
(0.2926829268292683, 0.9281500000000003)
(0.3170731707317073, 0.9311000000000001)
(0.34146341463414637, 0.9341500000000003)
(0.36585365853658536, 0.9356000000000003)
(0.3902439024390244, 0.93845)
(0.4146341463414634, 0.9416499999999998)
(0.43902439024390244, 0.94475)
(0.4634146341463415, 0.9488999999999999)
(0.4878048780487805, 0.9516999999999998)
(0.5121951219512195, 0.95255)
(0.5365853658536586, 0.95155)
(0.5609756097560976, 0.9506)
(0.5853658536585366, 0.9512)
(0.6097560975609756, 0.9541499999999998)
(0.6341463414634146, 0.9561499999999998)
(0.6585365853658537, 0.955)
(0.6829268292682927, 0.9563499999999999)
(0.7073170731707317, 0.9566499999999999)
(0.7317073170731707, 0.9547999999999998)
(0.7560975609756098, 0.9544499999999999)
(0.7804878048780488, 0.9562499999999998)
(0.8048780487804879, 0.9581499999999998)
(0.8292682926829268, 0.9566999999999999)
(0.8536585365853658, 0.9548499999999999)
(0.8780487804878049, 0.9548499999999999)
(0.9024390243902439, 0.95345)
(0.926829268292683, 0.9422000000000001)
(0.9512195121951219, 0.9060000000000001)
(0.975609756097561, 0.8618000000000001)
    }node[pos=0.95, anchor=east] {};
    \addplot [thick, smooth, cyan!70!black] coordinates {
        (0.024390243902439025, 0.9349500000000003)
(0.04878048780487805, 0.9368500000000001)
(0.07317073170731707, 0.9390999999999999)
(0.0975609756097561, 0.9407499999999998)
(0.12195121951219512, 0.9422999999999996)
(0.14634146341463414, 0.9431499999999997)
(0.17073170731707318, 0.9448499999999997)
(0.1951219512195122, 0.9448499999999999)
(0.21951219512195122, 0.9459499999999998)
(0.24390243902439024, 0.9473)
(0.2682926829268293, 0.9483499999999998)
(0.2926829268292683, 0.9483)
(0.3170731707317073, 0.94955)
(0.34146341463414637, 0.95015)
(0.36585365853658536, 0.9507499999999998)
(0.3902439024390244, 0.9510999999999998)
(0.4146341463414634, 0.95155)
(0.43902439024390244, 0.9515)
(0.4634146341463415, 0.95175)
(0.4878048780487805, 0.9523499999999997)
(0.5121951219512195, 0.9521999999999998)
(0.5365853658536586, 0.9524000000000001)
(0.5609756097560976, 0.9522999999999999)
(0.5853658536585366, 0.9522999999999999)
(0.6097560975609756, 0.95215)
(0.6341463414634146, 0.9519499999999997)
(0.6585365853658537, 0.95215)
(0.6829268292682927, 0.95205)
(0.7073170731707317, 0.9516)
(0.7317073170731707, 0.95155)
(0.7560975609756098, 0.951)
(0.7804878048780488, 0.9514499999999999)
(0.8048780487804879, 0.95105)
(0.8292682926829268, 0.951)
(0.8536585365853658, 0.9514999999999999)
(0.8780487804878049, 0.9511999999999999)
(0.9024390243902439, 0.9516499999999999)
(0.926829268292683, 0.9534999999999999)
(0.9512195121951219, 0.9554499999999999)
(0.975609756097561, 0.9555999999999999)
      }  node[pos=0.95, anchor=east] {};
      \addplot [thick, smooth, dashed, magenta!70!black] coordinates {
        (0.024390243902439025, 0.7756500000000002)
(0.04878048780487805, 0.8134000000000001)
(0.07317073170731707, 0.8451499999999998)
(0.0975609756097561, 0.8636499999999999)
(0.12195121951219512, 0.8820500000000001)
(0.14634146341463414, 0.89865)
(0.17073170731707318, 0.9095)
(0.1951219512195122, 0.9155)
(0.21951219512195122, 0.9209)
(0.24390243902439024, 0.9245000000000003)
(0.2682926829268293, 0.9273000000000002)
(0.2926829268292683, 0.9305000000000001)
(0.3170731707317073, 0.9336000000000004)
(0.34146341463414637, 0.9364500000000001)
(0.36585365853658536, 0.9379)
(0.3902439024390244, 0.9419999999999996)
(0.4146341463414634, 0.9440999999999999)
(0.43902439024390244, 0.948)
(0.4634146341463415, 0.9492999999999999)
(0.4878048780487805, 0.95125)
(0.5121951219512195, 0.95135)
(0.5365853658536586, 0.9512)
(0.5609756097560976, 0.95195)
(0.5853658536585366, 0.9533499999999999)
(0.6097560975609756, 0.9539)
(0.6341463414634146, 0.9555499999999998)
(0.6585365853658537, 0.9547499999999998)
(0.6829268292682927, 0.9547999999999999)
(0.7073170731707317, 0.9541999999999997)
(0.7317073170731707, 0.9540499999999996)
(0.7560975609756098, 0.9541)
(0.7804878048780488, 0.9556499999999999)
(0.8048780487804879, 0.95695)
(0.8292682926829268, 0.9562999999999999)
(0.8536585365853658, 0.9555999999999999)
(0.8780487804878049, 0.95455)
(0.9024390243902439, 0.9533999999999998)
(0.926829268292683, 0.94635)
(0.9512195121951219, 0.9190999999999999)
(0.975609756097561, 0.8825999999999999)
      }  node[pos=0.95, anchor=east] {};
      \addplot [thick, smooth,dashed, cyan!70!black] coordinates {
        (0.024390243902439025, 0.9453499999999998)
(0.04878048780487805, 0.9464999999999999)
(0.07317073170731707, 0.948)
(0.0975609756097561, 0.9479999999999998)
(0.12195121951219512, 0.9491499999999999)
(0.14634146341463414, 0.9489500000000001)
(0.17073170731707318, 0.9502999999999997)
(0.1951219512195122, 0.9496999999999999)
(0.21951219512195122, 0.95015)
(0.24390243902439024, 0.9496000000000001)
(0.2682926829268293, 0.9501999999999999)
(0.2926829268292683, 0.9502999999999997)
(0.3170731707317073, 0.9500499999999998)
(0.34146341463414637, 0.9503499999999998)
(0.36585365853658536, 0.9513499999999999)
(0.3902439024390244, 0.9506499999999999)
(0.4146341463414634, 0.9503999999999998)
(0.43902439024390244, 0.9510999999999998)
(0.4634146341463415, 0.9516499999999999)
(0.4878048780487805, 0.9513999999999999)
(0.5121951219512195, 0.95125)
(0.5365853658536586, 0.9509499999999999)
(0.5609756097560976, 0.9513499999999998)
(0.5853658536585366, 0.9516999999999999)
(0.6097560975609756, 0.9516499999999998)
(0.6341463414634146, 0.9515499999999999)
(0.6585365853658537, 0.9508999999999997)
(0.6829268292682927, 0.9514499999999999)
(0.7073170731707317, 0.9515999999999998)
(0.7317073170731707, 0.9508500000000001)
(0.7560975609756098, 0.9513999999999997)
(0.7804878048780488, 0.9516499999999999)
(0.8048780487804879, 0.95155)
(0.8292682926829268, 0.9512499999999998)
(0.8536585365853658, 0.9505999999999999)
(0.8780487804878049, 0.9511999999999999)
(0.9024390243902439, 0.9517500000000001)
(0.926829268292683, 0.9516499999999999)
(0.9512195121951219, 0.9519999999999998)
(0.975609756097561, 0.9526499999999999)
      }  node[pos=0.95, anchor=east] {};
    \legend{CIP $M=6$,CIL $M=6$,CIP $M=16$,CIL $M=16$} 
    % #Global $M=6$,
   \end{axis}
\end{tikzpicture} 

% \end{document}
        % \caption{Subcaption of the left diagram.}
        \label{Fig:clf_pro_q_acc}
    \end{subfigure}
    \caption{The accuracy with respect to the class 1 prior probability for various numbers of clients $M$ and fixed amount of total data.}
    \label{Fig:clf_pro_q_acc}
\end{figure}

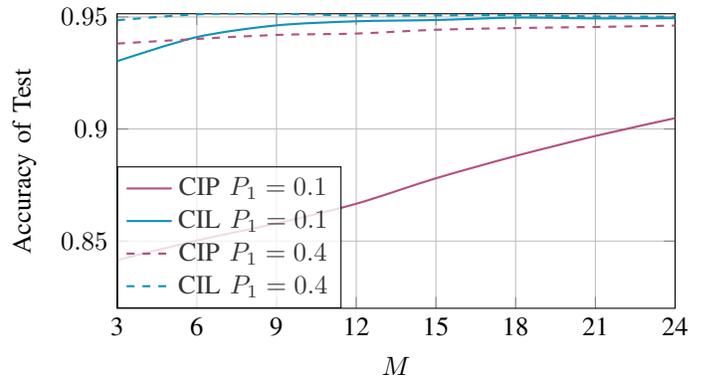
\begin{figure}[t]
    \centering
    \begin{subfigure}[t]{0.8\columnwidth}
    % textwidth}
        \centering
        % \documentclass[tikz]{standalone}

% \usepackage{pgfplots,siunitx,mathtools}

% \pgfplotsset{compat=newest}

% \begin{document}

\begin{tikzpicture}
  \begin{axis}[
    %   ymode=log,
    %   domain=1:33,
    %   ymax=0.98,
    ymin=0.82,
      enlargelimits=false,
      ylabel= Accuracy of Test,
      xlabel= $M$,
      xtick={1,2,3,4,5,6,7,8},
      xticklabels={3, 6, 9, 12, 15, 18, 21, 24},
      grid=both,
      width=9cm,
      height=5.5cm,
      decoration={name=none},
      legend style={at={(0,0)},anchor=south west,fill=white, fill opacity=0.8}
    ]
%     \addplot [thick, smooth, green!85!black] coordinates {
%     (1, 0.84)
% (2, 0.84)
% (3, 0.84)
% (4, 0.84)
% (5, 0.84)
% (6, 0.84)
% (7, 0.84)
% (8, 0.84)
%     }node[pos=0.95, anchor=east] ;
    \addplot [thick, smooth, magenta!70!black] coordinates {
        (1, 0.8415)
(2, 0.8502500000000001)
(3, 0.8578749999999999)
(4, 0.8666499999999999)
(5, 0.878025)
(6, 0.887975)
(7, 0.8969)
(8, 0.90485)
      }  node[pos=0.95, anchor=east] {};
      \addplot [thick, smooth, cyan!70!black] coordinates {
        (1, 0.9302499999999999)
(2, 0.9410249999999999)
(3, 0.94625)
(4, 0.948075)
(5, 0.94865)
(6, 0.9497249999999999)
(7, 0.949375)
(8, 0.9494499999999999)
      }  node[pos=0.95, anchor=east] {};
%       \addplot [thick, smooth,dashed, green!70!black] coordinates {
%         (1, 0.9350000000000003)
% (2, 0.9350000000000003)
% (3, 0.9350000000000003)
% (4, 0.9350000000000003)
% (5, 0.9350000000000003)
% (6, 0.9350000000000003)
% (7, 0.9350000000000003)
% (8, 0.9350000000000003)
%       }  node[pos=0.95, anchor=east] {};
      \addplot [thick, smooth,dashed, magenta!70!black] coordinates {
       (1, 0.938125)
(2, 0.9401999999999998)
(3, 0.9420000000000001)
(4, 0.9425999999999999)
(5, 0.9443)
(6, 0.94505)
(7, 0.9455499999999999)
(8, 0.9461750000000001)
      }  node[pos=0.95, anchor=east] {};
      \addplot [thick, smooth,dashed, cyan!70!black] coordinates {
        (1, 0.9485499999999999)
(2, 0.9511749999999999)
(3, 0.9514249999999999)
(4, 0.95065)
(5, 0.9507249999999999)
(6, 0.9507749999999998)
(7, 0.950225)
(8, 0.9501249999999999)
      }  node[pos=0.95, anchor=east] {};
    \legend{CIP $P_1=0.1$,CIL $P_1=0.1$,CIP $P_1=0.4$,CIL $P_1=0.4$} %Global $P=0.1$,Global $P=0.4$,
   \end{axis}
\end{tikzpicture} 

% \end{document}
        % \caption{Subcaption of the right diagram.}
        \label{Fig:clf_pro_client_acc}
    \end{subfigure}
    \caption{Testing accuracy with respect to $M$ under class 1 prior probabilities $0.1$ and $0.4$, for the classification problem in Section \ref{sec:exb}.}
    \label{Fig:clf_pro_client_acc}
\end{figure}

\subsection{Distributed learning of neural network classifiers}\label{sec:exc}

% \begin{itemize}
% \item Neural network structure
% \item  Laplace Approximation
% \item discuss some limitation to use correlated fusion in NN.
% \end{itemize}

In this test we consider a set of $M$ neural networks with the same structure and network parameters $\btheta$, each trained on observed local data. The objective is therefore to fuse the trained NNs into a global NN that can then be used for classification purposes, accounting for all local data through a distributed training process. Additionally, since the focus of this paper is on Bayesian data fusion, those are Bayesian NNs (BNN) in that their weights are treated as random variables. Therefore, training of a BNN involves inferring the joint posterior of its parameters given available data. The objective of this experiment is to locally train the BNNs, then fuse the local posteriors to compute a global posterior $p(\btheta|\mathcal{D}_1,\dots, \mathcal{D}_M)$. The use of Bayesian approaches within the context of data-driven models has been investigated previously and is given attention more recently \cite{fortuin2021bayesian}, where for the sake of simplicity the weights in the BNN are typically assumed independent and normally distributed such that mean and covariance can characterize their posterior.
%How to introduce the Bayes into neural network recently arising a much attention\cite{fortuin2021bayesian}. For simplicity, usually assuming the weights in Neural network as independent Gaussian distributed. 

% In this paper, we use the whole covariance of matrix of weight because the neural network structure in our paper is not large, which also can give us a more reliable result. 
In this experiment, we used a similar classification dataset as in Section \ref{sec:exb}, sampling from a normal distribution for each class $p(\mathcal{D}_m|\mathbf{x},C=i) = \mathcal{N}(\mathbf{1}_i,\mathbf{I})$, where in this case the dimension of the problem was increased to $10$ classes, $i=\{1,\dots,10\}$, while size of the data was kept to $10$ as in Section \ref{sec:exb}. The BNN trained by the $M$ local agents had a hidden layer of $64$ neurons, the training epoch was set to $100$, and the learning rate was $0.05$. 
It is worth mentioning that the total amount of data is fixed, such that increasing $M$ would have the impact of reducing the amount of local data available at each client.
Given a local dataset, there are several methods that can be used to calculate the posterior distribution of the BNN parameters. Without loss of generality, we considered here a Laplace approximation of the BNN parameters \cite{bishop2006pattern}, assuming the parameters are normally distributed. 

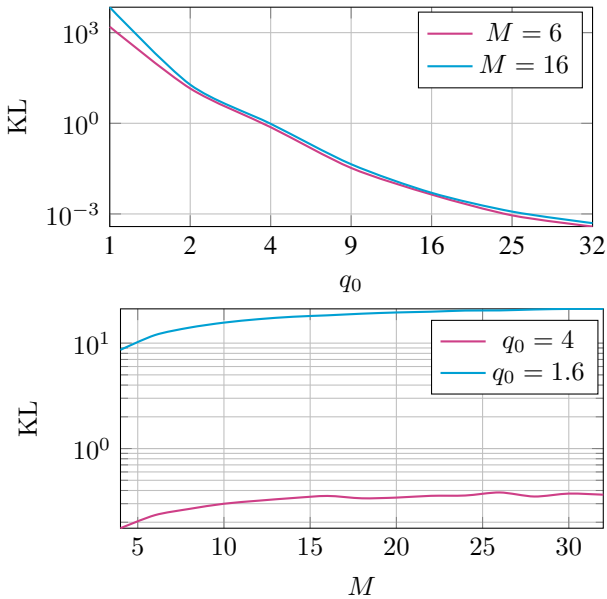
\begin{figure}[t]
    \centering
    \begin{subfigure}[t]{0.8\textwidth}
        \centering
        % \documentclass[tikz]{standalone}

% \usepackage{pgfplots,siunitx,mathtools}

% \pgfplotsset{compat=newest}

% \begin{document}

\begin{tikzpicture}
  \begin{axis}[
      ymode=log,
    %   domain=1:33,
    %   ymax=440,
      enlargelimits=false,
      ylabel=$\mathrm{KL}$,
      xlabel= $q_0$,
      xtick={1,2,3,4,5,6,7},
      xticklabels={1,2,4,9,16,25,32}, %,1.2,1.6
      grid=both,
      width=8cm,
      height=4.5cm,
      decoration={name=none},
    ]
    \addplot [ thick, smooth, magenta!85!black] coordinates {
    (1, 1570.9734722512815)
% (2, 240.9111863412365)
% (3, 42.07478909900728)
(2, 14.206043352513598)
(3, 0.7437517947643301)
(4, 0.03325316106649438)
(5, 0.004384801005312511)
(6, 0.0008970989137310426)
(7, 0.00038262975841121263)
    }node[pos=0.95, anchor=east] {};
    \addplot [ thick, smooth, cyan!85!black] coordinates {
    (1, 6956.589314876248)
% (2, 401.7932978437857)
% (3, 58.706039665984086)
(2, 18.970634378768285)
(3, 0.963320563907331)
(4, 0.044173534540175294)
(5, 0.005004704476129973)
(6, 0.001194996903836909)
(7, 0.0004963539723064514)
    }node[pos=0.95, anchor=east] {};
    \legend{$M=6$,$M=16$}
   \end{axis}
\end{tikzpicture}

% \end{document}
        % \caption{Subcaption of the left diagram.}
        % \label{Fig:reg_q_test}
    \end{subfigure}%
    \vskip 0.1cm
    \begin{subfigure}[t]{0.8\textwidth}
        \centering
        % \documentclass[tikz]{standalone}

% \usepackage{pgfplots,siunitx,mathtools}

% \pgfplotsset{compat=newest}

% \begin{document}

\begin{tikzpicture}
  \begin{axis}[
      ymode=log,
    %   domain=1:33,
    %   ymax=0.5,
      enlargelimits=false,
      ylabel=$\mathrm{KL}$,
      xlabel= $M$,
      grid=both,
      width=8cm,
      height=4.5cm,
      decoration={name=none},
      legend style={at={(0.99,0.95)},anchor=north east}
    ]
    \addplot [ thick, smooth, magenta!85!black] coordinates {
        (4, 0.175491756335515)
(6, 0.23352651968369997)
(8, 0.26764905690345414)
(10, 0.29958772668819905)
(12, 0.3202434629819152)
(14, 0.33932271283832877)
(16, 0.3551608169603398)
(18, 0.3372799714805751)
(20, 0.34256368745591176)
(22, 0.3560758852129965)
(24, 0.35858445921444115)
(26, 0.38319136839704837)
(28, 0.3507309499193013)
(30, 0.3739485119452667)
(32, 0.3646547560453953)
    }node[pos=0.95, anchor=east] {};
    \addplot [ thick, smooth, cyan!85!black] coordinates {
       (4, 8.63696706326494)
(6, 11.92460946184081)
(8, 14.028001370781867)
(10, 15.652155516648454)
(12, 16.88035682392413)
(14, 17.79275069345548)
(16, 18.353939159508624)
(18, 19.028940584921816)
(20, 19.53940466496568)
(22, 19.873135455633598)
(24, 20.37664160799848)
(26, 20.43773799473348)
(28, 20.85743709648924)
(30, 21.117945572148155)
(32, 21.14162851373283)
    }node[pos=0.95, anchor=east] {};
    \legend{$q_0=4$,$q_0=1.6$}
   \end{axis}
\end{tikzpicture}

% \end{document}
        % \caption{Subcaption of the right diagram.}
        \label{Fig:kl_client}
    \end{subfigure}
    \caption{KL divergence between CIL and CIP class posteriors as a function of (top) $q_0$ and (bottom) $M$ for the distributed classification training in Section \ref{sec:exc}.}
    \label{Fig:clf_kl}
\end{figure}

Figure \ref{Fig:clf_kl} shows the KL divergence between the parameters' posterior of both CIP and CIL solutions, as a function of the a priori variance $q_0$ (top panel for $M=6$ and $M=16$ clients) and the number of clients $M$ (bottom panel for $q_0=4$ and $q_0=1.6$).
From the top panel of this figure, we can see that behavior of the KL divergence with respect to $q_0$ or $M$ are similar to those from \ref{sec:exa}. % still hold that KL divergence decreasing as the variance increasing and increasing as the number of client increasing. 

\begin{figure}[t]
    \centering
    \begin{subfigure}[t]{0.8\textwidth}
        \centering
        % \documentclass[tikz]{standalone}

% \usepackage{pgfplots,siunitx,mathtools}

% \pgfplotsset{compat=newest}

% \begin{document}

\begin{tikzpicture}
  \begin{axis}[
    %   ymode=log,
      domain=1:33,
      enlargelimits=false,
      ylabel= Accuracy of Test/Global,
      xlabel= $q_0$,
      xtick={1,2,3,4,5,6,7},
      xticklabels={1,2,4,9,16,25,32}, %1.2,1.6,
      grid=both,
      width=9cm,
      height=5.5cm,
      decoration={name=none},
      legend style={at={(1,1)},anchor=north east}
    ]
%     \addplot [thick, smooth, green!70!black] coordinates {
%       (1, 0.5860000000000001)
% (2, 0.579)
% (3, 0.6466666666666667)
% (4, 0.6406666666666667)
% (5, 0.6846666666666665)
% (6, 0.7106666666666667)
% (7, 0.76)
% (8, 0.7843333333333333)
% (9, 0.7779999999999999)
%       }  node[pos=0.95, anchor=east];
    \addplot [thick, smooth, magenta!85!black] coordinates {
    (1, 0.8996865203761756)
% (2, 0.8953229398663698)
% (3, 0.866549088771311)
(2, 0.872688853671421)
(3, 0.8623809523809525)
(4, 0.8352835283528353)
(5, 0.8083670715249662)
(6, 0.8260297984224364)
(7, 0.8450641876936699)
    }node[pos=0.95, anchor=east] {};
    \addplot [thick, smooth, cyan!70!black] coordinates {
        (1, 1.0658307210031348)
% (2, 1.1102449888641424)
% (3, 1.0699588477366258)
(2, 0.9978869519281565)
(3, 0.8823809523809524)
(4, 0.8402340234023404)
(5, 0.8097165991902834)
(6, 0.8260297984224364)
(7, 0.8459495351925632)
      }  node[pos=0.95, anchor=east] {};
%       \addplot [thick, smooth, dashed, green!70!black] coordinates {
%         (1, 0.6323333333333333)
% (2, 0.6733333333333333)
% (3, 0.7169999999999999)
% (4, 0.7443333333333332)
% (5, 0.82725)
% (6, 0.86525)
% (7, 0.8767499999999999)
% (8, 0.8944999999999999)
% (9, 0.8827500000000001)
%       }  node[pos=0.95, anchor=east];
      \addplot [thick, smooth, dashed, magenta!70!black] coordinates {
        (1, 0.7945544554455447)
% (2, 0.7532163742690057)
% (3, 0.7126632595116411)
(2, 0.7366511145671332)
(3, 0.6771539206195547)
(4, 0.6312022900763358)
(5, 0.6864367816091952)
(6, 0.6582827406764961)
(7, 0.6732629727352683)
      }  node[pos=0.95, anchor=east] {};
      \addplot [thick, smooth,dashed, cyan!70!black] coordinates {
        (1, 0.8347772277227724)
% (2, 1.0479532163742689)
% (3, 0.9057353776263486)
(2, 0.8346293416277863)
(3, 0.7042594385285577)
(4, 0.6364503816793893)
(5, 0.6868965517241378)
(6, 0.6595836947094537)
(7, 0.6732629727352683)
      }  node[pos=0.95, anchor=east] {};
    \legend{CIP $M=6$,CIL $M=6$,CIP $M=16$,CIL $M=16$}
   \end{axis}
\end{tikzpicture} 

% \end{document}

 
        % \caption{Subcaption of the left diagram.}
        \label{Fig:clf_q_acc}
    \end{subfigure}
    \caption{Testing accuracy with respect to the prior variance for a fixed amount of total data across the $M$ clients.}
    \label{Fig:clf_q_acc}
\end{figure}
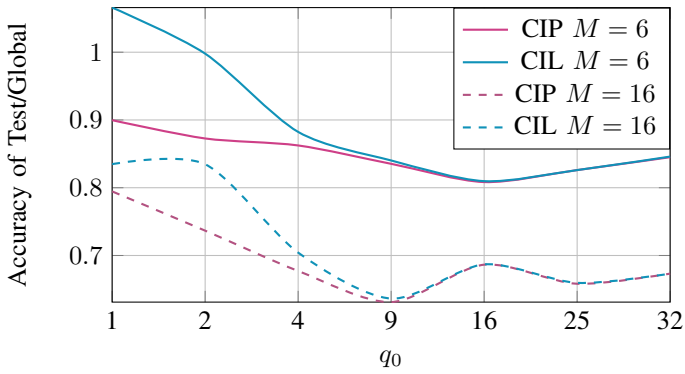

\begin{figure}[t]
    \centering
    \begin{subfigure}[t]{0.8\textwidth}
        \centering
        % \documentclass[tikz]{standalone}

% \usepackage{pgfplots,siunitx,mathtools}

% \pgfplotsset{compat=newest}

% \begin{document}

\begin{tikzpicture}
  \begin{axis}[
    %   ymode=log,
      domain=1:33,
      enlargelimits=false,
      ylabel= Accuracy of Test/Global ,
      xlabel= number of clients $M$,
      grid=both,
      width=9cm,
      height=5.5cm,
      decoration={name=none},
      legend style={at={(0,0)},anchor=south west,fill=white, fill opacity=.8}
    ]
%     \addplot [thick, smooth, green!85!black] coordinates {
%     (4, 0.9655833333333333)
% (6, 0.9654999999999998)
% (8, 0.9624999999999999)
% (10, 0.9644166666666665)
% (12, 0.9646666666666667)
% (14, 0.9630833333333332)
% (16, 0.9606666666666664)
% (18, 0.9628333333333332)
% (20, 0.9590833333333333)
% (22, 0.9645833333333332)
% (24, 0.9640000000000001)
% (26, 0.96375)
% (28, 0.9664166666666665)
% (30, 0.9628333333333332)
% (32, 0.9601666666666666)
%     }node[pos=0.95, anchor=east] ;
    \addplot [thick, smooth, magenta!70!black] coordinates {
        (4, 0.9381443298969073)
(6, 0.9075630252100839)
(8, 0.8653745416448402)
(10, 0.8223615464994775)
(12, 0.827735644637053)
(14, 0.7672955974842768)
(16, 0.7694753577106518)
(18, 0.7555673382820783)
(20, 0.7359263050153533)
(22, 0.74163179916318)
(24, 0.7404737384140061)
(26, 0.671858774662513)
(28, 0.7083758937691521)
(30, 0.6183476938672074)
(32, 0.6280825364871666)
      }  node[pos=0.95, anchor=east] {};
      \addplot [thick, smooth, cyan!70!black] coordinates {
        (4, 0.9561855670103094)
(6, 0.9353991596638657)
(8, 0.8873755893137769)
(10, 0.8526645768025078)
(12, 0.8488624052004332)
(14, 0.7929769392033545)
(16, 0.7927927927927928)
(18, 0.7709437963944856)
(20, 0.7656090071647902)
(22, 0.7552301255230126)
(24, 0.7595262615859939)
(26, 0.6812045690550362)
(28, 0.7196118488253318)
(30, 0.6401419158641662)
(32, 0.6346250629089079)
      }  node[pos=0.95, anchor=east] {};
%       \addplot [thick, smooth,dashed, green!70!black] coordinates {
%         (4, 0.9367500000000001)
% (6, 0.9304166666666666)
% (8, 0.9407499999999999)
% (10, 0.9364999999999999)
% (12, 0.93875)
% (14, 0.9326666666666666)
% (16, 0.9282499999999999)
% (18, 0.9334166666666666)
% (20, 0.93625)
% (22, 0.9330833333333334)
% (24, 0.9392499999999999)
% (26, 0.9364999999999999)
% (28, 0.9347500000000001)
% (30, 0.9374999999999999)
% (32, 0.934)
%       }  node[pos=0.95, anchor=east];
      \addplot [thick, smooth,dashed, magenta!70!black] coordinates {
       (4, 0.9584966156038477)
(6, 0.9296851115866706)
(8, 0.9062912714611597)
(10, 0.8750075478533905)
(12, 0.852693136349646)
(14, 0.8364319021377097)
(16, 0.8241402479995116)
(18, 0.8080917221922083)
(20, 0.7868971352592773)
(22, 0.7774437898280631)
(24, 0.7647897804787299)
(26, 0.7569819291246186)
(28, 0.7470807752497892)
(30, 0.7241913385357719)
(32, 0.7174983285722967)
      }  node[pos=0.95, anchor=east] {};
      \addplot [thick, smooth,dashed, cyan!70!black] coordinates {
        (4, 1.073684835530222)
(6, 1.0726994802812595)
(8, 1.0538480009604996)
(10, 1.0518084656723627)
(12, 1.0485072329947676)
(14, 1.0352449585175318)
(16, 1.0271211288253619)
(18, 1.0180082838105526)
(20, 1.0001813127039767)
(22, 0.9809426475892746)
(24, 0.9717846955227583)
(26, 0.9538253931002111)
(28, 0.9319248826291079)
(30, 0.9180913802347055)
(32, 0.8971008326748922)
      }  node[pos=0.95, anchor=east] {};
    \legend{CIP $q_0=4$ , CIL $q_0=4$, CIP $q_0=1.6$, CIL $q_0=1.6$}
   \end{axis}
\end{tikzpicture} 

% \end{document}
        % \caption{Subcaption of the right diagram.}
        \label{Fig:clf_client_acc}
    \end{subfigure}
    \caption{Testing accuracy with respect to $M$ for a different values of $q_0$.}
    \label{Fig:clf_client_acc}
\end{figure}
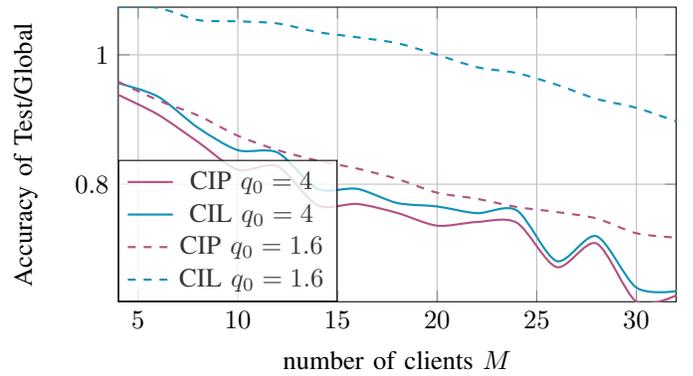

Figure \ref{Fig:clf_q_acc} shows the fusion center NN testing accuracy with respect the prior variance $q_0$ for $M=6$ and $M=16$ clients. For the sake of clarity, and to facilitate the identification of the trend behavior, the reported metric is test accuracy of CIL (and CIP) divided by global accuracy (considered as a benchmark, using the global model for testing, which is trained by using all datasets jointly). As expected, as the prior becomes more informative, CIL outperforms CIP, and as $M$ increases the local data becomes more scarce and the performance degrades.
Analogously, Figure \ref{Fig:clf_client_acc} shows the testing accuracy with respect to the number of clients $M$ for different values of $q_0$, drawing similar conclusions. %The metric of y-axis set to test accuracy of CIP,CIL that divided by Global accuracy.

\subsection{Recursive update of classification problem using $M$ neural networks}\label{sec:exd}

Results presented in the previous subsection only considered the case where classifier's parameters are fused once, after learned using local data. In many applications, however, one aims at fusing models over time in an iterative procedure. For instance, this is the case of many distributed and/or federated methods where a common prior can be shared with several nodes and fused back recursively~\cite{liu2021bayesian}.
% \cblue{[add refs here]}.
% Above are only one-time fusion, in real practice, recursively update the prior is more common. Typically used in the Federated learning, the parameters information will be aggregated in the fusion center and pass back to the local. 
In this experiment, we consider the case of Bayesian federated model learning where learned distributions from distributed clients are fused by a central node in the vein of Figure~\ref{fig:distributed}. 
% about the aggregate the parameters distribution instead of just parameter, and pass back the fusion result to the local as prior information. 
Locally, at the $t$-th communication round each client obtains their local posterior of the BNN model parameters using Bayes theorem as
% Let $p_{t-1}(\boldsymbol{\theta})$ be the posterior provided by the fusion node at time $t-1$ and $p(\mathcal{D}_m|theta)$
% The equation \ref{eq:recursive} show how local posterior use the likelihood and prior information from the fusion center. 
\begin{align}\label{eq:recursive}
p_t(\boldsymbol{\theta}|\mathcal{D}_m)=& \frac{p_{t-1}(\boldsymbol{\theta} )  p(\mathcal{D}_m | \boldsymbol{\theta})}{P(\mathcal{D}_m)},
\end{align}
\noindent where $p_{t-1}(\boldsymbol{\theta} )$ denotes the prior distribution of the parameters at $t$ and the $\log$-posterior can be written as
\begin{align}
\ln p_t(\boldsymbol{\theta}|\mathcal{D}_m) &=\ln p_{t-1}(\boldsymbol{\theta}) +\ln p(\mathcal{D}_m | \boldsymbol{\theta})-\ln p(\mathcal{D}_m) \, . \label{eq:recursive_last}
\end{align}

For simplicity, we consider the Gaussian assumption on the posterior $p_t(\boldsymbol{\theta}|\mathcal{D}_m) \approx \mathcal{N}(\btheta_{m,t},\mathbf{C}_{m,t})$, similarly as in \eqref{eq:local_posterior}.
Defining the loss for the $m$-th client as the log-posterior in~\eqref{eq:recursive_last}, i.e., $J_{m,t}(\btheta) = \ln p_t(\boldsymbol{\theta}|\mathcal{D}_m)$, and using the Laplace approximation for its second term, the loss can be approximated as
\begin{align}
& J_{m,t} (\boldsymbol{\theta}) \approx \frac{1}{2}(\boldsymbol{\theta}-\mathbf{\boldsymbol{\mu}}_{t-1})^{\top} \mathbf{\Lambda}_{t-1}(\btheta-\mathbf{\boldsymbol{\mu}}_{t-1}) 
% \\ \nonumber
% & -\ln P(D_m |\btheta_{m,t})
+\frac{1}{2}(\btheta-\btheta_{m,t}^{\textrm{ML}})^{\top} \mathbf{H}_{m,t}(\btheta-\btheta_{m,t}^{\textrm{ML}})+\kappa, 
\end{align}
\noindent where $p_{t-1}(\boldsymbol{\theta}) \approx \mathcal{N}(\mathbf{\boldsymbol{\mu}}_{t-1},\mathbf{\Lambda}_{t-1}^{-1})$ is the parameter prior at $t$, resulting from the CIL (or CIP) fusion of local posteriors at $k-1$. 
The Laplace approximation of the likelihood
$p(\mathcal{D}_m | \boldsymbol{\theta}) \approx \mathcal{N}(\btheta_{m,t}^{\textrm{ML}} , \mathbf{H}_{m,t}^{-1})$ requires maximum likelihood (ML) estimation of the parameter, $\btheta_{m,t}^{\textrm{ML}}$, possibly through a gradient method or other numerical optimization approach. A suitable estimate of the inverse covariance $\mathbf{H}_{m,t}^{-1}$ in the Laplacian approximation is known to be \cite{bishop2006pattern} the Fisher information matrix $\mathcal{I}(\btheta)$, which can be approximated \cite{liu2021bayesian} by 
$\mathcal{I}_m (\btheta) \approx \frac{1}{\left|\mathcal{D}_m\right|} \sum_{(\mathbf{x}, \mathbf{y}) \in \mathcal{D}_m} \nabla_\theta \log p(\mathbf{y} | \mathbf{x}, \btheta) \nabla_\theta \log p(\mathbf{y} | \mathbf{x}, \btheta)^{\top}$ for the $m$-th client.
%where $\btheta_{m,t}$ is the mean of the local posterior for the $m$-th client at time $t$  obtained as described in~\eqref{eq:local_posterior}. 
The $\kappa$ gathers constant terms that are not related to $\btheta$ and thus do not contribute to the loss minimization.
Then, the local parameter posterior mean for the $m$-th client can be estimated by solving the following optimization problem:
\begin{equation}\label{eq:mean of weight recursively}
    \btheta_{m,t} = \mathop{\arg\max}_{\btheta} J_{m,t}(\btheta) \;,
\end{equation}
\noindent and the associated local covariance is
\begin{align}\label{eq:variance of weight recursively}
\mathbf{C}_{m,t}^{-1}=\frac{\partial^2 J_m(\btheta)}{\partial \btheta \partial \btheta^\top} & \approx \bar{\mathbf{F}}_{m}(\btheta_{m,t}^{\textrm{ML}})+\mathbf{\Lambda}_{t-1}\,.
\end{align}

After obtaining the local posterior parameters at $t$, $\btheta_{m,t}$ and $\mathbf{C}_{m,t}$, for $m=1,\ldots,M$, these can be fused using either CIL and CIP approaches. This fused posterior would become $p_{t}(\boldsymbol{\theta}) \approx \mathcal{N}(\mathbf{\boldsymbol{\mu}}_{t},\mathbf{\Lambda}_{t}^{-1})$ the new prior used at the next communication round $t+1$. %The accuracy of both resulting classifiers, using CIL and CIP, and the evolution of $\mathca{D}_\mathrm{KL}$ of their distributions are summarized in Figures~\ref{Fig:clf_recursive_acc} to Figure~\ref{Fig:clf_recursive_kl}.

Compared to previous classification experiments, this experiment considers a more complex synthetic dataset generated from a Gaussian mixture with a maximum of $4$ Gaussian components, with $3$ possible classes and $10$ features dimension. A total of $600$ \textit{i.i.d.} training points and $300$ testing points were generated. The considered BNN model is a multi-layer perceptron (MLP) with two hidden layers, containing $32$ and $8$ neurons respectively, with a learning rate of $0.01$ at training.

\begin{figure}[t]
    \centering
    \begin{subfigure}[t]{0.8\textwidth}
        \centering
        % \documentclass[tikz]{standalone}

% \usepackage{pgfplots,siunitx,mathtools}

% \pgfplotsset{compat=newest}

% \begin{document}

\begin{tikzpicture}
  \begin{axis}[
    %   ymode=log,
    %   domain=1:33,
      ymax=0.81,
      enlargelimits=false,
      ylabel=Accuracy,
      xlabel= communication round $t$,
      grid=both,
      width=9cm,
      height=5.5cm,
      decoration={name=none},
      legend style={at={(0.99,0.1)},anchor=south east}
    ]
    \addplot [ thick, smooth, magenta!85!black] coordinates {
        (1, 0.5968332290649414)
(2, 0.6923333406448364)
(3, 0.7206665873527527)
(4, 0.7394999861717224)
(5, 0.7533333897590637)
(6, 0.7623332738876343)
(7, 0.7681666612625122)
(8, 0.7728332281112671)
(9, 0.7758334279060364)
(10, 0.7818334102630615)
(11, 0.7843334674835205)
(12, 0.7868334054946899)
(13, 0.7901667356491089)
(14, 0.7923334240913391)
(15, 0.7930000424385071)
(16, 0.7948334217071533)
(17, 0.7970000505447388)
(18, 0.7981666326522827)
    }node[pos=0.95, anchor=east] {};
    \addplot [ thick, smooth,cyan!85!black] coordinates {
        (1, 0.6634999513626099)
(2, 0.7098333239555359)
(3, 0.7315000891685486)
(4, 0.7445000410079956)
(5, 0.7566666603088379)
(6, 0.7656666040420532)
(7, 0.7693333625793457)
(8, 0.7745000123977661)
(9, 0.7791666984558105)
(10, 0.783833384513855)
(11, 0.783833384513855)
(12, 0.7878333926200867)
(13, 0.7903333306312561)
(14, 0.7916666269302368)
(15, 0.7948334813117981)
(16, 0.7970000505447388)
(17, 0.7971667051315308)
(18, 0.796833336353302)
    }node[pos=0.95, anchor=east] {};
    \addplot [ thick, smooth,dashed, magenta!85!black] coordinates {
      (1, 0.5243332386016846)
(2, 0.6269999742507935)
(3, 0.6583333611488342)
(4, 0.6763333082199097)
(5, 0.6913332939147949)
(6, 0.703000009059906)
(7, 0.7141666412353516)
(8, 0.7216666340827942)
(9, 0.7288333177566528)
(10, 0.737000048160553)
(11, 0.7409999966621399)
(12, 0.7459999322891235)
(13, 0.7486667037010193)
(14, 0.7519999742507935)
(15, 0.7549999356269836)
(16, 0.7571665644645691)
(17, 0.7616667747497559)
(18, 0.7645000219345093)
    }node[pos=0.95, anchor=east] {};
    \addplot [ thick, smooth, dashed,cyan!85!black] coordinates {
       (1, 0.6036666631698608)
(2, 0.6458333730697632)
(3, 0.6676665544509888)
(4, 0.6833332777023315)
(5, 0.6969999670982361)
(6, 0.7076665163040161)
(7, 0.7178332209587097)
(8, 0.7256666421890259)
(9, 0.7338333129882812)
(10, 0.7386667132377625)
(11, 0.7440000176429749)
(12, 0.7480000257492065)
(13, 0.7501667141914368)
(14, 0.7536666989326477)
(15, 0.7556667327880859)
(16, 0.7596666216850281)
(17, 0.763166606426239)
(18, 0.7666667103767395)
    }node[pos=0.95, anchor=east] {};
    \legend{CIP $M=4$,CIL $M=4$,CIP $M=16$,CIL $M=16$}
   \end{axis}
\end{tikzpicture}

% \end{document}
        % \caption{Subcaption of the right diagram.}
        \label{Fig:clf_recursive_acc}
    \end{subfigure}
    \caption{Testing accuracy evolution with communication round for the recursive classification learning in Section \ref{sec:exd}.}
    \label{Fig:clf_recursive_acc}
\end{figure}
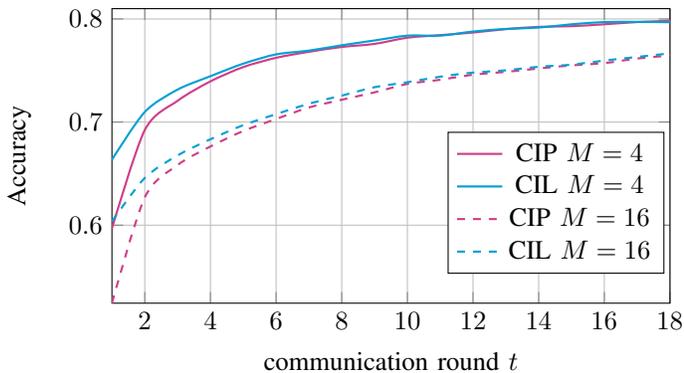

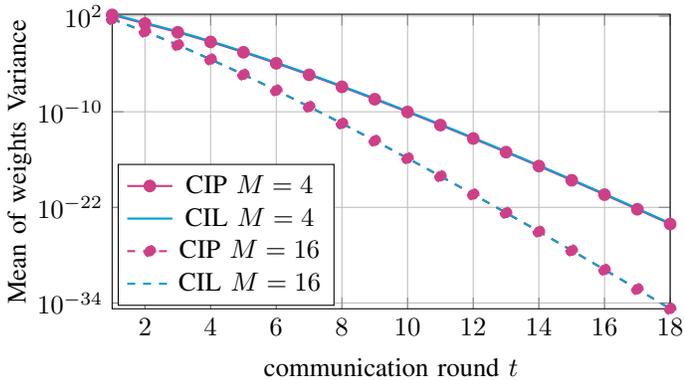
\begin{figure}[t]
    \centering
    \begin{subfigure}[t]{0.8\textwidth}
        \centering
        % \documentclass[tikz]{standalone}

% \usepackage{pgfplots,siunitx,mathtools}

% \pgfplotsset{compat=newest}

% \begin{document}

\begin{tikzpicture}
  \begin{axis}[
      ymode=log,
    %   domain=1:33,
    %   ymax=0.81,
      enlargelimits=false,
      ylabel=Mean of weights Variance,
      xlabel= communication round $t$,
      grid=both,
      width=9cm,
      height=5.5cm,
      decoration={name=none},
      legend style={at={(0.01,0.01)},anchor=south west}
    ]
    \addplot [ thick, smooth,mark=*, magenta!85!black] coordinates {
        (1, 142.22894287109375)
(2, 11.671184539794922)
(3, 0.9253749847412109)
(4, 0.05756501108407974)
(5, 0.00287729618139565)
(6, 0.00011988511687377468)
(7, 4.281608198652975e-06)
(8, 1.3380024199705076e-07)
(9, 3.7166731914339834e-09)
(10, 9.291683394918593e-11)
(11, 2.111745891000827e-12)
(12, 4.399471170940354e-14)
(13, 8.460520831013799e-16)
(14, 1.5108073385199276e-17)
(15, 2.518012747854429e-19)
(16, 3.9343945146257616e-21)
(17, 5.785873172528371e-23)
(18, 8.035937700945326e-25)
    }node[pos=0.95, anchor=east] {};
    \addplot [ thick, smooth,cyan!85!black] coordinates {
        (1, 171.16781616210938)
(2, 14.378822326660156)
(3, 1.143081784248352)
(4, 0.07111905515193939)
(5, 0.0035548326559364796)
(6, 0.0001481153885833919)
(7, 5.28983127878746e-06)
(8, 1.6530722746210813e-07)
(9, 4.591867774905722e-09)
(10, 1.1479668604597038e-10)
(11, 2.609015867932607e-12)
(12, 5.43544904723324e-14)
(13, 1.0452786792185644e-15)
(14, 1.8665694481728593e-17)
(15, 3.1109482186416274e-19)
(16, 4.860856995524326e-21)
(17, 7.148318628467926e-23)
(18, 9.928220755572622e-25)
    }node[pos=0.95, anchor=east] {};
    \addplot [ thick, smooth,dashed,mark=*, magenta!85!black] coordinates {
      (1, 41.62830352783203)
(2, 1.1102595329284668)
(3, 0.023020297288894653)
(4, 0.00035965186543762684)
(5, 4.495638677326497e-06)
(6, 4.682956955548434e-08)
(7, 4.1812117657080705e-10)
(8, 3.26657169195943e-12)
(9, 2.2684526579754873e-14)
(10, 1.4177828053555611e-16)
(11, 8.055584215336212e-19)
(12, 4.1956163749241605e-21)
(13, 2.017123645537369e-23)
(14, 9.005016714932956e-26)
(15, 3.7520897361591645e-28)
(16, 1.4656599591476256e-30)
(17, 5.388456185935781e-33)
(18, 1.870991683396714e-35)
    }node[pos=0.95, anchor=east] {};
    \addplot [ thick, smooth, dashed,cyan!85!black] coordinates {
       (1, 41.37792205810547)
(2, 1.1710907220840454)
(3, 0.02431698516011238)
(4, 0.0003799234109465033)
(5, 4.749036634166259e-06)
(6, 4.946912213199539e-08)
(7, 4.4168860235949126e-10)
(8, 3.45069242277396e-12)
(9, 2.396314144836008e-14)
(10, 1.497696128764268e-16)
(11, 8.509638646215172e-19)
(12, 4.43210319230588e-21)
(13, 2.1308190123632525e-23)
(14, 9.512584612494128e-26)
(15, 3.963576825576056e-28)
(16, 1.548272385569743e-30)
(17, 5.6921793142752896e-33)
(18, 1.9764509754097433e-35)
    }node[pos=0.95, anchor=east] {};
    \legend{CIP $M=4$,CIL $M=4$,CIP $M=16$,CIL $M=16$}
   \end{axis}
\end{tikzpicture}

% \end{document}
        % \caption{Subcaption of the right diagram.}
        \label{Fig:clf_recursive_var_mean}
    \end{subfigure}
    \caption{Evolution over $t$ of the mean variance of model parameters.}
    \label{Fig:clf_recursive_var_mean}
\end{figure}

% The same with KL divergence, it decrease with respect to the communication rounds.

% \begin{figure}[t]
%     \centering
%     \begin{subfigure}[t]{0.5\textwidth}
%         \centering
%         \input{latex_figures/clf_recursive_kl}
%         % \caption{Subcaption of the right diagram.}
%         \label{Fig:clf_recursive_kl}
%     \end{subfigure}
%     \caption{Evolution of KL divergence wrt communication round for $M\in\{4,16\}$.}
%     \label{Fig:clf_recursive_kl}
% \end{figure}
% \cblue{final decision: remove this figure \ref{Fig:clf_recursive_kl}?}

The results in Figures \ref{Fig:clf_recursive_acc} and \ref{Fig:clf_recursive_var_mean} show the evolution of accuracy and the mean value of the variance of all parameters, as a function of the communication round $t$ at which fusion happens. $M=4$ and $M=16$ are tested.
In the experiments, the initial round is given with non-informative prior information. After several communicate rounds occur, the variance becomes very informative, as shown in Figure \ref{Fig:clf_recursive_var_mean}. %, the weight prior variance will be more deterministic wrt more communication rounds.

It is seen from Figure \ref{Fig:clf_recursive_acc} that accuracy generally increases with the number of communication rounds, for both CIP and CIL. Also, the accuracy decreases inversely proportional to the number of clients $M$. The latter is a consequence of having less local data samples at each node, given that the overall amount is kept constant regardless of $M$. Another conclusion from the experiment is that CIL outperforms CIP, approaching each other when enough communication rounds are run. On the other hand, the average variance of the parameters produced by \eqref{eq:variance of weight recursively} is negligible, both approaches becoming over-confident. %We were trying to show the MSE results of weights' mean, but always gets 0 output just after several communication rounds, the weights in Neural network is too small, which make it reasonable that we can't tell the difference by MSE, instead we using the KL divergence shown in \ref{Fig:clf_recursive_kl}. By equation \ref{eq:kl_gaussian}, KL divergence results is more account to the weights' variance than the mean of weight as the third term is close to 0. It's interesting to see though that KL divergence is stable after several communication rounds.  

\section{Conclusions}\label{sec:conclusions}
This paper investigates the impact of sharing a priori information of parameters of interest in a Bayesian data fusion context. The focus is on distributed learning of models for regression and classification purposes. We discuss two fusion possibilities in a distributed system, so-called conditionally independent posterior (CIP) and conditionally independent likelihoods (CIL). CIL is an optimal Bayesian method that assumes the local datasets are mutually independent and CIP is an approximation that assumed the local posteriors to be independent. CIP is sometimes desirable due to its simplicity, however the assumption is clearly broken when prior information is shared among clients. The paper quantifies the mismatch for the Gaussian case comparing those approaches and showing that they become equivalent when the a priori information is either not informative or when the number of users decrease, in which case more data is used locally and the prior becomes less relevant. 
A comparison of these two methods in different applications of distributed inference is provided. %See the trend of KL divergence and results wrt the setting of initial variance and number of clients while keep the total number of Dataset fixed. The KL divergence and results will converge to each other while we using the large variance which mean prior information have less effect on the results. When increasing the number of clients, KL divergence is usually increase and the results become worse. However, some situations may not apply as shown in experiment \ref{sec:exb} because of the model complexity. 
Namely, linear regression model distributed learning; distributed classification; distributed learning of neural network classifiers; and a practical use case in the context of federated learning, where the prior information is updated recursively over time as opposed to only once. The results consistently show that CIL, which accounts for the shared prior information, generally outperforms CIP, as it was predicted by the analytical results presented in this paper. 

% In the future, we are going to explore the problem in larger machine learning problem. Plus, we will introduce AA and GA fusion strategy to deal with the data double counting problem. More, the personalized Federated Learning to deal with the Non-IID problem is also worth to explore.

% if have a single appendix:
%\appendix[Proof of the Zonklar Equations]
% or
%\appendix  % for no appendix heading
% do not use \section anymore after \appendix, only \section*
% is possibly needed

% use appendices with more than one appendix
% then use \section to start each appendix
% you must declare a \section before using any
% \subsection or using \label (\appendices by itself
% starts a section numbered zero.)
%

% \appendices
\appendix

\section{Optimal Bayesian data fusion under Gaussian distributions}\label{ap:GaussianCIL}

% \pau{add here the derivation for mu and Lambda}

The optimal Bayesian update, or CIL, in \eqref{eq:CIL} is composed of three terms. Namely, 
$i)$ the Product of Experts term, $\prod_{m=1}^{M} p(\btheta|\mathcal{D}_m)$, which we refer to as the CIP; %fusion rule in \eqref{eq:gassuian_Correlated fusion_approx} for the Gaussian case;
$ii)$ the product of priors, $p^{M-1}(\btheta)$; and
$iii)$ the normalizing constant $Z$, which has no impact in obtaining the result we are interested in this appendix.

Under the Gaussian
assumption, 
the first term is given by %the equation \ref{eq:gassuian_Correlated fusion} is seen as three part, the first part $\prod_{m=1}^{M} p(\btheta|\mathcal{D}_m)$ is same as CIP in equation 
\eqref{eq:gassuian_Correlated fusion_approx} following known results of products of Gaussian distributions, such that%, and get the result in \ref{eq:mean_CIP} as:
\begin{align}\label{eq:appendix_agents_product}
\prod_{m=1}^{M} p(\btheta|\mathcal{D}_m) \propto  \mathcal{N}(\widetilde{\boldsymbol{\mu}},\widetilde{\mathbf{\Lambda}}^{-1})\;.
\end{align}
% \begin{align}\label{eq:appendix_cip}
%     \prod_{m=1}^{M} p(\btheta|\mathcal{D}_m) = \mathcal{N}(\widetilde{\mathbf{\boldsymbol{\mu}}}, \widetilde{\mathbf{\Lambda}}^{-1})\;,
% \end{align}

The second term is
\begin{align}\label{eq:appendix_prior_product}
    p^{M-1}(\btheta) \propto  \mathcal{N}(\btheta_0,\frac{\mathbf{C}_0}{M-1})\;,
\end{align}
\noindent given that $\btheta\sim p(\btheta) = \mathcal{N}(\mathbf{\btheta}_0,\mathbf{C}_0)$.

The target distribution, $p(\btheta|\mathcal{D}) = 
\mathcal{N}(\mathbf{\boldsymbol{\mu}}, \mathbf{\Lambda}^{-1})$, is also a Gaussian with parameters $\mathbf{\boldsymbol{\mu}}$ and $\mathbf{\Lambda}$ which can be computed from 
\begin{equation}\label{eq:CIL_division}
\mathcal{N}(\mathbf{\boldsymbol{\mu}}, \mathbf{\Lambda}^{-1}) \propto
\frac{
\mathcal{N}(\btheta_0,\frac{\mathbf{C}_0}{M-1})}{\mathcal{N}(\widetilde{\boldsymbol{\mu}},\widetilde{\mathbf{\Lambda}}^{-1})},
\end{equation}
\noindent since the division of two Gaussian distributions is yet another Gaussian distribution up to a normalizing constant, since it belongs to the exponential family.
%The third part is the normalized constant, which doesn't affect the mean and covariance. As Gaussian distribution is belong to exponential family, one Gaussian divide by another one with the same random variable is still a Gaussian, and then normalized by the constant.  

To calculate the desired mean and covariance, we can use the completing the square method, whereby the exponent in the Gaussian distribution is expanded as
\begin{align}\label{eq:quadratic}
    -\frac{1}{2}(\btheta - \boldsymbol{\mu})^\top\mathbf{\Lambda}(\btheta - \boldsymbol{\mu}) = -\frac{1}{2}\btheta^\top \mathbf{\Lambda} \btheta + \btheta^\top \mathbf{\Lambda} \boldsymbol{\mu} + {\rm const.},
\end{align}
\noindent where the symmetries of $\mathbf{\Lambda}$ are used and the constant represents the terms that are unrelated to $\btheta$.

The exponent in the right-hand side of \eqref{eq:CIL_division} is composed of two quadratic terms of the form in \eqref{eq:quadratic}, that is
%The equation \ref{eq:appendix_agents_product} divide by the equation \ref{eq:appendix_prior_product}
, consider the quadratic part:
\begin{multline}
    -\frac{1}{2}(\btheta - \widetilde{\boldsymbol{\mu}})^\top\widetilde{\mathbf{\Lambda}}(\btheta - \widetilde{\boldsymbol{\mu}})  
    + \frac{1}{2}(\btheta - \btheta_0)^\top(M-1)\mathbf{C}^{-1}_0(\btheta - \btheta_0),
\end{multline}
\noindent which can be rearranged in the form of the right-hand side of \eqref{eq:quadratic} as:
\begin{multline}\label{eq:CIL_app_result}
    -\frac{1}{2}\btheta^\top \mathbf{\Lambda} \btheta + \btheta^\top \mathbf{\Lambda} \boldsymbol{\mu} 
    =-\frac{1}{2}\btheta^\top \left(\sum_{m=1}^{M}\mathbf{C}_m^{-1}-\mathbf{C}_{0}^{-1}(M-1) \right) \btheta \\ 
    +\btheta^\top  \left(\sum_{m=1}^{M}\mathbf{C}_m^{-1} \btheta_m -(M-1) \mathbf{C}_{0}^{-1} \btheta_{0}\right),
\end{multline}
\noindent where we used \eqref{eq:mean_CIP} and \eqref{eq:cov_CIP}. From \eqref{eq:CIL_app_result} we can identify the desired expressions for $\boldsymbol{\mu}$ and $\mathbf{\Lambda}$ in \eqref{eq:mean_CIL} and \eqref{eq:variance_close} respectively.

\section{Proof of Theorem \ref{theorem_M}}\label{ap:thm_M}

% you can choose not to have a title for an appendix
% if you want by leaving the argument blank
%\section{}
%Appendix two text goes here.
%\appendix[Kullback-Leibler (the other direction)]
\def \btheta {\boldsymbol{\theta}}
\def \thetab {\boldsymbol{\theta}}
\def \data {\mathcal{D}}
\def \wt {\widetilde}
Let us define the true posterior
\begin{align}
    p(\btheta|\data) = \frac{\prod_{m=1}^M p(\data_m|\btheta) p(\btheta)}{p(\data)},
\end{align}
where $p(\data) = \int \prod_{m=1}^M p(\data_m|\btheta) p(\btheta)d\btheta$ and the approximate posterior
\begin{align}
    \wt p(\btheta|\data) &= \wt Z \prod_{m=1}^{M} p(\boldsymbol{\theta}|\mathcal{D}_m) 
    % \\\nonumber
    = \frac{\prod_{m=1}^M p(\data_m|\btheta) p^M(\btheta)}{p_M(\data)},
\end{align}
where $p_M(\data) = \int \prod_{m=1}^M p(\data_m|\btheta) p^M(\btheta) d\btheta$ and $\wt Z = \frac{\prod_{m=1}^{M}p(\mathcal{D}_m)}{p_M(\data)}$.
With those definitions, let us define the KL divergence between both fusion rules as $\mathrm{KL}_M$ and manipulate it further
% {\small 
\begin{align}
\mathrm{KL}_M &\triangleq  \mathrm{KL}_M(p(\btheta|\data)||\wt p(\btheta|\data)) \\ \nonumber
 &= \int p(\btheta|\data) \log\left(\frac{p(\btheta|\data)}{\wt p(\btheta|\data)}\right)d\btheta\\ \nonumber
  &= \int  \frac{\prod_{m=1}^M p(\data_m|\btheta) p(\btheta)}{p(\data)}  \log\left(\frac{ \frac{\prod_{m=1}^M p(\data_m|\btheta) p(\btheta)}{p(\data)}}{ \frac{\prod_{m=1}^M p(\data_m|\btheta) p^M(\btheta)}{p_M(\data)} }\right)d\btheta\\ \nonumber
  &= \int  \frac{\prod_{m=1}^M p(\data_m|\btheta) p(\btheta)}{p(\data)}  \log\left(\frac{p_M(\data)}{p(\data)}\frac{   1 }{   p^{M-1}(\btheta) }\right)d\btheta\\ \nonumber
  &= \int  \frac{\prod_{m=1}^M p(\data_m|\btheta) p(\btheta)}{p(\data)}  
   \left( \log\left(\frac{p_M(\data)}{p(\data)}\right)  - (M-1)\log(p(\btheta))  \right)   d\btheta
  \\ \nonumber
  &=\log\left(\frac{p_M(\data)}{p(\data)}\right)  + (M-1) H(p(\btheta|\data) ,p(\btheta)),
\end{align}
% }
\noindent where we assumed that the joint likelihood can be factorized into local likelihoods. We also notice that the last term is the cross-entropy of the prior relative to the posterior distribution:
\begin{equation}
    H(p(\btheta|\data) ,p(\btheta)) = - \int  \frac{p(\data|\btheta) p(\btheta)}{p(\data)}    \log(p(\btheta))    d\btheta  ,
\end{equation}
\noindent such that $H(p(\btheta|\data) ,p(\btheta))>0$.

We aim at showing that $\mathrm{KL}_{M+1} > \mathrm{KL}_{M}$, for which we will show that the following quantity is positive:
% {\small 
\begin{align}\label{eq:kl_diff_m}
& \mathrm{KL}_{M+1} - \mathrm{KL}_{M} \nonumber\\ 
  &= \log\left(\frac{p_{M+1}(\data)}{p_{M}(\data)}\right) + H(p(\btheta|\data) ,p(\btheta)) \nonumber\\ 
  &= \log\left(\frac{p_{M+1}(\data)}{p_{M}(\data)}\right) + H(p(\btheta|\data) ,p(\btheta)) \\ 
  &= \log \int p(\data|\btheta) p^{M+1}(\btheta) d\btheta - \log \int p(\data|\btheta) p^M(\btheta) d\btheta + H(p(\btheta|\data) ,p(\btheta)) \nonumber \\ 
  &= \log \int p(\btheta|\data) p^{M}(\btheta) d\btheta
%    \\
  - \log \int p(\btheta|\data) p^{M-1}(\btheta) d\btheta + H(p(\btheta|\data) ,p(\btheta)), \nonumber \label{eq:DeltaKLD}
\end{align}
\noindent where we used Bayes' rule in the last step.
% }
% \noindent where $CE \triangleq - \int  \frac{p(\data|\btheta) p(\btheta)}{p(\data)}    \log(p(\btheta))    d\btheta =- \int  p(\btheta|\data)  \log(p(\btheta))    d\btheta >0$ is the cross-entropy of the true posterior $p$. 

%\peng{add the Gaussian}:
%If only thinking about the Gaussian situation.
We are interested in the Gaussian assumption for both the prior $p(\btheta)=\mathcal{N}(\btheta_0,\mathbf{C}_0)$ and the posterior $p(\btheta|\data)=\mathcal{N}(\boldsymbol{\mu},
\mathbf{\Lambda}^{-1})$ distributions.
Let us define $S_M = \int p(\btheta | D) p^M(\btheta) d\btheta$, and its components can be further manipulated as
\begin{align}
%p^M(\btheta)&=S_{0,M}  \mathcal{N}\left(\btheta_0, \frac{\mathbf{C}_0}{M}\right)\\
p(\btheta | D) &p^M(\btheta)=\mathcal{N}\left(\boldsymbol{\mu}, \mathbf{\Lambda}^{-1}\right) S_{0,M} \mathcal{N}\left(\btheta_0, \frac{\mathbf{C}_{0}}{M}\right)  \\ \nonumber
&=S_{0,M} S_{1,M} \; \mathcal{N} \left((M\mathbf{C}_{0}^{-1}+\mathbf{\Lambda})(\mathbf{\Lambda}\btheta_0 +\mathbf{C}_{0}^{-1}{M}\boldsymbol{\mu}), \right.  \left. (M\mathbf{C}_{0}^{-1}+\mathbf{\Lambda})^{-1})\right),
\end{align}
\noindent where $S_{0,M}$ and $S_{1,M}$ are  scaling factors \cite{bromiley2003products}
\begin{align}
S_{0,M}&=\frac{1}{(2 \pi)^{\frac{M-1}{2}}} \sqrt{\left| \frac{\mathbf{C}_{0}}{ M}\left(\mathbf{C}_{0}\right)^{-M}\right|} 
% \nonumber\\
% &\times 
\exp [-\frac{1}{2}(M \btheta_0^{\top}\mathbf{C}_{0}^{-1}\btheta_0-\btheta_0^{\top} (\frac{\mathbf{C}_{0}}{M})^{-1} \btheta_0)]\\ \nonumber
&=\frac{1}{(2 \pi)^{\frac{d(M-1)}{2}}}\left| M \mathbf{C}_{0} ^{M-1}\right|^{-1/2} \\
S_{1,M}&=\frac{1}{\sqrt{(2 \pi)^d\left|\frac{\mathbf{C}_{0}}{M}+\mathbf{\Lambda}^{-1}\right|}} 
% \\ \nonumber
% &\times 
\exp \left[-\frac{1}{2}(\btheta_0-\boldsymbol{\mu})^{\top}\left(\frac{\mathbf{C}_{0}}{M}+\mathbf{\Lambda}^{-1}\right)^{-1}(\btheta_0-\boldsymbol{\mu})\right],
\end{align}
\noindent such that computing the integral $S_M$ reduces to a product of those scaling factors
\begin{align}
S_M&=S_{0,M} S_{1,M} \\ \nonumber
&=(2 \pi)^{-\frac{d M}{2}} \left|\frac{\mathbf{C}_{0}}{M}+\mathbf{\Lambda}^{-1}\right|^{-\frac{1}{2}} \left|M \mathbf{C}_{0}^{M-1}\right|^{-\frac{1}{2}} 
% \\  \nonumber
\exp \left[-\frac{1}{2}(\btheta_0-\boldsymbol{\mu})^{\top}\left(\frac{\mathbf{C}_{0}}{M}+\mathbf{\Lambda}^{-1}\right)^{-1}(\btheta_0-\boldsymbol{\mu})\right]
\\ \nonumber
&=(2 \pi)^{-\frac{d M}{2}}\left|\mathbf{C}_{0}^{M}+M \mathbf{C}_{0}^{M-1} \mathbf{\Lambda}^{-1}\right|^{-\frac{1}{2}} 
% \\ \nonumber
\exp \left[-\frac{1}{2}(\btheta_0-\boldsymbol{\mu})^{\top}\left(\frac{\mathbf{C}_{0}}{M}+\mathbf{\Lambda}^{-1}\right)^{-1}(\btheta_0-\boldsymbol{\mu})\right] \\ \nonumber
&=(2 \pi)^{-\frac{d}{2}}\left|(2 \pi)^d \mathbf{C}_{0}\right|^{-\frac{M-1}{2}}  \left|\mathbf{C}_{0} + M \mathbf{\Lambda}^{-1}\right|^{-\frac{1}{2}} 
% \\ \nonumber
\exp \left[-\frac{1}{2}(\btheta_0-\boldsymbol{\mu})^{\top}\left(\frac{\mathbf{C}_{0}}{M}+\mathbf{\Lambda}^{-1}\right)^{-1}(\btheta_0-\boldsymbol{\mu})\right] ,
\end{align}
%
% \pau{is the above particular of the scalar case?}
% \peng{reply: yes, agree, I will try}
\noindent and 
\begin{align}
    \log S_M &= -\frac{M-1}{2} \log \left|2 (\pi)^d \mathbf{C}_{0}\right| 
    % \\ \nonumber
    -\frac{1}{2} \log \left|\mathbf{C}_{0}+M \mathbf{\Lambda}^{-1}\right| \\ \nonumber
    &-\frac{1}{2}(\btheta_0-\boldsymbol{\mu})^{\top}\left(\frac{\mathbf{C}_{0}}{M}+\mathbf{\Lambda}^{-1}\right)^{-1}(\btheta_0-\boldsymbol{\mu})+{\rm const} \;.
\end{align}

In order to show that \eqref{eq:DeltaKLD} is a positive quantity, we will first show that $\log S_M-\log S_{M-1}$ is bounded. 
% \pau{explain what we will do next, why are we focusing on log S now? what are we trying to accomplish?}
The objective is to show the $\log S_M-\log S_{M-1}$ increases with $M$, that is to say that $\log S_M$ is a convex function of $M$, in which case the boundness would follow with its minimum value being $\log S_1-\log S_0$ when $M=1$. The following derivation proves the convexity of $\log S_M$.
Using basic matrix algebra results, the first derivative of $\log S_M$ is
\begin{align}\label{eq:1d}
&\frac{\partial \log S_M}{\partial M} =-\frac{1}{2} \log \left|(2 \pi)^d \mathbf{C}_{0}\right|-\frac{1}{2} \textrm{Tr}((\mathbf{C}_{0}+M\mathbf{\Lambda}^{-1})^{-1} \mathbf{\Lambda}^{-1})\\ \nonumber
% \left| \mathbf{\Lambda}^{-1}(\mathbf{C}_{0}+M \mathbf{\Lambda}^{-1})^{-1}\right| \\ \nonumber
&-\frac{1}{2}(\btheta_0-\boldsymbol{\mu})^{\top}(\left(\mathbf{C}_{0}+M\mathbf{\Lambda}^{-1}\right)^{-1}\mathbf{C}_{0} \left(\mathbf{C}_{0}+M\mathbf{\Lambda}^{-1}\right)^{-1}) (\btheta_0-\boldsymbol{\mu}), %\\ \nonumber
% &=-\frac{\bm\Sigma_{p}+(\boldsymbol{\mu}_{0}-\boldsymbol{\mu})^{\top}(\boldsymbol{\mu}_{0}-\boldsymbol{\mu}) \bm\Sigma_{0}}{2\left(\bm\Sigma_{0}+M \bm\Sigma_{p}\right)}-\frac{1}{2} \log \left((2 \pi)^d \bm\Sigma_{0}\right)
\end{align}
and its second derivative is
\begin{align}\label{eq:2d}
&\frac{\partial^{2} \log S_M}{\partial M^{2}}=
 \frac{1}{2}\textrm{Tr}((\mathbf{C}_{0} \mathbf{\Lambda}+M)^{-2} + \\ \nonumber &\frac{1}{2}(\btheta_0-\boldsymbol{\mu})^{\top}[\left(\mathbf{C}_{0}+M\mathbf{\Lambda}^{-1}\right)^{-1}\mathbf{\Lambda}^{-1}\left(\mathbf{C}_{0}+M\mathbf{\Lambda}^{-1}\right)^{-1}](\btheta_0-\boldsymbol{\mu})
\end{align}

Since the first term in equation \eqref{eq:2d} is larger than $0$ and the second term is larger or equal than $0$, it follows that $\log S_M$ is a convex function. 
% &-\frac{\bm\Sigma_{p}+(\boldsymbol{\mu}_{0}-\boldsymbol{\mu})^{\top}(\boldsymbol{\mu}_{0}-\boldsymbol{\mu}) \bm\Sigma_{0}}{2} \frac{-\bm\Sigma_{p}}{\left(\bm\Sigma_{0}+M \bm\Sigma_{p}\right)^{2}} \\ \nonumber
% &=\frac{\bm\Sigma_{p}\left(\bm\Sigma_{p}+(\boldsymbol{\mu}_{0}-\boldsymbol{\mu})^{\top}(\boldsymbol{\mu}_{0}-\boldsymbol{\mu}) \bm\Sigma_{0}\right)}{2\left(\bm\Sigma_{0}+M \bm\Sigma_{p}\right)^{2}}>0
% \frac{\left|\mathbf{\Lambda}^{-2} \right|}{2\left| \left(\mathbf{C}_{0}+M \bm\Sigma_{p}\right)^{2}\right|} +
% +\frac{(\btheta_0-\boldsymbol{\mu})^{\top}\mathbf{C}_{0} \bm\Sigma_{p} (\btheta_0-\boldsymbol{\mu})}{ \left|\mathbf{C}_{0}+M \bm\Sigma_{p}\right|^{3}}>0
% \end{align}
%
% Now we can see the $log S$ is convex function.
% When we look at the original function $logS$ and the first derivative, if $\bm\Sigma_0<1/(2 \pi)^d$, we can see from the first derivative, it may decrease or increase w.r.t $M$, just as the figure \ref{fig:kl_m_proof} shown when $\bm\Sigma_0=0.15$ and $d=1$. 
% However,
Consequently, we can bound $\log S_{M} - \log S_{M-1} \geq \log S_{1} - \log S_{0}$, with equality if $M=1$.
Similarly, we can use the result to show that 
$\mathrm{KL}_{M+1} - \mathrm{KL}_{M} \geq \mathrm{KL}_{2} - \mathrm{KL}_{1}$ is bounded when $M=1$.

% then $S$ will increase w.r.t. $M$ as  $(2 \pi \Sigma_{0})^{-\frac{M-1}{2}}$ is going to increase and is faster than the another part $ (\Sigma_{0} + M \Sigma_{p})^{-\frac{1}{2}} $ even it's decreasing w.r.t. $M$. Besides, the exponential part is bound by $[0,1]$.
% Thus, in this condition,  $S_{M+1} \geq S_{M}$ is sufficient (although not necessary) to show that the KL divergence is increasingly monotonic with $M$ if $M$ is larger enough. In some conditions, even $\Sigma_0<1/2 \pi$, when M is small, the $S$ is decreasing as figure \ref{fig:kl_m_proof} shown when $\Sigma_0=0.15$. However, it's convex, we can bound $S_{M+1} - S_{M} > S_{2} - S_{1}$.

% \begin{figure}
% \centerline{\includegraphics[width=0.45\textwidth]{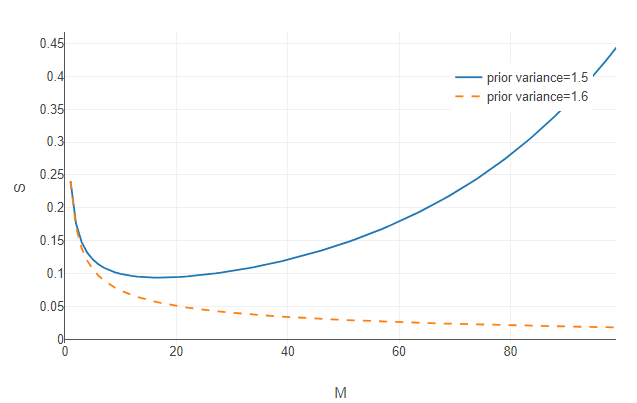}
% }
% \caption{wrt to M}
% \label{fig:kl_m_proof}
% \end{figure}

% If $\bm\Sigma_0 > 1/(2 \pi)^d$, $S$ is decreasing w.r.t. $M$ and is convex. We would see this minimal value of $S_{M+1} - S_{M}$ happens when $M=1$.

% \peng{reply: $S_M$ is function that decreasing wrt $M$ and it's convex from the figures. Its trend is like $2^{-x}$(for example). And if necessary we need to prove second derivative is $>=0$}.

These results are then used to finally show the positiveness of \eqref{eq:DeltaKLD}.
%Back to equation \ref{eq:kl_diff_m}, 
Based on the convexity of $\log S_M$ and Jensen's inequality, we can show that
\begin{align}
&\log S_M - \log S_{M-1} + H(p(\btheta|\data) ,p(\btheta)) \\ \nonumber
%&=\log \int p(\btheta|D) p^M(\btheta) d \btheta-\log \int p(\btheta | D) p^{M-1}(\btheta) d \btheta\\ \nonumber 
%&-\int p(\btheta | D) \log p(\btheta) d \btheta.\\ \nonumber
&=\log E_{p(\btheta |\data)}\left[p^M(\btheta)\right]-\log E_{p(\btheta|\data)}\left[p^{M-1}(\btheta)\right] d \btheta -E_{p(\btheta |\data)}\left[\log p(\btheta)\right] \\ \nonumber
&\geqslant_{(M=1)} \log E_{p(\btheta|\data)} \left[p(\btheta)\right]-E_{p(\btheta|\data)}\left[\log p(\btheta)\right]\\ \nonumber
&\geqslant E_{p(\btheta|\data)}\left[\log p(\btheta)\right]-E_{p(\btheta|\data)}\left[\log p(\btheta)\right] = 0, 
\end{align}
\noindent which concludes the proof.

\section{Proof of Theorem \ref{theorem_q}}\label{ap:thm_q}

% \pau{provide the proof details here for KL asymptotic behavior wrt to prior variance}

% From the equation \ref{eq:local_posterior}, we can see that if prior is non-informative, meaning the $\mathbf{C}_0$ is very large, the likelihood information is the main part. The same with fusion result as shown in the equations  \ref{eq:variance_close}, \ref{eq:mean_CIL}, this part $\mathbf{C}_{0}^{-1}(M-1)$ will approaching 0. Thus, as $q_0\rightarrow \infty$, the result of CIP and CIL will be the same, and it KL divergence will equal to 0. 

Given the a priori distribution $\btheta\sim p(\btheta) = \mathcal{N}(\mathbf{\btheta}_0,\mathbf{C}_0)$ and under the assumption that $\mathbf{C}_0 = q_0\mathbf{I}$, we can readily see from the results in Appendix \ref{ap:GaussianCIL} that the CIL posterior distribution $p(\btheta|\mathcal{D})=\mathcal{N}(\mathbf{\boldsymbol{\mu}}, \mathbf{\Lambda}^{-1})$ results in a mean of 
\begin{equation}\label{eq:mean_CIL_q0}
    \boldsymbol{\mu} = \mathbf{\Lambda}^{-1}\left(\sum_{m=1}^{M}\mathbf{C}_m^{-1} \btheta_m - \frac{M-1}{q_0} \btheta_{0}\right),
    % {\sum_{m=1}^{M}\mathbf{C}_m^{-1}-\mathbf{C}_{0}^{-1}(M-1)} 
\end{equation}
\noindent and a precision matrix of
\begin{align}\label{eq:cov_CIL_q0}
\mathbf{\Lambda} = \sum_{m=1}^{M}\mathbf{C}_m^{-1}- \frac{M-1}{q_0} \mathbf{I} \;.
\end{align}

The result investigates the asymptotics of having a prior information that becomes non-informative, which for the choice of $p(\btheta)$ here corresponds to the case where $q_0\rightarrow \infty$ such that the Gaussian is wider. Therefore,
\begin{eqnarray}
\lim_{q_0\rightarrow \infty} \boldsymbol{\mu} &=& \left( \sum_{m=1}^{M}\mathbf{C}_m^{-1} \right)^{-1} \sum_{m=1}^{M}\mathbf{C}_m^{-1} \btheta_m, \\
\lim_{q_0\rightarrow \infty} \mathbf{\Lambda} &=&  \sum_{m=1}^{M}\mathbf{C}_m^{-1},
\end{eqnarray}
\noindent and we can easily identify that $\underset{q_0\rightarrow \infty}{\lim} \boldsymbol{\mu} = \widetilde{\boldsymbol{\mu}}$ and 
$\underset{q_0\rightarrow \infty}{\lim} \mathbf{\Lambda} = \widetilde{\mathbf{\Lambda}}$
from \eqref{eq:mean_CIP} and \eqref{eq:cov_CIP}, respectively. Since under the Gaussian assumption both CIL and CIP can be parameterized by their means and covariances, we can write
\begin{equation}
\lim_{q_0\rightarrow \infty} p(\btheta|\mathcal{D}) = 
\wt p(\btheta|\mathcal{D})  \;,
\end{equation}
\noindent since those statistics are asymptotically equivalent as shown in \eqref{eq:mean_CIP} and \eqref{eq:cov_CIP}. As a consequence, their KL divergence tends to zero with $q_0\rightarrow \infty$, independently of $M$, which proves the result.

\bibliographystyle{IEEEtran}
% \bibliography{IEEEabrv,biblio,Federated_Learning}
\bibliography{IEEEabrv,main}

% Generated by IEEEtran.bst, version: 1.14 (2015/08/26)
\begin{thebibliography}{10}
\providecommand{\url}[1]{#1}
\csname url@samestyle\endcsname
\providecommand{\newblock}{\relax}
\providecommand{\bibinfo}[2]{#2}
\providecommand{\BIBentrySTDinterwordspacing}{\spaceskip=0pt\relax}
\providecommand{\BIBentryALTinterwordstretchfactor}{4}
\providecommand{\BIBentryALTinterwordspacing}{\spaceskip=\fontdimen2\font plus
\BIBentryALTinterwordstretchfactor\fontdimen3\font minus
  \fontdimen4\font\relax}
\providecommand{\BIBforeignlanguage}[2]{{%
\expandafter\ifx\csname l@#1\endcsname\relax
\typeout{** WARNING: IEEEtran.bst: No hyphenation pattern has been}%
\typeout{** loaded for the language `#1'. Using the pattern for}%
\typeout{** the default language instead.}%
\else
\language=\csname l@#1\endcsname
\fi
#2}}
\providecommand{\BIBdecl}{\relax}
\BIBdecl

\bibitem{dardari2015indoor}
D.~Dardari, P.~Closas, and P.~M. Djuri{\'c}, ``Indoor tracking: Theory,
  methods, and technologies,'' \emph{IEEE Transactions on Vehicular
  Technology}, vol.~64, no.~4, pp. 1263--1278, 2015.

\bibitem{ding2019survey}
W.~Ding, X.~Jing, Z.~Yan, and L.~T. Yang, ``A survey on data fusion in internet
  of things: Towards secure and privacy-preserving fusion,'' \emph{Information
  Fusion}, vol.~51, pp. 129--144, 2019.

\bibitem{dunik2020state}
J.~Dun{\'\i}k, S.~K. Biswas, A.~G. Dempster, T.~Pany, and P.~Closas, ``State
  estimation methods in navigation: overview and application,'' \emph{IEEE
  Aerospace and Electronic Systems Magazine}, vol.~35, no.~12, pp. 16--31,
  2020.

\bibitem{hall2004mathematical}
D.~L. Hall and S.~A. McMullen, \emph{Mathematical techniques in multisensor
  data fusion}.\hskip 1em plus 0.5em minus 0.4em\relax Artech House, 2004.

\bibitem{meng2020survey}
T.~Meng, X.~Jing, Z.~Yan, and W.~Pedrycz, ``A survey on machine learning for
  data fusion,'' \emph{Information Fusion}, vol.~57, pp. 115--129, 2020.

\bibitem{White1991DataFL}
F.~White, ``Data fusion lexicon,'' 1991.

\bibitem{kokar2004formalizing}
M.~M. Kokar, J.~A. Tomasik, and J.~Weyman, ``Formalizing classes of information
  fusion systems,'' \emph{Information Fusion}, vol.~5, no.~3, pp. 189--202,
  2004.

\bibitem{castanedo2013review}
F.~Castanedo, ``A review of data fusion techniques,'' \emph{The scientific
  world journal}, vol. 2013, 2013.

\bibitem{hall1997introduction}
D.~L. Hall and J.~Llinas, ``An introduction to multisensor data fusion,''
  \emph{Proceedings of the IEEE}, vol.~85, no.~1, pp. 6--23, 1997.

\bibitem{7515322}
M.~E. Campbell and N.~R. Ahmed, ``Distributed data fusion: Neighbors, rumors,
  and the art of collective knowledge,'' \emph{IEEE Control Systems Magazine},
  vol.~36, no.~4, pp. 83--109, 2016.

\bibitem{hashemipour1988decentralized}
H.~R. Hashemipour, S.~Roy, and A.~J. Laub, ``Decentralized structures for
  parallel {Kalman} filtering,'' \emph{IEEE Transactions on automatic control},
  vol.~33, no.~1, pp. 88--94, 1988.

\bibitem{mahmoud2013distributed}
M.~S. Mahmoud and H.~M. Khalid, ``Distributed {Kalman} filtering: a
  bibliographic review,'' \emph{IET Control Theory \& Applications}, vol.~7,
  no.~4, pp. 483--501, 2013.

\bibitem{tresp2000bayesian}
V.~Tresp, ``A {Bayesian} committee machine,'' \emph{Neural computation},
  vol.~12, no.~11, pp. 2719--2741, 2000.

\bibitem{bailey2012conservative}
T.~Bailey, S.~Julier, and G.~Agamennoni, ``On conservative fusion of
  information with unknown non-gaussian dependence,'' in \emph{2012 15th
  International Conference on Information Fusion}.\hskip 1em plus 0.5em minus
  0.4em\relax IEEE, 2012, pp. 1876--1883.

\bibitem{deisenroth2015distributed}
M.~Deisenroth and J.~W. Ng, ``{Distributed Gaussian processes},'' in
  \emph{International Conference on Machine Learning}.\hskip 1em plus 0.5em
  minus 0.4em\relax PMLR, 2015, pp. 1481--1490.

\bibitem{mcmahan2021advances}
H.~B. McMahan \emph{et~al.}, ``Advances and open problems in federated
  learning,'' \emph{Foundations and Trends{\textregistered} in Machine
  Learning}, vol.~14, no.~1, 2021.

\bibitem{Ji2021EmergingTI}
S.~Ji, T.~Saravirta, S.~Pan, G.~Long, and A.~Walid, ``Emerging trends in
  federated learning: From model fusion to federated x learning,''
  \emph{ArXiv}, vol. abs/2102.12920, 2021.

\bibitem{wu2021personalized}
P.~Wu, T.~Imbiriba, J.~Park, S.~Kim, and P.~Closas, ``{Personalized Federated
  Learning over non-IID Data for Indoor Localization},'' in \emph{2021 IEEE
  22nd International Workshop on Signal Processing Advances in Wireless
  Communications (SPAWC)}.\hskip 1em plus 0.5em minus 0.4em\relax IEEE, 2021,
  pp. 421--425.

\bibitem{khaleghi2013multisensor}
B.~Khaleghi, A.~Khamis, F.~O. Karray, and S.~N. Razavi, ``Multisensor data
  fusion: A review of the state-of-the-art,'' \emph{Information fusion},
  vol.~14, no.~1, pp. 28--44, 2013.

\bibitem{bar1981track}
Y.~Bar-Shalom, ``On the track-to-track correlation problem,'' \emph{IEEE
  Transactions on Automatic control}, vol.~26, no.~2, pp. 571--572, 1981.

\bibitem{bakr2017distributed}
M.~A. Bakr and S.~Lee, ``Distributed multisensor data fusion under unknown
  correlation and data inconsistency,'' \emph{Sensors}, vol.~17, no.~11, p.
  2472, 2017.

\bibitem{chang2010analytical}
K.-C. Chang, C.-Y. Chong, and S.~Mori, ``Analytical and computational
  evaluation of scalable distributed fusion algorithms,'' \emph{IEEE
  transactions on Aerospace and Electronic Systems}, vol.~46, no.~4, pp.
  2022--2034, 2010.

\bibitem{li2019second}
T.~Li, H.~Fan, J.~Garc{\'\i}a, and J.~M. Corchado, ``Second-order statistics
  analysis and comparison between arithmetic and geometric average fusion:
  Application to multi-sensor target tracking,'' \emph{Information Fusion},
  vol.~51, pp. 233--243, 2019.

\bibitem{julier2017general}
S.~Julier and J.~K. Uhlmann, ``General decentralized data fusion with
  covariance intersection,'' in \emph{Handbook of multisensor data
  fusion}.\hskip 1em plus 0.5em minus 0.4em\relax CRC Press, 2017, pp.
  339--364.

\bibitem{nguyen2016universal}
H.~D. Nguyen, L.~R. Lloyd-Jones, and G.~J. McLachlan, ``A universal
  approximation theorem for mixture-of-experts models,'' \emph{Neural
  computation}, vol.~28, no.~12, pp. 2585--2593, 2016.

\bibitem{saucan2023information}
A.~A. Saucan, V.~Elvira, P.~K. Varshney, and M.~Z. Win, ``Information fusion
  via importance sampling,'' \emph{IEEE Transactions on Signal and Information
  Processing over Networks}, 2023.

\bibitem{xing2016multisensor}
Z.~Xing, Y.~Xia, L.~Yan, K.~Lu, and Q.~Gong, ``Multisensor distributed weighted
  {Kalman} filter fusion with network delays, stochastic uncertainties,
  autocorrelated, and cross-correlated noises,'' \emph{IEEE Transactions on
  Systems, Man, and Cybernetics: Systems}, vol.~48, no.~5, pp. 716--726, 2016.

\bibitem{alimadadi2020object}
M.~Alimadadi, M.~Stojanovic, and P.~Closas, ``Object tracking in random access
  networks: A large-scale design,'' \emph{IEEE Internet of Things Journal},
  vol.~7, no.~10, pp. 9784--9792, 2020.

\bibitem{victor201803}
D.~Luengo, L.~Martino, V.~Elvira, and M.~Bugallo, ``Efficient linear fusion of
  partial estimators,'' \emph{Digital Signal Processing}, vol.~78, 03 2018.

\bibitem{luengo2015bias}
------, ``Bias correction for distributed {B}ayesian estimators,'' in
  \emph{2015 IEEE 6th International Workshop on Computational Advances in
  Multi-Sensor Adaptive Processing (CAMSAP)}.\hskip 1em plus 0.5em minus
  0.4em\relax IEEE, 2015, pp. 253--256.

\bibitem{mcmahan2017communication}
B.~McMahan, E.~Moore, D.~Ramage, S.~Hampson, and B.~A. y~Arcas,
  ``Communication-efficient learning of deep networks from decentralized
  data,'' in \emph{Artificial intelligence and statistics}.\hskip 1em plus
  0.5em minus 0.4em\relax PMLR, 2017, pp. 1273--1282.

\bibitem{pham2021fusion}
Q.-V. Pham, K.~Dev, P.~K.~R. Maddikunta, T.~R. Gadekallu, T.~Huynh-The
  \emph{et~al.}, ``Fusion of federated learning and industrial internet of
  things: a survey,'' \emph{arXiv preprint arXiv:2101.00798}, 2021.

\bibitem{achituve2021personalized}
I.~Achituve, A.~Shamsian, A.~Navon, G.~Chechik, and E.~Fetaya, ``Personalized
  federated learning with {Gaussian} processes,'' \emph{Advances in Neural
  Information Processing Systems}, vol.~34, 2021.

\bibitem{lalitha2019peer}
A.~Lalitha, O.~C. Kilinc, T.~Javidi, and F.~Koushanfar, ``Peer-to-peer
  federated learning on graphs,'' \emph{arXiv preprint arXiv:1901.11173}, 2019.

\bibitem{chen2020fedbe}
H.-Y. Chen and W.-L. Chao, ``Fedbe: Making {Bayesian} model ensemble applicable
  to federated learning,'' \emph{arXiv preprint arXiv:2009.01974}, 2020.

\bibitem{liu2021bayesian}
L.~Liu, F.~Zheng, H.~Chen, G.-J. Qi, H.~Huang, and L.~Shao, ``{A Bayesian
  Federated Learning Framework with Online Laplace Approximation},''
  \emph{arXiv preprint arXiv:2102.01936}, 2021.

\bibitem{Junha2022}
J.~Park, J.~Moon, T.~Kim, P.~Wu, T.~Imbiriba, P.~Closas, and S.~Kim,
  ``Federated learning for indoor localization via model reliability with
  dropout,'' \emph{IEEE Communications Letters}, pp. 1--1, 2022.

\bibitem{Koliander22}
G.~Koliander, Y.~El-Laham, P.~M. Djurić, and F.~Hlawatsch, ``{Fusion of
  Probability Density Functions},'' \emph{Proceedings of the IEEE}, vol. 110,
  no.~4, pp. 404--453, 2022.

\bibitem{liu2018generalized}
H.~Liu, J.~Cai, Y.~Wang, and Y.~S. Ong, ``{Generalized robust Bayesian
  committee machine for large-scale Gaussian process regression},'' in
  \emph{International Conference on Machine Learning}.\hskip 1em plus 0.5em
  minus 0.4em\relax PMLR, 2018, pp. 3131--3140.

\bibitem{liu2020gaussian}
H.~Liu, Y.-S. Ong, X.~Shen, and J.~Cai, ``{When Gaussian process meets big
  data: A review of scalable GPs},'' \emph{IEEE transactions on neural networks
  and learning systems}, vol.~31, no.~11, pp. 4405--4423, 2020.

\bibitem{Kim08}
K.~Kim and G.~Shevlyakov, ``{Why Gaussianity?}'' \emph{Signal Processing
  Magazine, IEEE}, vol.~25, no.~2, pp. 102--113, March 2008.

\bibitem{Ito00}
K.~Ito and K.~Xiong, ``{Gaussian Filters for Nonlinear Filtering Problems},''
  \emph{{IEEE} Trans. Autom. Control}, vol.~45, no.~5, May 2000.

\bibitem{lindgren2015bayesian}
F.~Lindgren and H.~Rue, ``{Bayesian spatial modelling with R-INLA},''
  \emph{Journal of statistical software}, vol.~63, pp. 1--25, 2015.

\bibitem{bishop2006pattern}
C.~M. Bishop and N.~M. Nasrabadi, \emph{Pattern recognition and machine
  learning}.\hskip 1em plus 0.5em minus 0.4em\relax Springer, 2006, vol.~4,
  no.~4.

\bibitem{pardo2018statistical}
L.~Pardo, \emph{Statistical inference based on divergence measures}.\hskip 1em
  plus 0.5em minus 0.4em\relax CRC press, 2018.

\bibitem{elvira2019generalized}
V.~Elvira, L.~Martino, D.~Luengo, and M.~F. Bugallo, ``Generalized multiple
  importance sampling,'' \emph{Statistical Science}, vol.~34, no.~1, pp.
  129--155, 2019.

\bibitem{agapiou2017importance}
S.~Agapiou, O.~Papaspiliopoulos, D.~Sanz-Alonso, and A.~M. Stuart, ``Importance
  sampling: Intrinsic dimension and computational cost,'' \emph{Statistical
  Science}, pp. 405--431, 2017.

\bibitem{miguez2017performance}
J.~M{\'\i}guez, ``On the performance of nonlinear importance samplers and
  population {M}onte {C}arlo schemes,'' in \emph{2017 22nd International
  Conference on Digital Signal Processing (DSP)}.\hskip 1em plus 0.5em minus
  0.4em\relax IEEE, 2017, pp. 1--5.

\bibitem{elvira2021advances}
V.~Elvira and L.~Martino, ``Advances in importance sampling,'' \emph{Wiley
  StatsRef: Statistics Reference Online}, pp. 1--22, 2021.

\bibitem{kleijn2012bernstein}
B.~J. Kleijn and A.~W. van~der Vaart, ``{The Bernstein-von-Mises theorem under
  misspecification},'' \emph{Electronic Journal of Statistics}, vol.~6, pp.
  354--381, 2012.

\bibitem{fortuin2021bayesian}
V.~Fortuin, A.~Garriga-Alonso, F.~Wenzel, G.~R{\"a}tsch, R.~Turner, M.~van~der
  Wilk, and L.~Aitchison, ``Bayesian neural network priors revisited,''
  \emph{arXiv preprint arXiv:2102.06571}, 2021.

\bibitem{bromiley2003products}
P.~Bromiley, ``Products and convolutions of {G}aussian probability density
  functions,'' \emph{Tina-Vision Memo}, vol.~3, no.~4, p.~1, 2003.

\bibitem{nielsen2013chi}
F.~Nielsen and R.~Nock, ``On the chi square and higher-order chi distances for
  approximating f-divergences,'' \emph{IEEE Signal Processing Letters},
  vol.~21, no.~1, pp. 10--13, 2013.

\bibitem{boyd2004convex}
S.~P. Boyd and L.~Vandenberghe, \emph{Convex optimization}.\hskip 1em plus
  0.5em minus 0.4em\relax Cambridge university press, 2004.

\end{thebibliography}

\end{document}